\renewcommand{\url}[1]{\href{#1}{#1}}
\setlist[itemize]{leftmargin=*,itemsep=0pt}
\setlist[enumerate]{leftmargin=*,itemsep=0pt}
\let\svthefootnote\thefootnote
\newcommand\freefootnote[1]{%
  \let\thefootnote\relax%
  \footnotetext{#1}%
  \let\thefootnote\svthefootnote%
}
\newcommand{\E}{\mathbb{E}}
\newcommand{\N}{\mathbb{N}} 
\newcommand{\PP}{\mathbb{P}} 
\newcommand{\R}{\mathbb{R}} 
\newcommand{\SSS}{\mathbb{S}}
\newcommand{\V}{\mathbb{V}}
\newcommand{\Z}{\mathbb{Z}} 
\newcommand{\eb}{\mathbf{e}}
\newcommand{\pb}{\mathbf{p}}
\renewcommand{\sb}{\mathbf{s}}
\newcommand{\ub}{\mathbf{u}}
\newcommand{\vb}{\mathbf{v}}
\newcommand{\xb}{\mathbf{x}}
\newcommand{\yb}{\mathbf{y}}
\newcommand{\zb}{\mathbf{z}}
\newcommand{\Ab}{\mathbf{A}}
\newcommand{\Bb}{\mathbf{B}}
\newcommand{\Gb}{\mathbf{G}}
\newcommand{\Ib}{\mathbf{I}}
\newcommand{\Mb}{\mathbf{M}}
\newcommand{\Pb}{\mathbf{P}}
\newcommand{\Qb}{\mathbf{Q}}
\newcommand{\Rb}{\mathbf{R}}
\newcommand{\Ub}{\mathbf{U}}
\newcommand{\Vb}{\mathbf{V}}
\newcommand{\Wb}{\mathbf{W}}
\newcommand{\Xb}{\mathbf{X}}
\newcommand{\Zb}{\mathbf{Z}}
\newcommand{\Dcal}{\mathcal{D}}
\newcommand{\Fcal}{\mathcal{F}}
\newcommand{\Lcal}{\mathcal{L}}
\newcommand{\Ncal}{\mathcal{N}}
\newcommand{\Rcal}{\mathcal{R}}
\newcommand{\Scal}{{\mathcal{S}}}
\newcommand{\Ucal}{\mathcal{U}}
\newcommand{\Xcal}{\mathcal{X}}
\newcommand{\Ycal}{\mathcal{Y}}
\newcommand{\eps}{\epsilon}
\newcommand{\gammab}{\boldsymbol{\gamma}}
\newcommand{\thetab}{\boldsymbol{\theta}}
\newcommand{\mub}{\boldsymbol{\mu}}
\newcommand{\Gammab}{\boldsymbol{\Gamma}}
\newcommand{\Lambdab}{\boldsymbol{\Lambda}}
\newcommand{\Pib}{\boldsymbol{\Pi}}
\newcommand{\Sigmab}{\boldsymbol{\Sigma}}
\newcommand{\Omegab}{\boldsymbol{\Omega}}
\newcommand{\dfeq}{:=}
\newcommand{\aleq}{\preccurlyeq}
\newcommand{\ageq}{\succcurlyeq}
\newcommand{\pinv}{\dagger}
\newcommand*{\argmin}{\mathop{\mathrm{argmin}}}
\newcommand{\tr}{\mathop{\mathrm{tr}}}
\newcommand{\diag}{\mathop{\mathrm{diag}}}
\newcommand{\rank}{\mathop{\mathrm{rank}}}
\newcommand{\range}{\mathop{\mathrm{Range}}}
\newcommand{\nnz}{\mathop{\mathrm{nnz}}}
\newcommand{\nucnorm}[1]{{\left\vert\kern-0.25ex\left\vert\kern-0.25ex\left\vert #1 
    \right\vert\kern-0.25ex\right\vert\kern-0.25ex\right\vert}}
\newcommand{\wh}[1]{\widehat{#1}}
\newcommand{\wt}[1]{\widetilde{#1}}
\newcommand{\ol}[1]{\overline{#1}} 
\newcommand{\eg}{\emph{e.g.}\xspace}
\newcommand{\ie}{\emph{i.e.}\xspace}
\newcommand{\iid}{\emph{i.i.d.}\xspace}
\newcommand{\cf}{\emph{cf.}\xspace}
\newcommand{\st}{\emph{s.t.}\xspace}
\newcommand{\rbr}[1]{\left(#1\right)}
\newcommand{\sbr}[1]{\left[#1\right]}
\newcommand{\cbr}[1]{\left\{#1\right\}}
\newcommand{\nbr}[1]{\left\|#1\right\|}
\newcommand{\abbr}[1]{\left\vert#1\right\vert}
\newcommand{\abr}[1]{\langle#1\rangle}
\newcommand{\nnbr}[1]{{\left\vert\kern-0.25ex\left\vert\kern-0.25ex\left\vert #1 \right\vert\kern-0.25ex\right\vert\kern-0.25ex\right\vert}}
\newcommand{\csep}[2]{\left\{#1 \middle\vert #2 \right\}}
\newcommand{\csepp}[2]{\left\{#1 ~\middle\vert~ #2 \right\}}
\newcommand{\ceil}[1]{\left\lceil #1 \right\rceil}
\definecolor{commentcolor}{RGB}{110,154,155}   
\renewcommand{\b}{\textbf}
\renewcommand{\t}[1]{\text{#1}}
\newcommand{\unif}{\mathop{\mathrm{Unif}}}
\newcommand{\exrisk}{\mathrm{ER}}
\newcommand{\loc}[2]{\sbr{#1}_{#2}}
\newcommand{\tsvd}[2]{\abr{#1}_{#2}}
\newcommand{\sval}[2]{s_{#2}(#1)}
\newcommand{\sqrep}[1]{\Sigmab^{\phi}_{#1}}
\newcommand{\hsqrep}[1]{\wh\Sigmab^{\phi}_{#1}}
\newcommand{\skmom}[1]{\wt{\Sigmab}^{\phi}_{#1}}
\newcommand{\fulldata}{\Dcal}
\newcommand{\coreset}{\Dcal_S}
\newcommand{\Funft}[1]{\Fcal_{\mid #1}}
\newcommand{\rmm}{SkMM\xspace}
\newtheorem{theorem}{Theorem}[section]
\newtheorem{definition}{Definition}[section]
\newtheorem{assumption}{Assumption}[section]
\newtheorem{remark}{Remark}[section]
\newtheorem{lemma}[theorem]{Lemma}
\newtheorem{corollary}[theorem]{Corollary}
\newtheorem{proposition}[theorem]{Proposition}
\title{
  Sketchy Moment Matching: Toward Fast and Provable Data Selection for Finetuning
}
\author{%
  Yijun Dong\thanks{Equal contribution.} \\
  Courant Institute\\
  New York University\\
  \texttt{yd1319@nyu.edu} \\
  \And
  Hoang Phan$^*$ \\
  Center of Data Science\\
  New York University\\
  \texttt{hvp2011@nyu.edu} \\
  \AND
  Xiang Pan$^*$ \\
  Center of Data Science\\
  New York University\\
  \texttt{xiangpan@nyu.edu} \\
  \And
  Qi Lei \\
  Center of Data Science\\
  New York University\\
  \texttt{ql518@nyu.edu} 
}
\begin{document}

\maketitle

\begin{abstract}
    We revisit data selection in a modern context of finetuning from a fundamental perspective. Extending the classical wisdom of variance minimization in low dimensions to high-dimensional finetuning, our generalization analysis unveils the importance of additionally reducing bias induced by low-rank approximation.
    Inspired by the variance-bias tradeoff in high dimensions from the theory, we introduce \b{Sk}etchy \b{M}oment \b{M}atching (SkMM), a scalable data selection scheme with two stages.
    (i) First, the bias is controlled using gradient sketching that explores the finetuning parameter space for an informative low-dimensional subspace $\mathcal{S}$;
    (ii) then the variance is reduced over $\mathcal{S}$ via moment matching between the original and selected datasets.
    Theoretically, we show that gradient sketching is fast and provably accurate: selecting $n$ samples by reducing variance over $\mathcal{S}$ preserves the fast-rate generalization $O(\dim(\mathcal{S})/n)$, independent of the parameter dimension. Empirically, we concretize the variance-bias balance via synthetic experiments and demonstrate the effectiveness of SkMM for finetuning in real vision tasks.
\end{abstract}
\section{Introduction}\label{sec:intro}
As the data volume and training cost explode with the unprecedented model performance, the long-standing problem of data selection~\citep{guo2022deepcore,albalak2024survey} is getting increasing attention in the modern context of deep learning from various perspectives, including data pruning~\citep{sorscher2022beyond,yang2022dataset}, coreset selection~\citep{guo2022deepcore,zheng2022coverage,xia2022moderate,mirzasoleiman2020coresets,borsos2020coresets,killamsetty2021retrieve}, and data filtering~\citep{schuhmann2022laion,gadre2024datacomp,albalak2024survey,wang2024variance,joshi2024data}. A common goal shared by these perspectives is to train a model from scratch on less data to learn high-quality representations and achieve competitive generalization. 
However, empirical observations also suggest the limitation of data removal during pre-training: a seemingly inevitable tradeoff between less computation and higher-quality representations~\citep{aghajanyan2020intrinsic,goyal2024scaling}.
While existing works on data selection have a dominating focus on the training-from-scratch setting, the sensitivity of representation learning to data and the growing availability of powerful pre-trained models calls for attention to a less studied~\citep{xia2024less} but equally important problem: data selection for finetuning.

In the simplest finetuning setting---linear probing on low-dimensional representations\footnote{
    Throughout this work, we refer to ``low-dimension'' as the setting where the number of finetuning parameters $r$ is smaller than the selected downstream sample size $n$, while ``high-dimension'' refers to the opposite, $r > n$.
}, data selection falls in the classical frames of coreset selection for linear regression~\citep{woodruff2014sketching,sarlos2006improved,alaoui2015fast,raskutti2016statistical,chen2019active,larsen2022sketching,derezinski2022unbiased,shimizu2023improved} and optimal experimental design~\citep{chaloner1995bayesian,pukelsheim2006optimal,fedorov2013theory,wang2017computationally,allen2017near} where the generalization gap can be reduced by selecting data that minimize the associated variance. However, for high-dimensional finetuning, variance minimization alone is insufficient to characterize the generalization due to the overparametrized nature of modern architectures. 
Even for linear probing, when the parameter dimension $r$ is higher than the sample size $n$, the selected data necessarily fails to capture a subspace of the parameter space with dimension at least $r-n$, leading to errors in addition to variance.
Nevertheless, the prevailing empirical and theoretical evidence~\citep{udell2019big,aghajanyan2020intrinsic} on the ubiquitous intrinsic low-dimensional structures of high-dimensional data/model motivates a natural question: 
\begin{center}
    \emph{Can the low intrinsic dimension be leveraged in data selection for high-dimensional finetuning?}
\end{center}

\paragraph{Low intrinsic dimension leads to variance-bias tradeoff in data selection.}
We provide a positive answer to this question through a variance-bias tradeoff perspective. 
Intuitively, we consider a low-dimensional subspace $\Scal$ in the finetuning parameter space where the model learns the necessary knowledge for the downstream task.
The generalization gap can be controlled by simultaneously reducing \emph{the bias (redundant information) by ``exploring'' the finetuning parameter space to find a suitable $\Scal$} and \emph{the variance by ``exploiting'' the useful knowledge in $\Scal$}.

Given the high-dimensional nature of the finetuning parameter space, direct search for such suitable subspace $\Scal$ is computationally infeasible in general. This leads to a follow-up question:
\begin{center}
    \emph{How to explore the intrinsic low-dimensional structure efficiently for data selection?}
\end{center}
We propose a two-stage solution---\emph{Sketchy Moment Matching (\rmm)}:
\begin{enumerate*}[label=(\roman*)]
    \item {dimensionality reduction via gradient sketching to efficiently explore the finetuning parameter space}, and
    \item {variance control via moment matching to exploit useful knowledge in the low-dimensional subspace}.
\end{enumerate*}

\begin{figure}[!t]
    \centering
    \includegraphics[width=\textwidth]{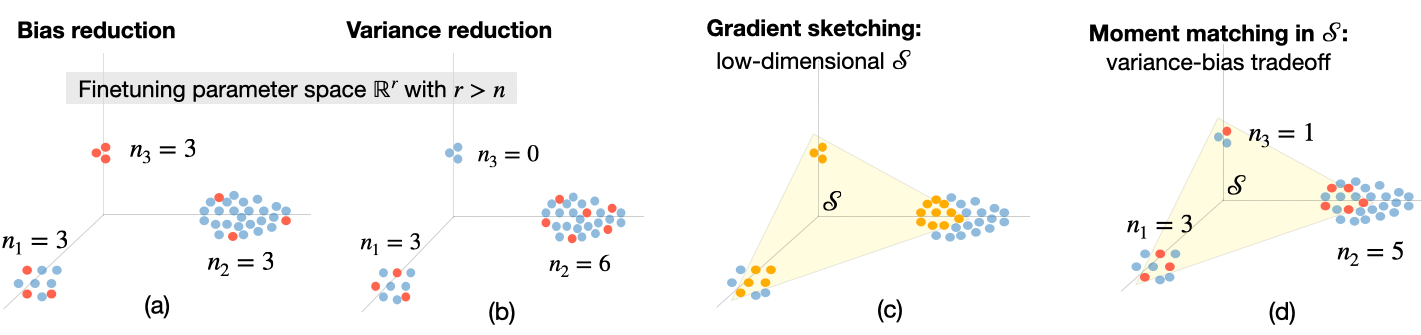}
    \caption[Caption for LOF]{
        Controlling variance-bias tradeoff in data selection for high-dimensional finetuning via gradient sketching $+$ moment matching (\rmm). Consider a toy dataset with $N$ samples (in blue) whose finetuning gradients lie in a high-dimensional parameter space $\R^r$ (visualized in 3D) with a low intrinsic dimension (\eg, three clusters). The goal is to select $n = n_1 + n_2 + n_3 < r$ samples for finetuning.
        (a) \b{Bias reduction} focuses on minimizing the low-rank approximation error, resulting in uniform selection across clusters regardless of their variance.
        (b) \b{Variance reduction}\footnotemark places more emphasis on high-variance clusters and could lead to large bias by missing low-variance ones.
        (c) \b{Gradient sketching} efficiently finds a low-dimensional subspace $\Scal$ (where $\dim(\Scal) < n$) with small bias.
        (d) \b{Moment matching} in $\Scal$ controls the variance within the low-bias subspace, leading to a variance-bias balance with fast-rate generalization $O(\dim(\Scal)/n)$.
    }\label{fig:var_bias}
    \vspace{-1em}
\end{figure}
\footnotetext{Most classical variance reduction methods are tailored only for low-dimensional settings with $n > r$. Here, we intuitively refer to ``variance reduction'' as the direct extension of low-dimensional variance reduction to high dimensions, without careful control of the bias (vide \Cref{sec:synthetic_exp} for some concretizations).}

\paragraph{Gradient sketching finds a good low-dimensional subspace fast and provably.}
First, we construct a low-dimensional parameter subspace $\Scal$ by sketching the model gradients. Sketching~\citep{johnson1984extensions,woodruff2014sketching} is a well-established dimensionality reduction tool known for affordable and accurate low-rank approximations~\citep{halko2011finding,martinsson2020randomized}. In deep learning, sketching recently extends its empirical applications to scalable estimations of influence functions for data selection~\citep{park2023trak,xia2024less}. We make a first step toward the theoretical guarantee of gradient sketching for data selection: \emph{gradient sketching efficiently finds a low-dimensional subspace $\Scal$ with small bias such that selecting $n$ samples by reducing variance over $\Scal$ is sufficient to preserve the fast-rate generalization $O(\dim(\mathcal{S})/n)$, linear in the low intrinsic dimension $\dim(\mathcal{S})$ while independent of the high parameter dimension $r$}.

\paragraph{Moment matching in low dimension selects data that control the variance.}
Second, we select data that reduce variance in the low-dimensional subspace $\Scal$ via moment matching. The variance of data selection is characterized by matching between the sketched gradient moments of the original and selected datasets, $\wt\Sigmab, \wt\Sigmab_S$, formally $\tr(\wt\Sigmab \wt\Sigmab_S^\pinv)$. This objective involves optimizing over the inversions of (potentially) ill-conditioned matrices, leading to a challenging discrete optimization problem~\citep{vcerny2012two,wang2017computationally}. 
Under a common heuristic assumption that $\wt\Sigmab, \wt\Sigmab_S$ commute~\citep{blaker2000minimax,lei2021near}, we introduce a continuous relaxation with a quadratic objective and linear constraints that is numerically stable (free of pseudoinverse) and can be efficiently optimized via projected gradient descent.

The contributions of this work are summarized as follows:
\begin{itemize}
    \item We provide a rigorous generalization analysis on data selection for finetuning, illustrating the critical role of dimensionality by unveiling the variance-bias tradeoff in high dimensions. 
    \item We show that gradient sketching provably finds a low-dimensional parameter subspace $\Scal$ with small bias, reducing variance over which preserves the fast-rate generalization $O(\dim(\mathcal{S})/n)$. Techniques used in analyzing gradient sketching for data selection are agnostic to the selection method or the finetuning setting and could be of independent interest.
    \item We introduce \rmm, a scalable two-stage data selection method for finetuning that simultaneously ``explores'' the high-dimensional parameter space via gradient sketching and ``exploits'' the information in the low-dimensional subspace via moment matching.
\end{itemize}
\subsection{Related Works}
\paragraph{Coreset selection and low-rank approximations.}
From the variance-bias tradeoff perspective, data selection for high-dimensional finetuning can be viewed as a combination of 
\begin{enumerate*}[label=(\roman*)]
    \item variance reduction in coreset selection for linear regression~\citep{woodruff2014sketching,sarlos2006improved,chen2019active,derezinski2022unbiased,shimizu2023improved,alaoui2015fast} with low-dimensional features, and
    \item bias reduction via sample-wise low-rank approximation for high-dimensional matrices~\citep{mahoney2009cur,deshpande2006matrix,voronin2017efficient,derezinski2021determinantal,derezinski2020improved,dong2023simpler,chen2022randomly,dong2023robust,pearce2023adaptive}.
\end{enumerate*}

\paragraph{Gradient sketching.} 
Gradient sketching~\citep{halko2011finding,woodruff2014sketching} based on Johnson-Lindenstrauss transforms (JLTs)~\citep{johnson1984extensions,spielman2008graph} has achieved impressive recent successes in efficient data selection~\citep{xia2024less} and attribution~\citep{park2023trak}. Despite the empirical success, theoretical understanding of the effect of gradient sketching on generalization remains limited. We make a first step toward this in the context of data selection leveraging existing theories on sketching (vide \Cref{rmk:gradient_sketching} and \Cref{apx:proof_gradient_sketching}). 

\paragraph{Moment matching and optimal experimental design.}
Moment matching is an intuitive idea for selecting low-dimensional data (\ie, overdetermined with coreset size $n$ larger than data/representation dimension $r$), bearing various objectives like the A/V-optimality~\citep{wang2017computationally,allen2017near} from optimal experimental design (OED)~\citep{chaloner1995bayesian,pukelsheim2006optimal,fedorov2013theory}.
While classical OED studies the overdetermined scenario with $n \ge r$, efforts have been made to extend the notion of V-optimality beyond the overdetermined setting~\citep{yu2006active,shoham2023experimental}. Nevertheless, these works focus on the general overparametrized setting without considering potential special structures in data. In the context of data selection, this can lead to pessimistic sample complexity, especially for learning problems with low-intrinsic dimensions.

For multimodal contrastive learning, recent works \cite{wang2024variance,joshi2024data} illustrated the effectiveness of moment matching via tailored data selection criteria for CLIP~\citep{radford2021learning}. Distinct from our setting of general finetuning in both low and high dimensions, these works focus on data filtering (with $n > r$) for pretraining from scratch. 

\paragraph{(Unsupervised) data selection.} In this work, we focus on unsupervised data selection that instead of relying on labels\footnote{
    From the theory perspective, data selection for finetuning is less sensitive to labels compared to training from scratch, especially given suitable pre-trained models with reasonable zero-shot accuracy (\eg, \Cref{asm:ground_truth_finetuning}).
},
leverages the geometry of the feature space and aims to select samples that are spread out, with a broad spectrum of concretizations including herding~\citep{welling2009herding,chen2012super}, k-center greedy~\citep{sener2017active}, leverage score sampling~\citep{chatterjee1986influential,drineas2011faster,shimizu2023improved}, adaptive sampling~\citep{deshpande2006adaptive,chen2022randomly}, and volume sampling~\citep{deshpande2006matrix,derezinski2021determinantal}. 

An inspiring recent work \cite{kolossov2023towards} investigates the generalization of weakly supervised data selection via independent sampling in the low- ($n \to \infty$ with fixed $r$) and high-dimensional ($n, r \to \infty$ with $n/r \to$ constant) asymptotics. Instead of the asymptotic regime, we consider a realistic setting with finite $n$ and $r$, without specific assumptions on the data/feature distribution other than the low intrinsic dimension.
Along this line, (weakly) supervised data selection commonly make choices based on the uncertainty~\citep{lindley1956measure,seung1992query,lewis1995sequential} or sensitivity of the loss to samples (\eg, influence function~\citep{ting2018optimal,wang2018optimal,yang2022dataset}, sensitivity scores~\citep{munteanu2018coresets,mai2021coresets,zheng2022coverage,axiotis2024data}, and heuristics based on losses and their gradients~\citep{paul2021deep,jiang2019accelerating,vodrahalli2018all,ashdeep}).

\subsection{Notations}
Given any $n \in \Z_+$, we denote $[n] = \cbr{1,\cdots,n}$. Let $\eb_n$ be the $n$-th canonical basis of the conformable dimension; $\Ib_n$ be the $n \times n$ identity matrix; and $\b0_n, \b1_n \in \R^n$ being vectors with all entries equal to zero and one, respectively. 
Let $\SSS^{n-1} \dfeq \csep{\xb \in \R^n}{\nbr{\xb}_2 = 1}$ be the unit sphere in $\R^n$, and $\Delta_n \dfeq \csepp{\pb \in [0,1]^b}{\nbr{\pb}_1 = 1}$ be the dimension-$n$ probability simplex. 
We adapt the standard asymptotic notations: for any functions $f,g: \R_+ \to \R_+$, we write $f = O\rbr{g}$ or $f \lesssim g$ if there exists some constant $C>0$ such that $f(x) \leq C g(x)$ for all $x \in \R_+$; $f = \Omega\rbr{g}$ or $f \gtrsim g$ if $g = O\rbr{f}$; $f \asymp g$ if $f = O\rbr{g}$ and $f = \Omega\rbr{g}$.
For any matrix $\Ab \in \R^{n \times d}$, let $\sval{\Ab}{1} \ge \cdots \ge \sval{\Ab}{\rank(\Ab)} \ge 0$ be the singular values; and $\Ab^\dagger$ be the Moore-Penrose pseudoinverse.
Additionally for any $k \le \rank\rbr{\Ab}$, let $\tsvd{\Ab}{k} = \argmin_{\Bb:\ \rank\rbr{\Bb} \le k} \nbr{\Ab - \Bb}_F$ be the optimal rank-$k$ approximation of $\Ab$ (characterized by the rank-$k$ truncated SVD).
For any symmetric matrices $\Ab, \Bb \in \R^{d \times d}$, we write $\Ab \ageq \Bb$ or $\Ab - \Bb \ageq 0$ if $\Ab - \Bb$ is positive semidefinite.

\section{Data Selection for Finetuning}\label{sec:data_select_ft}
Given a data space $\Xcal \subseteq \R^d$ and a label space $\Ycal \subseteq \R$, let $\fulldata = \csepp{\rbr{\xb_i, y_i} \in \Xcal \times \Ycal}{i \in [N]}$ be a large dataset, with matrix form $\rbr{\Xb,\yb} \in \R^{N \times d} \times \R^N$, for some downstream task where the performance is measured by a loss function $\ell: \Ycal \times \Ycal \to \R_{\ge 0}$.

\paragraph{Finetuning.} 
Let $\Fcal$ be a class of prediction functions where each $f = h \circ \phi \in \Fcal$ can be expressed as the composition of an expressive representation function $\phi$ and a prediction head $h$. 
We consider a pre-trained model $\phi$ that yields high-quality representations for some downstream tasks on $\fulldata$ and denote $\Funft{\phi} \subseteq \Fcal$ as the class of finetuned models based on $\phi$. 
Assume that for every $\rbr{\xb_i, y_i} \in \fulldata$, $y_i \sim P\rbr{y \mid {\xb_i}}$ \iid\ such that there exists $f^\phi_* \in \Funft{\phi}$ with respect to $\phi$ satisfying
\begin{enumerate*}[label=(\roman*)]
    \item $\E\sbr{y_i \mid \phi\rbr{\xb_i}} = f^\phi_*\rbr{\xb_i}$, and
    \item $\V\sbr{y_i \mid \phi\rbr{\xb_i}} \le \sigma^2$ for some $\sigma > 0$
\end{enumerate*}
(which will be formalized later in respective settings).

\paragraph{Data selection.}
Instead of finetuning on the entire dataset $\fulldata$, we aim to select a small coreset $\Dcal_{S} \subseteq \fulldata$ of size $n \ll N$ where the generalization is close. Precisely, let $\coreset$ be indexed by $S \subset [N]$ and denoted as $\rbr{\Xb_S, \yb_S} \in \R^{n \times d} \times \R^n$. With $\Lcal_{\coreset}\rbr{f} = \frac{1}{n} \sum_{(\xb, y) \in \coreset} \ell\rbr{f(\xb), y}$ and a regularization $\Rcal:\Funft{\phi} \to \R_{\ge 0}$ associated with a hyperparameter $\alpha \ge 0$, we want $f_S = \argmin_{f \in \Funft{\phi}} \Lcal_{\coreset}\rbr{f} + \alpha \cdot \Rcal\rbr{f}$ to provide a low excess risk over $\fulldata$: $\exrisk\rbr{f_S} \dfeq \frac{1}{N} \sum_{i=1}^N \ell(f_S(\xb_i), f^\phi_*(\xb_i))$.

\subsection{Low-dimensional Linear Probing: Variance Minimization}\label{sec:linear_probing}
Warming up with linear probing, we concretize the general assumption on the ground truth (\ie, $\E\sbr{y_i \mid \phi\rbr{\xb_i}} = f^\phi_*\rbr{\xb_i} = \phi\rbr{\xb_i}^\top \thetab_*$ and $\V\sbr{y_i \mid \phi\rbr{\xb_i}} \le \sigma^2$) as follows:
\begin{assumption}[linear probing ground truth]\label{asm:independent_mismatch}
    Assume $\yb = \phi\rbr{\Xb} \thetab_* + \zb$ for some $\thetab_* \in \R^r$ where $\zb = \sbr{z_1,\cdots,z_N}^\top \in \R^N$ consists of $\iid$ entries with $\E\sbr{\zb} = \b0_N$ and $\E\sbr{\zb\zb^\top} \aleq \sigma^2 \Ib_N$.    
\end{assumption}
Consider the pre-trained representations $\phi(\Xb) \in \R^{N \times r}$ and $\phi(\Xb_S) \in \R^{n \times r}$ with respective moments $\sqrep{} \dfeq \frac{1}{N} \phi\rbr{\Xb}^\top \phi\rbr{\Xb}$ and $\sqrep{S} \dfeq \frac{1}{n} \phi\rbr{\Xb_S}^\top \phi\rbr{\Xb_S}$.
For low-dimensional linear probing with $r \le n$ (\st $\rank(\sqrep{S}) = r$), the linear regression $\thetab_S = \argmin_{\thetab} \frac{1}{n} \nbr{\phi\rbr{\Xb_S} \thetab - \yb_S}_2^2$ has a unique solution with excess risk $\exrisk\rbr{\thetab_S} = \nbr{\thetab_S - \thetab_*}_{\sqrep{}}^2$\footnote{
    For any $\ub \in \R^r$, $\nbr{\ub}_{\sqrep{}} \dfeq \sqrt{\ub^\top \sqrep{} \ub}$ is the seminorm associated with $\sqrep{} \ageq 0$.
}
controlled by $\sqrep{}$ and $\sqrep{S}$, analogous to the V-optimality criterion~\citep{wang2017computationally,allen2017near} in optimal experimental design:
\begin{align}\label{eq:linear_probing_low_dim}
    \E\sbr{\exrisk\rbr{\thetab_S}} \le \frac{\sigma^2}{n} \tr(\sqrep{} (\sqrep{S})^{-1}),
\end{align}
If $\coreset$ satisfies $\sqrep{} \aleq c_S \sqrep{S}$ for some $c_S \ge \frac{n}{N}$, then $\E\sbr{\exrisk\rbr{\thetab_S}} \le c_S \frac{\sigma^2 r}{n}$ (proof in \Cref{apx:pf_linear_probing_low_dim}), where $c_S$ characterizes the variance controlled by $\coreset$, \ie, smaller $c_S$ implies lower variance.

Despite its simplicity, uniform sampling is often observed in practice to serve as a strong baseline for data selection~\citep{guo2022deepcore}, especially when $n$ is large. In the low-dimensional linear probing scenario, \eqref{eq:linear_probing_low_dim} provides a theoretical justification for such effectiveness of uniform sampling: 
\begin{proposition}[Uniform sampling for low-dimensional linear probing (\Cref{apx:pf_linear_probing_sampled})]\label{pro:linear_probing_sampled}
    Assume there exists
    \begin{enumerate*}[label=(\roman*)]
        \item $B_\phi > 0$ such that $\nbr{\phi(\xb)}_2 \le B_\phi\ \forall\ \xb \in \fulldata$; and 
        \item $\gamma > 0$ with $\sqrep{} \ageq \gamma \Ib_r$.
    \end{enumerate*}
    For $S$ sampled uniformly (with replacement) over $\fulldata$, with probability at least $1-\delta$ over $S$, $\sqrep{} \aleq c_S \sqrep{S}$ for any $c_S > 1$ if $n \gtrsim \frac{B_\phi^4}{\gamma^2} \cdot \frac{r + \log\rbr{1/\delta}}{\rbr{1-1/c_S}^2}$.
\end{proposition}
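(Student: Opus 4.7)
The plan is to reduce the PSD inequality $\sqrep{} \aleq c_S \sqrep{S}$ to a minimum-eigenvalue concentration statement for a whitened empirical covariance and then apply a matrix concentration inequality for bounded iid PSD matrices. Because $\sqrep{} \ageq \gamma \Ib_r \succ 0$, the matrix $\sqrep{}$ is invertible; multiplying the target inequality on both sides by $(\sqrep{})^{-1/2}$ shows it is equivalent to $\lambda_{\min}(\Mb_S) \ge 1/c_S$, where $\Mb_S \dfeq (\sqrep{})^{-1/2} \sqrep{S} (\sqrep{})^{-1/2} = \tfrac{1}{n}\sum_{i \in S} \ub_i \ub_i^\top$ for the whitened features $\ub_i \dfeq (\sqrep{})^{-1/2} \phi(\xb_i)$. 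From the two hypotheses I read off (i) $\tfrac{1}{N}\sum_{i=1}^N \ub_i \ub_i^\top = \Ib_r$ and (ii) $\|\ub_i\|_2^2 \le B_\phi^2/\gamma =: L$ almost surely, so under uniform sampling with replacement the summands $\ub_{i} \ub_{i}^\top$ are iid PSD matrices with mean $\Ib_r$ and operator norm bounded by $L$.

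Next, I would apply a matrix concentration tail bound on $\lambda_{\min}(\Mb_S)$. The cleanest route is an $\epsilon$-net argument: fix a $1/4$-net $\mathcal{N}\subset \SSS^{r-1}$ with $|\mathcal{N}| \le 9^r$, use the standard reduction $\lambda_{\min}(\Mb_S) \ge 1 - 2\max_{\ub\in\mathcal{N}}|\ub^\top \Mb_S \ub - 1|$, and apply scalar Hoeffding's inequality to each fixed $\ub$ (since $\ub^\top \ub_i\ub_i^\top \ub \in [0,L]$ and has mean $\ub^\top \Ib_r \ub = 1$). A union bound over $\mathcal{N}$ yields $\Pr\bigl[\,\max_{\ub\in\mathcal{N}} |\ub^\top \Mb_S\ub - 1| \ge t\,\bigr] \le 2 \cdot 9^r \exp(-c n t^2 / L^2)$ for some absolute $c>0$. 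Choosing $t = \tfrac{1}{2}(1 - 1/c_S)$ and forcing this probability to be at most $\delta$ gives exactly the announced sample complexity $n \gtrsim \tfrac{B_\phi^4}{\gamma^2}\cdot \tfrac{r + \log(1/\delta)}{(1-1/c_S)^2}$, and on the complementary event $\lambda_{\min}(\Mb_S) \ge 1 - (1 - 1/c_S) = 1/c_S$ as desired.

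The main obstacle is getting the \emph{exact} scaling: the appearance of $B_\phi^4/\gamma^2$ (rather than the sharper $B_\phi^2/\gamma$ available via Tropp's matrix Chernoff) and of $r + \log(1/\delta)$ (rather than $\log(r/\delta)$) is what forces the net-plus-Hoeffding route over a direct matrix Chernoff application; I would commit to Hoeffding with the almost-sure range $[0,L]$, which does not exploit the fact that $\V[\ub^\top \ub_i\ub_i^\top \ub]\le L$ and hence produces the $L^2$ dependence. An alternative, if the tighter scaling were preferred, would be to invoke Tropp's matrix Chernoff directly on the PSD summands with parameter $L$, but I would follow the $\epsilon$-net path to reproduce the stated bound verbatim. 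The rest of the argument is routine bookkeeping of constants and verifying that the $1/4$-net reduction for $\lambda_{\min}$ (rather than for operator norm) holds, which follows from $\Mb_S - \Ib_r$ being symmetric.
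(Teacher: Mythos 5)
Your proposal is correct and follows essentially the same route as the paper's own proof: whiten by $(\sqrep{})^{-1/2}$, reduce the PSD inequality to a spectral bound on the empirical whitened covariance, then combine an $\epsilon$-net over $\SSS^{r-1}$ with a scalar concentration inequality and a union bound to arrive at $n \gtrsim \frac{B_\phi^4}{\gamma^2}\cdot\frac{r+\log(1/\delta)}{(1-1/c_S)^2}$. The only technical difference is in the scalar step — you invoke Hoeffding via the almost-sure bound $(\ub^\top \ub_i)^2 \le B_\phi^2/\gamma$, whereas the paper goes through sub-Gaussian $\Rightarrow$ sub-exponential $\Rightarrow$ Bernstein in its small-deviation regime — but both yield the same $B_\phi^4/\gamma^2$ scaling, and your observation that a matrix Chernoff route would tighten this to $B_\phi^2/\gamma$ (at odds with the stated bound) is accurate.
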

That is, for linear probing with sufficiently low dimension $r \ll n$, under mild regularity assumptions on data, uniform sampling enjoys a near-optimal generalization $O(r/n)$.

\subsection{High-dimension Finetuning with Low Intrinsic Dimension: Variance-Bias Tradeoff}\label{sec:finetune}
Extending the analysis to general finetuning, we consider a set of $r$ finetuning parameters $\thetab \in \R^r$\footnote{
    Notice that $r$ is the dimension of the finetuning parameter space. For linear probing in \Cref{sec:linear_probing}, $r$ is the same as the pre-trained representation dimension; but for general finetuning, $r$ can be much larger.
} (potentially with $r \gg n$) over a pre-trained model $\phi$ (\eg, $\thetab$ can be the parameters of the last layer (\ie, linear probing), last few layers, the entire network, or the LoRA~\citep{hu2021lora} matrices).

Let $\Funft{\phi} = \csepp{f^\phi\rbr{\cdot;\thetab}: \Xcal \to \R}{\thetab \in \R^r}$ be the finetuning function class. Without loss of generality, we assume zero initialization of $\thetab$ such that $f^\phi\rbr{\cdot;\b0_r}$ corresponds to the pre-trained model. 
Analogous to the assumption in \cite{xia2024less}, under locality constraint on $\thetab$ (\eg, $\nbr{\thetab}_2 < 1$), the dynamics of finetuning falls in the kernel regime~\citep{jacot2018neural} where $f^\phi$ can be approximated by its first-order Taylor expansion: $f^\phi\rbr{\xb;\thetab} \approx f^\phi\rbr{\xb;\b0_r} + \nabla_{\thetab} f^\phi\rbr{\xb;\b0_r}^\top \thetab$. Then, we formalize the ground truth as follows:
\begin{assumption}[Finetuning ground truth]\label{asm:ground_truth_finetuning}
    Given the pre-trained $\phi$, there exists a bounded ground truth $\thetab_* \in \R^r$ with $\nbr{\thetab_*}_2 < 1$ such that for all $\rbr{\xb,y} \in \fulldata$,
    (i) $\E\sbr{y \mid \phi\rbr{\xb}} = f^\phi_*\rbr{\xb} = f^\phi\rbr{\xb;\thetab_*}$, and
    (ii) $\V\sbr{y \mid \phi\rbr{\xb}} \le \sigma^2$ for some $\sigma > 0$. 
\end{assumption}
Intuitively, \Cref{asm:ground_truth_finetuning} implies that the pre-trained model $f^\phi\rbr{\cdot;\b0_r}$ has a reasonable zero-shot performance.
Given any $S \subset [N]$ with $\abbr{S} = n$, let $f^\phi\rbr{\Xb_S;\thetab} \in \R^n$ and $\nabla_{\thetab}f^\phi\rbr{\Xb_S;\thetab} \in \R^{n \times r}$ be the evaluation of $f^\phi\rbr{\xb;\thetab}$ and its Jacobian over $\Xb_S$ at $\thetab$.
We observe that with $\zb \dfeq \yb - f^\phi\rbr{\Xb;\thetab_*}$, \Cref{asm:ground_truth_finetuning} implies $\yb - f^\phi\rbr{\Xb;\b0_r} \approx \Gb \thetab_* + \zb$ where $\E\sbr{\zb} = \b0_N$ and $\E\sbr{\zb\zb^\top} \aleq \sigma^2 \Ib_N$; while $\Gb \dfeq \nabla_{\thetab} f^\phi\rbr{\Xb;\b0_r} \in \R^{N \times r}$ is the Jacobian over $\fulldata$ at initialization.

Then in the kernel regime~\citep{jacot2018neural}, the finetuning objective $\min_{\thetab \in \R^r} \frac{1}{n} \nbr{f^\phi\rbr{\Xb_S;\thetab}-\yb_S}_2^2 + \alpha \nbr{\thetab}_2^2$ can be well approximated by a ridge regression problem:
\begin{align}\label{eq:data_pruning_finetuning}
    \thetab_S = \argmin_{\thetab \in \R^r} \frac{1}{n} \nbr{\nabla_{\thetab}f^\phi\rbr{\Xb_S;\b0_r} \thetab - \rbr{\yb_S - f^\phi\rbr{\Xb_S;\b0_r}}}_2^2 + \alpha \nbr{\thetab}_2^2.
\end{align} 
Recall $\Gb \dfeq \nabla_{\thetab} f^\phi\rbr{\Xb;\b0_r} \in \R^{N \times r}$ and $\Gb_S \dfeq \nabla_{\thetab} f^\phi\rbr{\Xb_S;\b0_r} \in \R^{n \times r}$. With the moments $\sqrep{} = \frac{1}{N} \Gb^\top \Gb$ and $\sqrep{S} = \frac{1}{n} \Gb_S^\top \Gb_S$, the excess risk $\exrisk\rbr{\thetab_S} = \nbr{\thetab_S - \thetab_*}_{\sqrep{}}^2$ satisfies\footnote{
    Notice that for linear probing, $f^\phi\rbr{\xb;\thetab} = f^\phi\rbr{\xb;\b0_r} + \nabla_{\thetab} f^\phi\rbr{\xb;\b0_r}^\top \thetab$ with $\nabla_{\thetab} f^\phi\rbr{\xb;\b0_r} = \phi\rbr{\xb}$. Therefore, the finetuning objective can be exactly formulated as \eqref{eq:data_pruning_finetuning}, and the excess risk of high-dimensional linear probing satisfies \Cref{thm:linear_probing_high_dim_ridge} with $\sqrep{} \dfeq \frac{1}{N} \phi\rbr{\Xb}^\top \phi\rbr{\Xb}$ and $\sqrep{S} \dfeq \frac{1}{n} \phi\rbr{\Xb_S}^\top \phi\rbr{\Xb_S}$.
}:
\begin{theorem}[Main result I: variance-bias tradeoff (\Cref{apx:pf_linear_probing_high_dim})]\label{thm:linear_probing_high_dim_ridge}
    Given $S$, let $\Pb_{\Scal} \in \R^{r \times r}$ be an orthogonal projector onto some subspace $\Scal \subseteq \range(\sqrep{S})$, and $\Pb_{\Scal}^\perp = \Ib_r - \Pb_{\Scal}$ be its orthogonal complement.
    Under \Cref{asm:independent_mismatch}, there exists an $\alpha > 0$ such that \eqref{eq:data_pruning_finetuning} satisfies $\E\sbr{\exrisk\rbr{\thetab_S}} \le \t{\b{variance}} + \t{\b{bias}}$ with 
    \begin{enumerate*}[label=(\roman*)]
        \item $\t{\b{variance}} = \frac{2 \sigma^2}{n} \tr(\sqrep{} (\Pb_{\Scal} \sqrep{S} \Pb_{\Scal})^\dagger)$ and 
        \item $\t{\b{bias}} = 2 \tr\rbr{\sqrep{} \Pb_{\Scal}^\perp} \nbr{\thetab_*}_2^2$.
    \end{enumerate*}
\end{theorem}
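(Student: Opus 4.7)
The plan is to reduce the finetuning objective to a ridge regression in the kernel regime, perform a bias--variance decomposition, and carefully split the resulting terms along $\Scal$ and $\Scal^\perp$ so that the redundant ``out-of-$\Scal$'' directions become absorbed into the $\tr(\sqrep{} \Pb_\Scal^\perp)\nbr{\thetab_*}_2^2$ bias term while the ``in-$\Scal$'' variance is expressed through the restricted pseudoinverse $(\Pb_\Scal \hsqrep{S} \Pb_\Scal)^\dagger$.

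First, I would use the first-order Taylor expansion around $\thetab = \b0_r$ (valid because $\nbr{\thetab_*}_2 < 1$ places us in the kernel regime of \Cref{asm:ground_truth_finetuning}) to rewrite \eqref{eq:data_pruning_finetuning} as a standard ridge regression in the Jacobian features $\Gb_S$. Together with $\yb_S - f^\phi(\Xb_S;\b0_r) \approx \Gb_S \thetab_* + \zb_S$ where $\E[\zb_S] = \b0$ and $\E[\zb_S\zb_S^\top] \aleq \sigma^2 \Ib_n$ by \Cref{asm:independent_mismatch}, the closed form becomes $\thetab_S = (\hsqrep{S} + \alpha \Ib_r)^{-1} \tfrac{1}{n} \Gb_S^\top (\Gb_S \thetab_* + \zb_S)$.

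Second, I would perform the key split $\thetab_S - \thetab_* = (\thetab_S - \Pb_\Scal \thetab_*) - \Pb_\Scal^\perp \thetab_*$ and apply the Young-type inequality $\nbr{\ab - \cb}_{\sqrep{}}^2 \le 2\nbr{\ab}_{\sqrep{}}^2 + 2\nbr{\cb}_{\sqrep{}}^2$ to separate a ``residual'' piece from an ``unreachable'' piece. The latter is bounded directly via the standard trace trick
\[
\nbr{\Pb_\Scal^\perp \thetab_*}_{\sqrep{}}^2 = \thetab_*^\top \Pb_\Scal^\perp \sqrep{} \Pb_\Scal^\perp \thetab_* \le \nbr{\thetab_*}_2^2 \, \tr(\sqrep{} \Pb_\Scal^\perp),
\]
using $\Pb_\Scal^\perp \thetab_* \thetab_*^\top \Pb_\Scal^\perp \preceq \nbr{\thetab_*}_2^2 \Pb_\Scal^\perp$, yielding exactly the bias in the statement.

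Third, for the residual $\thetab_S - \Pb_\Scal \thetab_*$, I would take expectation and use the usual bias--variance decomposition. The key algebraic identity to exploit is that $\Scal \subseteq \range(\hsqrep{S})$ implies $\Pb_\Scal \hsqrep{S} \Pb_\Scal$ is nonsingular on $\Scal$, so $(\Pb_\Scal \hsqrep{S} \Pb_\Scal)^\dagger \hsqrep{S} \Pb_\Scal = \Pb_\Scal$; this means the noise-free ridge estimator applied to $\Pb_\Scal \thetab_*$ converges to $\Pb_\Scal \thetab_*$ for small $\alpha$. The stochastic part then contributes $\frac{\sigma^2}{n}\tr\bigl(\sqrep{} (\hsqrep{S} + \alpha \Ib_r)^{-1} \hsqrep{S} (\hsqrep{S} + \alpha \Ib_r)^{-1}\bigr)$, which under an appropriate choice of $\alpha$ (small enough relative to the nonzero eigenvalues of $\Pb_\Scal \hsqrep{S} \Pb_\Scal$ on $\Scal$, but nonzero so that mass in $\range(\hsqrep{S}) \setminus \Scal$ is suppressed into bias already controlled by the $\Pb_\Scal^\perp$ term) can be majorized by $\frac{\sigma^2}{n}\tr(\sqrep{} (\Pb_\Scal \hsqrep{S} \Pb_\Scal)^\dagger)$, producing the stated variance with its factor of $2$ from the Young splitting.

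The hard part will be the last majorization: the ridge ``variance matrix'' $(\hsqrep{S} + \alpha \Ib_r)^{-1} \hsqrep{S} (\hsqrep{S} + \alpha \Ib_r)^{-1}$ is not in general $\preceq (\Pb_\Scal \hsqrep{S} \Pb_\Scal)^\dagger$ in the PSD order, since the latter is supported on $\Scal$ while the former has range in all of $\range(\hsqrep{S})$. The argument must thus be trace-level, and must carefully route any spectral mass of $\hsqrep{S}$ that lies outside $\Scal$ into an additional bias contribution that is still bounded by $\tr(\sqrep{} \Pb_\Scal^\perp) \nbr{\thetab_*}_2^2$ (using $\ker(\hsqrep{S}) \subseteq \Scal^\perp$ and $\alpha \to 0^+$ on $\range(\hsqrep{S})$). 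This bookkeeping---choosing $\alpha$ so that the two contributions split cleanly between the claimed variance and bias---is where the factors of $2$ ultimately originate and where the existence statement on $\alpha$ is nontrivial; the remaining computations are routine.
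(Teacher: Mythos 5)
Your high-level picture (separate the reachable directions $\Scal$ from the unreachable $\Scal^\perp$, then balance variance and bias by choosing $\alpha$) is on the right track, but the route you sketch has a genuine gap at exactly the place you flag, and the decomposition you propose does not cleanly produce the stated constants.

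The paper does \emph{not} pre-split $\thetab_S - \thetab_*$ into an in-$\Scal$ residual and $\Pb_\Scal^\perp\thetab_*$ via Young. It uses the exact bias--variance decomposition (cross terms vanish because $\E[\zb_S]=\b0$), so $\E[\exrisk(\thetab_S)] = \|\alpha(\sqrep{S}+\alpha\Ib_r)^{-1}\thetab_*\|_{\sqrep{}}^2 + \frac{1}{n}\E\|(\sqrep{S}+\alpha\Ib_r)^{-1}\Gb_S^\top\zb_S\|_{\sqrep{}}^2$, and only then splits each term along $\Scal$. The key tool you are missing is the pair of PSD inequalities
\begin{align*}
(\sqrep{S}+\alpha\Ib_r)^{-1} \;\aleq\; (\Pb_\Scal\sqrep{S}\Pb_\Scal)^\dagger + \tfrac{1}{\alpha}\Pb_\Scal^\perp,
\qquad
(\sqrep{S}+\alpha\Ib_r)^{-2} \;\aleq\; \tfrac{1}{2\alpha}(\Pb_\Scal\sqrep{S}\Pb_\Scal)^\dagger + \tfrac{1}{\alpha^2}\Pb_\Scal^\perp,
\end{align*}
which follow from $\sqrep{S}+\alpha\Ib_r \ageq \Pb_\Scal\sqrep{S}\Pb_\Scal + \alpha\Pb_\Scal^\perp$ with the two summands having orthogonal ranges. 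These turn the full-rank ridge resolvent into a $\Scal$-restricted pseudoinverse plus a $\Pb_\Scal^\perp$ leak, and that is \emph{the} technical content; your sentence ``the argument must thus be trace-level, and must carefully route any spectral mass of $\hsqrep{S}$ that lies outside $\Scal$ into an additional bias'' is exactly a description of what these inequalities do, not a proof of it. Once you have them, the $\alpha^{-1}\tr(\sqrep{}\Pb_\Scal^\perp)$ leak in variance and the $\alpha\,\tr(\sqrep{}(\Pb_\Scal\sqrep{S}\Pb_\Scal)^\dagger)\|\thetab_*\|_2^2$ leak in bias are cross-optimized via AM--GM to yield the stated bound; the factors of $2$ arise there, not from a Young split.

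Your proposed pre-split $\thetab_S - \thetab_* = (\thetab_S - \Pb_\Scal\thetab_*) - \Pb_\Scal^\perp\thetab_*$ has a further problem: after taking expectation and centering at $\Pb_\Scal\thetab_*$, the mean offset is $\E[\thetab_S] - \Pb_\Scal\thetab_* = \Pb_\Scal^\perp\thetab_* - \alpha(\sqrep{S}+\alpha\Ib_r)^{-1}\thetab_*$, which brings back a second copy of $\Pb_\Scal^\perp\thetab_*$ inside the residual you already paid a factor of $2$ to split off. Another Young split there inflates the bias coefficient to at least $4$--$6$ rather than $2$. Also note that the noise-free ridge estimator is applied to the full $\thetab_*$, not to $\Pb_\Scal\thetab_*$, so it converges to $\Pb_{\range(\sqrep{S})}\thetab_*$ as $\alpha\to 0^+$, not to $\Pb_\Scal\thetab_*$; the identity $(\Pb_\Scal\sqrep{S}\Pb_\Scal)^\dagger\sqrep{S}\Pb_\Scal = \Pb_\Scal$ is correct but does not apply to the actual estimator. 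In short: the decomposition is genuinely different from the paper's and, absent the PSD inequalities and a cleaner bookkeeping, does not reach the stated constants; you need to supply those inequalities (or an equivalent argument) to close the proof.
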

Specifically, the variance-bias tradeoff is controlled by the unknown $\Scal$: expanding $\Scal$ leads to higher variance but lower bias. Reducing the generalization gap involves finding a suitable $\Scal$ in the high-dimensional parameter space, a computationally challenging problem addressed in \Cref{sec:gradient_sketching}.

It is worth highlighting that \Cref{thm:linear_probing_high_dim_ridge} encapsulates both the low- and high-dimensional finetuning. For low-dimensional linear probing, \eqref{eq:linear_probing_low_dim} is a special case of \Cref{thm:linear_probing_high_dim_ridge} (up to constants) with $\Pb_{\Scal} = \Ib_r$.
While in high dimension, an intrinsic low-dimensional structure (\eg, \Cref{asm:low_intrinsic_dim}) is necessary for the effectiveness of data selection\footnote{
    Otherwise, if all directions of $\range(\sqrep{})$ are equally important, with $n \ll r$, $\thetab_S$ learned from $\coreset$ necessarily fails to capture the orthogonal complement of $\range(\sqrep{S})$ and therefore $\E\sbr{\exrisk\rbr{\thetab_S}} \gtrsim r-n$.
}. 
\begin{assumption}[Low intrinsic dimension]\label{asm:low_intrinsic_dim}
    Consider the second moment $\sqrep{} \ageq 0$ over $\fulldata$ with $N$ samples. Let $\ol{r} \dfeq \min\{t \in [r] \mid \tr\rbr{\sqrep{} - \tsvd{\sqrep{}}{t}} \le \tr\rbr{\sqrep{}} / N\}$ be the intrinsic dimension. Assume that $\sqrep{}$ has a low intrinsic dimension: $\ol{r} \ll \min\cbr{N, r}$.
\end{assumption}
When the high-dimensional finetuning parameter space has a low intrinsic dimension $\ol{r} \ll \min\cbr{N, r}$, \Cref{thm:linear_probing_high_dim_ridge} can be further concretized with suitable $\coreset$ and associated $\Scal$:
\begin{corollary}[Exploitation + exploration (\Cref{apx:pf_linear_probing_high_dim})]\label{cor:linear_probing_high_dim_ridge}
    Under the same setting as \Cref{thm:linear_probing_high_dim_ridge} and \Cref{asm:low_intrinsic_dim}, if $S$ satisfies for some subspace $\Scal \subseteq \range(\sqrep{S})$ with $\rank\rbr{\Pb_{\Scal}} \asymp \ol{r}$ and $c_S \ge \frac{n}{N}$ that 
    (i) $\Pb_{\Scal} (c_S \sqrep{S} - \sqrep{}) \Pb_{\Scal} \ageq 0$ and 
    (ii) $\tr(\sqrep{} \Pb_{\Scal}^\perp) \le \frac{N}{n} \tr(\sqrep{} - \tsvd{\sqrep{}}{\ol{r}})$, 
    then\footnote{
        We note that in contrast to the classical slow rate $O(1/\sqrt{n})$ in low dimension (when $n > r$), ridge regression on $\coreset$ in the high-dimensional finetuning (with $n < r$) achieves a fast rate $O(1/n)$. This is granted by the low-rankness of $\sqrep{S}$, which enables a more fine-grained analysis of the regularization (vide \Cref{apx:pf_linear_probing_high_dim}).
    }
    \begin{align}\label{eq:linear_probing_high_dim_ridge_bound_simplified}
        \E\sbr{\exrisk\rbr{\thetab_S}} \le \t{\b{variance}} + \t{\b{bias}} \lesssim \frac{1}{n} \rbr{c_S \sigma^2 \ol{r} + \tr\rbr{\sqrep{}} \nbr{\thetab_*}_2^2}.
    \end{align}
\end{corollary}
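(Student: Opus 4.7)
The plan is to start from the decomposition in \Cref{thm:linear_probing_high_dim_ridge}, $\E\sbr{\exrisk\rbr{\thetab_S}} \le \t{\b{variance}} + \t{\b{bias}}$, and bound each of the two terms separately using the two hypotheses (i) and (ii) of the corollary, together with \Cref{asm:low_intrinsic_dim}. The final bound \eqref{eq:linear_probing_high_dim_ridge_bound_simplified} follows by summing the two contributions and absorbing constants into the $\lesssim$ notation.

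For the \textbf{variance term} $\frac{2\sigma^2}{n} \tr\rbr{\sqrep{} (\Pb_{\Scal} \sqrep{S} \Pb_{\Scal})^\pinv}$, the first step is to observe that since $\Scal \subseteq \range(\sqrep{S})$ and $\sqrep{S} \ageq 0$, the restriction of $\sqrep{S}$ to $\Scal$ is positive definite, so $\rank(\Pb_{\Scal} \sqrep{S} \Pb_{\Scal}) = \rank(\Pb_{\Scal}) \asymp \ol{r}$ and moreover $\Pb_{\Scal} \sqrep{S} \Pb_{\Scal} \rbr{\Pb_{\Scal} \sqrep{S} \Pb_{\Scal}}^\pinv = \Pb_{\Scal}$. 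Then I would use that $(\Pb_{\Scal} \sqrep{S} \Pb_{\Scal})^\pinv$ has range and support in $\Scal$ to rewrite $\tr\rbr{\sqrep{} (\Pb_{\Scal} \sqrep{S} \Pb_{\Scal})^\pinv} = \tr(\Pb_{\Scal} \sqrep{} \Pb_{\Scal} (\Pb_{\Scal} \sqrep{S} \Pb_{\Scal})^\pinv)$ via cyclic trace. Apply condition (i), $\Pb_{\Scal} \sqrep{} \Pb_{\Scal} \aleq c_S \Pb_{\Scal} \sqrep{S} \Pb_{\Scal}$, together with $(\Pb_{\Scal} \sqrep{S} \Pb_{\Scal})^\pinv \ageq 0$, to conclude $\tr\rbr{\sqrep{} (\Pb_{\Scal} \sqrep{S} \Pb_{\Scal})^\pinv} \le c_S \tr(\Pb_{\Scal}) \asymp c_S \ol{r}$, which yields $\t{\b{variance}} \lesssim c_S \sigma^2 \ol{r}/n$.

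For the \textbf{bias term} $2 \tr\rbr{\sqrep{} \Pb_{\Scal}^\perp} \nbr{\thetab_*}_2^2$, I would combine condition (ii), $\tr\rbr{\sqrep{} \Pb_{\Scal}^\perp} \le \tfrac{N}{n}\, \tr(\sqrep{} - \tsvd{\sqrep{}}{\ol{r}})$, with the definition of the intrinsic dimension in \Cref{asm:low_intrinsic_dim}, which guarantees $\tr(\sqrep{} - \tsvd{\sqrep{}}{\ol{r}}) \le \tr(\sqrep{})/N$. This immediately gives $\tr\rbr{\sqrep{} \Pb_{\Scal}^\perp} \le \tr(\sqrep{})/n$ and hence $\t{\b{bias}} \lesssim \tr(\sqrep{}) \nbr{\thetab_*}_2^2 / n$.

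The \textbf{main technical subtlety} is in the variance step, specifically the need to justify the trace inequality $\tr\rbr{\Pb_{\Scal} \sqrep{} \Pb_{\Scal} \Mb} \le c_S \tr\rbr{\Pb_{\Scal} \sqrep{S} \Pb_{\Scal} \Mb}$ with $\Mb = (\Pb_{\Scal} \sqrep{S} \Pb_{\Scal})^\pinv$: one must verify that $\Mb \ageq 0$ and that the pseudoinverse has the correct range so that cyclic trace and the Löwner order (condition (i)) compose cleanly, and then identify $\Pb_{\Scal} \sqrep{S} \Pb_{\Scal} \Mb$ as the projector $\Pb_{\Scal}$ whose trace equals $\rank(\Pb_{\Scal}) \asymp \ol{r}$. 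Everything else is bookkeeping: plugging the two bounds into \Cref{thm:linear_probing_high_dim_ridge} and invoking the intrinsic-dimension definition.
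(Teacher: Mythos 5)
Your proof is correct and follows essentially the same route as the paper's own proof of \Cref{cor:linear_probing_high_dim_ridge}: bound the variance term by condition (i) and the bias term by condition (ii) together with \Cref{asm:low_intrinsic_dim}, then sum. The paper's argument is more terse---it simply states the two bounds $\t{variance} \lesssim c_S \sigma^2 \ol{r}/n$ and $\t{bias} \le \tfrac{2}{n}\tr(\sqrep{})\nbr{\thetab_*}_2^2$ without spelling out the cyclic-trace rewriting or the range/positive-definiteness check on $\Pb_{\Scal}\sqrep{S}\Pb_{\Scal}$---so your version fills in exactly the bookkeeping the paper leaves implicit, and the technical subtlety you flag (that $\Mb = (\Pb_{\Scal}\sqrep{S}\Pb_{\Scal})^\pinv$ is PSD with range in $\Scal$, so cyclic trace and the L\"owner order compose, and $\Pb_{\Scal}\sqrep{S}\Pb_{\Scal}\Mb = \Pb_{\Scal}$) is indeed the only step that warrants care.
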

In particular, with $c_S, \sigma \lesssim 1$, $\nbr{\thetab_*}_2^2 < 1$, and $\tr(\sqrep{}) \asymp \ol{r}$ (depending only on the low intrinsic dimension), the generalization achieves a fast rate $O(\ol{r}/n)$, independent of $r \gg \ol{r}$.

In \eqref{eq:linear_probing_high_dim_ridge_bound_simplified},
\begin{enumerate*}[label=(\roman*)]
    \item \b{bias} is reduced by exploring the parameter space for an $\Scal$ with small low-rank approximation error $\tr(\sqrep{} \Pb_{\Scal}^\perp) \le \frac{1}{n} \tr(\sqrep{})$; while
    \item \b{variance} is reduced by exploiting information in $\Scal$ through moment matching, $\Pb_{\Scal} (c_S \sqrep{S} - \sqrep{}) \Pb_{\Scal} \ageq 0$, where smaller $c_S$ means better exploitation.
\end{enumerate*}

\section{Sketchy Moment Matching}\label{sec:sketchy_moment_matching}
A gap between \Cref{cor:linear_probing_high_dim_ridge} and practice is \emph{how to find a suitable $\Scal$ efficiently in the high-dimensional parameter space}. In this section, we introduce a simple scalable algorithm for constructing $\Scal$ and $\coreset$ that satisfies the exploration and exploitation conditions in \Cref{cor:linear_probing_high_dim_ridge}.

\subsection{Find Low Intrinsic Dimension via Gradient Sketching}\label{sec:gradient_sketching}
For high-dimensional finetuning with $r \gg n$, a critical limit of \Cref{thm:linear_probing_high_dim_ridge} and \Cref{cor:linear_probing_high_dim_ridge} is that the large moment matrices $\sqrep{}, \sqrep{S}$ are not invertible, storable, or even directly computable, due to the prohibitive cost. 
As a remedy, sketching~\citep{halko2011finding,woodruff2014sketching} via Johnson-Lindenstrauss transforms~\citep{johnson1984extensions} is a classical dimensionality reduction strategy that gets increasing recent attention for gradient approximation in large-scale machine learning problems~\citep{park2023trak,xia2024less}\footnote{
    We highlight a key nuance here: for fast influence function approximation in \cite{park2023trak,xia2024less}, the gradient of the \emph{loss function} is sketched, whereas in our setting, we sketch the gradient of the \emph{pre-trained model} $f^\phi(\xb;\b0_r)$.
}.

\begin{remark}[Gradient sketching]\label{rmk:gradient_sketching}
    In the high-dimensional setting with $r \gg n$, to reduce the dimensionality of the gradients $\Gb = \nabla_{\thetab}f^\phi\rbr{\Xb;\b0_r} \in \R^{N \times r}$ with a low intrinsic dimension $\ol{r} \ll \min\cbr{N,r}$ (\Cref{asm:low_intrinsic_dim}), we draw a Johnson-Lindenstrauss transform~\citep{johnson1984extensions} (JLT, formally in \Cref{def:jlt}) $\Gammab \in \R^{r \times m}$ that projects the dimension-$r$ gradients to a lower dimension $m \asymp \ol{r} \ll r$: $\wt\Gb = \Gb \Gammab \in \R^{N \times m}$. One of the most common constructions of JLT is the \emph{Gaussian embedding} (\ie, a Gaussian random matrix with $\iid$ entries $\Gammab_{ij} \sim \Ncal(0,1/m)$ discussed in \Cref{lem:gaussian_jl_second_moment}, vide \Cref{rmk:fast_jlt} for a brief overview of various (fast) JLTs and their efficiency). 
\end{remark}

While sketching is known for preserving Euclidean distances~\citep{johnson1984extensions} and providing accurate low-rank approximations~\cite{woodruff2014sketching,halko2011finding,martinsson2020randomized}, \emph{whether gradient sketching can convert \Cref{thm:linear_probing_high_dim_ridge} to an efficiently computable form without compromising the generalization guarantee?} We answer this question affirmatively with the following theorem.

\begin{theorem}[Main result II: gradient sketching (formally in \Cref{thm:sketchy_rmm})]\label{thm:sketchy_rmm_informal}
    Under \Cref{asm:ground_truth_finetuning} and \ref{asm:low_intrinsic_dim} with a low intrinsic dimension $\ol{r} \ll \min\cbr{N,r}$, draw a Gaussian embedding $\Gammab \in \R^{r \times m}$ (\Cref{lem:gaussian_jl_second_moment}) with $m \ge 11 \ol{r}$.
    Let $\skmom{} \dfeq \Gammab^\top \sqrep{} \Gammab$ and $\skmom{S} \dfeq \Gammab^\top \sqrep{S} \Gammab$ be the sketched gradient moments.
    For any $\coreset$ with $n > m$ samples such that $\rank(\sqrep{S})=n$, and the $\ceil{1.1 \ol{r}}$-th largest eigenvalue $\sval{\skmom{S}}{\ceil{1.1 \ol{r}}} \ge \gamma_S$ for some $\gamma_S > 0$, with probability at least $0.9$ over $\Gammab$, there exists $\alpha > 0$ where \eqref{eq:data_pruning_finetuning} satisfies $\E\sbr{\exrisk\rbr{\thetab_S}} \lesssim \t{\b{variance}} + \t{\b{sketching error}} + \t{\b{bias}}$ with 
    \begin{enumerate*}[label=(\roman*)]
        \item $\t{\b{variance}} = \frac{\sigma^2}{n} \tr(\skmom{} (\skmom{S})^\dagger)$, 
        \item $\t{\b{sketching error}} = \frac{\sigma^2}{n} \frac{1}{m \gamma_S} \|\skmom{} (\skmom{S})^\dagger\|_2 \tr(\sqrep{})$, and
        \item $\t{\b{bias}} = \frac{1}{n} \|\skmom{} (\skmom{S})^\dagger\|_2 \tr(\sqrep{}) \|\thetab_*\|_2^2$.
    \end{enumerate*}

    If $S$ further satisfies $\skmom{} \aleq c_S \skmom{S}$ for some $c_S \ge \frac{n}{N}$, with $m = \max\{ \sqrt{{\tr\rbr{\sqrep{}}}/{\gamma_S}}, 11 \ol{r} \}$,
    \begin{align}\label{eq:sketchy_rmm_bound_simplified}
        \E\sbr{\exrisk\rbr{\thetab_S}} \lesssim \t{\b{variance}} + \t{\b{sketching error}} + \t{\b{bias}} \lesssim \frac{c_S}{n} \rbr{\sigma^2 m + \tr\rbr{\sqrep{}} \nbr{\thetab_*}_2^2}.
    \end{align}
\end{theorem}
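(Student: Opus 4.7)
The plan is to lift the unsketched variance--bias decomposition of \Cref{thm:linear_probing_high_dim_ridge} into the sketched form by choosing the subspace $\Scal$ carefully and then invoking standard Johnson--Lindenstrauss properties of the Gaussian embedding $\Gammab$. The natural choice is $\Scal = \range(\Gammab \Ub)$, where $\Ub \in \R^{m \times \ceil{1.1 \ol{r}}}$ is an orthonormal basis for the top-$\ceil{1.1 \ol{r}}$ eigenspace of $\skmom{S}$. Since $\ceil{1.1 \ol{r}} \le m$ under the hypothesis $m \ge 11 \ol{r}$, this is well-defined, and the spectral-gap assumption $\sval{\skmom{S}}{\ceil{1.1 \ol{r}}} \ge \gamma_S$ ensures that $\Pb_{\Scal} \sqrep{S} \Pb_{\Scal}$ is well-conditioned on $\Scal$, so that its pseudoinverse admits a controllable norm when lifted back through $\Gammab$.

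Next, I would translate each of the three quantities on the right-hand side of \Cref{thm:linear_probing_high_dim_ridge} into sketched form. For the variance, because $\Pb_\Scal$ factors through $\Gammab$, the trace $\tr(\sqrep{}(\Pb_\Scal \sqrep{S} \Pb_\Scal)^\pinv)$ decomposes as $\tr(\skmom{}(\skmom{S})^\pinv)$ plus a remainder driven by $\Gammab\Gammab^\top \ne \Ib_r$; this remainder is precisely the \emph{sketching error}, and can be controlled in operator norm by $\|\skmom{}(\skmom{S})^\pinv\|_2$ times a JLT residual whose expectation scales as $\tr(\sqrep{})/m$ by the second-moment property in \Cref{lem:gaussian_jl_second_moment}. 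For the bias, I would split $\sqrep{} = \tsvd{\sqrep{}}{\ol{r}} + (\sqrep{} - \tsvd{\sqrep{}}{\ol{r}})$: the tail contributes at most $\tr(\sqrep{})/N$ by \Cref{asm:low_intrinsic_dim}, while the head lies approximately in $\Scal$ by a standard subspace-embedding argument for $m \gtrsim \ol{r}$, incurring at most a multiplicative factor $\|\skmom{}(\skmom{S})^\pinv\|_2$.

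The $0.9$ probability over $\Gammab$ would then follow from Markov's inequality applied to the expectation of the sketching remainders; the factor $11$ in $m \ge 11 \ol{r}$ provides enough slack so that the expected approximation errors stay below a constant fraction of the main terms. The final simplification under $\skmom{} \aleq c_S \skmom{S}$ is purely algebraic: $\tr(\skmom{}(\skmom{S})^\pinv) \le c_S \, \rank(\skmom{}) \le c_S m$ and $\|\skmom{}(\skmom{S})^\pinv\|_2 \le c_S$, so with $m = \max\{\sqrt{\tr(\sqrep{})/\gamma_S},\, 11 \ol{r}\}$ the sketching error $\tfrac{c_S \sigma^2}{n\, m\, \gamma_S}\tr(\sqrep{})$ is absorbed into $\tfrac{c_S \sigma^2 m}{n}$, yielding \eqref{eq:sketchy_rmm_bound_simplified}.

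The main obstacle I expect is controlling the pseudoinverse $(\Pb_\Scal \sqrep{S} \Pb_\Scal)^\pinv$ cleanly: $\sqrep{S}$ has rank $n \gg m$ but many of its eigenvalues may lie arbitrarily close to zero, so one cannot naively apply JLT approximation to the inverse. The spectral-gap hypothesis $\sval{\skmom{S}}{\ceil{1.1 \ol{r}}} \ge \gamma_S$ is precisely what restricts $\Scal$ to the well-conditioned directions; the delicate step will be showing that, after lifting back to $\R^r$ via $\Gammab$, this restriction still yields a meaningful spectral bound on $\Pb_\Scal \sqrep{S} \Pb_\Scal$ without introducing extraneous $\|\Gammab\|_2$ factors that would spoil the $1/m$ rate appearing in the sketching error.
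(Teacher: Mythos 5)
Your high-level strategy is right — start from \Cref{thm:linear_probing_high_dim_ridge}, pick a $k$-dimensional subspace $\Scal$ tied to the top sketched directions, split variance and bias, and absorb JLT residuals — but the concrete choice of $\Scal$ in your first paragraph is wrong and breaks everything downstream.

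You set $\Scal = \range(\Gammab\Ub)$, where $\Ub$ spans the top-$\ceil{1.1\ol{r}}$ eigenspace of $\skmom{S}$. However, \Cref{thm:linear_probing_high_dim_ridge} requires $\Scal \subseteq \range(\sqrep{S}) = \range(\Gb_S^\top)$, which is an $n$-dimensional subspace of $\R^r$. Your $\Scal$ sits inside $\range(\Gammab)$, an $m$-dimensional Gaussian subspace. With $m + n < r$, these two subspaces intersect only at $\{\b0\}$ almost surely, so $\range(\Gammab\Ub) \not\subseteq \range(\sqrep{S})$; the hypothesis of the parent theorem fails, and $\Pb_\Scal \sqrep{S}\Pb_\Scal$ need not be well-posed on $\Scal$ even with the spectral-gap condition on $\skmom{S}$ (the condition controls $\Gammab^\top\sqrep{S}\Gammab$ in $\R^m$, not $\Pb_\Scal\sqrep{S}\Pb_\Scal$ in $\R^r$, and $\Gammab$ is far from an isometry). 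The paper circumvents exactly this by defining
\begin{align*}
\Pb_\Scal \dfeq \bigl(\tsvd{\wt\Gb_S}{k}^\dagger \Gb_S\bigr)^\dagger \bigl(\tsvd{\wt\Gb_S}{k}^\dagger \Gb_S\bigr) = \Gb_S^\dagger \tsvd{\wt\Gb_S}{k}\tsvd{\wt\Gb_S}{k}^\dagger \Gb_S,
\end{align*}
which pulls the sketched top-$k$ directions back through $\Gb_S^\dagger$ so that $\range(\Pb_\Scal) \subseteq \range(\Gb_S^\top) = \range(\sqrep{S})$ by construction.

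This choice is not a cosmetic detail: the rest of the paper's argument rests on identities that hold only for this $\Pb_\Scal$. In particular, $\rank(\sqrep{S}) = n$ gives $\Gb_S \Pb_\Scal \Gammab = \tsvd{\wt\Gb_S}{k}$, which lets the paper cast $\Gb(\Gb_S\Pb_\Scal)^\dagger$ as a least-squares solution and $\wt\Gb\tsvd{\wt\Gb_S}{k}^\dagger$ as its sketched counterpart; \Cref{lem:sketched_least_square_blocked} then delivers the variance bound with the $\tfrac{n}{m\gamma_S}\tr(\sqrep{}\Pb_\Scal^\perp)$ residual, and the conditioning $\|(\Gb_S\Pb_\Scal)^\dagger\|_2^2 \lesssim \tfrac{1}{m\gamma_S}$ is extracted via \Cref{lem:subgaussian_smallest_singular_value}. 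Likewise, the bias bound comes from an oblique-projector argument (\Cref{lem:low_rank_approximation}), where $\Pb_\Scal$ is rewritten through $\Mb_S = \Gb\Gb_S^\dagger\tsvd{\wt\Gb_S}{k}\tsvd{\wt\Gb_S}{k}^\dagger\Pib_S$ and its sketched analogue $\wt\Mb_S$, the identity $\|\Ib_N - \wt\Mb_S\|_2 = \|\wt\Mb_S\|_2$ gives the factor $\|\skmom{}\tsvd{\skmom{S}}{k}^\dagger\|_2$, and the randomized range-finder guarantee (\Cref{lem:randomized_range_finder_error}) supplies $\|\Gb - \tsvd{\Gb}{\ol{r}}\|_F^2$. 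Your sketch gestures at ``approximate matrix multiplication'' and ``a standard subspace-embedding argument'' for these two steps, but those tools don't produce the specific residual structure in the statement without the paper's particular $\Pb_\Scal$; the fact that the remainders come out with exactly $\|\skmom{}(\skmom{S})^\dagger\|_2$ as the multiplier and $\tfrac{1}{m\gamma_S}$ as the rate is a consequence of the least-squares/oblique-projector machinery, not of generic JLT concentration. Finally, the paper obtains the $1-\delta$ probability by a union bound over the JLT event, the subgaussian singular-value event, and the range-finder event, rather than the single Markov step you propose; either could work, but Markov alone won't give the $(1/2,\delta/2,k)$-JLT distortion bound needed inside \Cref{lem:sketched_least_square_blocked}. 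The purely algebraic final simplification ($\tr(\skmom{}(\skmom{S})^\dagger) \le c_S m$, $\|\skmom{}(\skmom{S})^\dagger\|_2 \le c_S$, optimizing $m$) you state correctly and matches the paper.
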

Comparing \eqref{eq:sketchy_rmm_bound_simplified} with \eqref{eq:linear_probing_high_dim_ridge_bound_simplified}, we observe that by controlling the variance with $\skmom{} \aleq c_S \skmom{S}$ in low dimension $m \asymp \ol{r} \ll r$, gradient sketching preserves the fast-rate generalization $O(m/n) = O(\ol{r}/n)$ up to constants. 
That is, gradient sketching implicitly finds a random subspace $\Scal \subseteq \range(\sqrep{S})$ (vide \eqref{eq:proj_subspace}) that satisfies the exploration assumption in \Cref{cor:linear_probing_high_dim_ridge}.
Meanwhile, the choice of sketching size $m$ balances the tradeoff between \b{variance} and \b{sketching error}: a larger $m$ reduces the sketching error at the cost of higher variance. Such tradeoff is optimized at $m = \sqrt{{\tr\rbr{\sqrep{}}}/{\gamma_S}}$.

\subsection{Control Variance via Moment Matching}\label{sec:rmm}
Given the intrinsic low-dimensional structure with small bias in \Cref{sec:gradient_sketching}, \Cref{thm:sketchy_rmm_informal} connects generalization to the variance controlled by the matching between $\skmom{}$ and $\skmom{S}$. Specifically, when the selected data $\coreset$ satisfies $\skmom{} \aleq c_S \skmom{S}$ for some $c_S \ge \frac{n}{N}$, we have $\tr(\skmom{} (\skmom{S})^\dagger) \le c_S m$ and $\|\skmom{} (\skmom{S})^\dagger\|_2 \le c_S$ upper bounded, leading to the fast-rate generalization in \eqref{eq:sketchy_rmm_bound_simplified}. 

\begin{algorithm}[!ht]
\caption{Sketchy Moment Matching (\rmm)}\label{alg:rmm}
\begin{algorithmic}[1]
    \STATE \textbf{Input:} $f^\phi\rbr{\cdot;\b0_r}$, $n \ll N$, $m < n$, $c_S \in [\frac{n}{N}, 1]$.

    \STATE Draw a (fast) Johnson-Lindenstrauss transform $\Gammab \in \R^{r \times m}$ (\Cref{rmk:gradient_sketching}). 
    \STATE Compute gradient sketching $\wt\Gb = \nabla_{\thetab}f^\phi\rbr{\Xb;\b0_r} \Gammab \in \R^{N \times m}$. (\Cref{rmk:efficiency_skmm})

    \STATE Compute the spectral decomposition of $\skmom{} = \frac{1}{N} \wt\Gb^\top \wt\Gb \ageq 0$: $\skmom{} = \Vb \Lambdab \Vb^\top$ where 
    \begin{enumerate}[label=(\alph*)]
        \item $\Vb = \sbr{\vb_1,\cdots,\vb_m} \in \R^{m \times m}$ consists of the orthonormal eigenvectors, and 
        \item $\Lambdab = \diag\rbr{\lambda_1,\cdots,\lambda_m}$ contains descending eigenvalues $\lambda_1 \ge \cdots \ge \lambda_m \ge 0$.
    \end{enumerate} 
    \STATE Initialize $\sb = \sbr{s_1,\cdots,s_N}$ with $s_i = \frac{1}{n}$ on $n$ uniformly sampled $i$'s and $s_i = 0$ elsewhere.
    \STATE Let $\diag(\sb) \in \R^{N \times N}$ be a diagonal matrix with $\sb$ on diagonal. Optimizing:
        \vspace{-.5em}
        \begin{align}\label{eq:rmm_high_dim}
        \begin{split}
            &\min_{\sb \in \Delta_N}~ \min_{\gammab = \sbr{\gamma_1,\cdots,\gamma_m} \in \R^m}\ \sum_{j=1}^{m} \rbr{\vb_j^\top \wt\Gb^\top \diag\rbr{\sb} \wt\Gb \vb_j - \gamma_{j} \cdot \lambda_j}^2 \\
            &\st \quad 0 \le s_i \le {1}/{n}\ \forall\ i \in [N], \quad 
            \gamma_{j} \ge {1}/{c_S}\ \forall\ j \in [m].
        \end{split}
        \end{align}
        \vspace{-.5em}
    \STATE \b{Output:} $S \subset [N]$ by sampling $n$ data from $\sb \in \Delta_N$ without replacement.
\end{algorithmic}
\end{algorithm}
While directly minimizing $\tr(\skmom{} (\skmom{S})^\dagger)$ involves integer programming and pseudoinverse, causing hard and numerically unstable optimization, $\skmom{} \aleq c_S \skmom{S}$ has a straightforward relaxation (vide \Cref{rmk:relaxation_moment_matching}), leading to the simple and stable moment matching objective \eqref{eq:rmm_high_dim} in \Cref{alg:rmm}.

\begin{remark}[Relaxing $\skmom{} \aleq c_S \skmom{S}$ to \eqref{eq:rmm_high_dim}]\label{rmk:relaxation_moment_matching}
    Given the spectral decomposition $\skmom{} = \Vb \Lambdab \Vb^\top$, $\skmom{} \aleq c_S \skmom{S}$ can be rewritten as $\Vb^\top (\frac{1}{n} \wt\Gb_S^\top \wt\Gb_S) \Vb \ageq \frac{1}{c_S} \Lambdab$, and \eqref{eq:rmm_high_dim} is a relaxation:
    \begin{enumerate*}[label=(\roman*)]
        \item instead of enforcing $\skmom{} \aleq c_S \skmom{S}$ strictly, constraints are only imposed on the diagonal\footnote{
            This is equivalent to assuming that $\skmom{}, \skmom{S}$ commute. While such assumption does not hold in general, it is a valuable heuristic whose effectiveness has been demonstrated in various domains~\citep{blaker2000minimax,lei2021near}. 
        }: $\vb_j^\top (\frac{1}{n} \wt\Gb_S^\top \wt\Gb_S) \vb_j \ge \lambda_j/c_S$, $j \in [m]$; and
        \item the selection of $S$ is relaxed to a weight vector $\sb \in \Delta_N$ with linear constraints $0 \le s_i \le {1}/{n}$. 
    \end{enumerate*} 
    Free of integer constraints and pseudoinverse, the quadratic data selection objective with linear constraints in \eqref{eq:rmm_high_dim} can be solved efficiently and stably via projected gradient descent.
\end{remark}

Alternative to the moment matching heuristic in \Cref{rmk:relaxation_moment_matching}, variance reduction by controlling $\skmom{} (\skmom{S})^\dagger$ in the low-dimensional subspace can be realized via various methods, including leverage score sampling~\citep{sarlos2006improved,drineas2008relative,li2013iterative,cohen2017input,alaoui2015fast} and V-optimal experimental design~\citep{wang2017computationally,allen2017near}. We provide brief discussions on these alternatives in \Cref{sec:alternative_moment_matching}.

\begin{remark}[$c_S$ controls strength of moment matching]\label{rmk:moment_matching}
    In \Cref{alg:rmm}, smaller $c_S$ enforces $\skmom{S}$ to exploit more information in $\skmom{}$, bringing lower variance and better generalization. While the lower bound $c_S \ge \frac{n}{N}$ could be tight (vide \Cref{rmk:lower_bound_cs}), in practice, the smallest feasible $c_S$ depends on the data distribution and tends to be larger (\eg, $c_S \approx 1$ in the experiments).
\end{remark} 

\begin{remark}[Computational efficiency of SkMM]\label{rmk:efficiency_skmm}
    SkMM is efficient in both memory and computation. Consider the two stages in \Cref{alg:rmm}:
    \begin{enumerate*}[label=(\roman*)]
        \item Gradient sketching can be computed in parallel with input-sparsity time and on the fly without storing the (potentially) high-dimensional gradients (vide \Cref{rmk:fast_jlt}).
        \item After gradient sketching, variance reduction via moment matching happens in the low dimension $m$, with a low memory footprint $O(Nm)$, taking $O(m^3)$ for the spectral decomposition and $O(Nm)$ per iteration for optimizing the moment matching objective \eqref{eq:rmm_high_dim}.
    \end{enumerate*}
\end{remark}

\section{Experiments}
\subsection{Synthetic High-dimensional Linear Probing}\label{sec:synthetic_exp}
To ground the theoretical insight on variance-bias tradeoff in high-dimensional finetuning, we simulate linear probing with a synthetic underdetermined ridge regression problem\footnote{
    Our experiment code is available at \href{https://github.com/Xiang-Pan/sketchy_moment_matching}{https://github.com/Xiang-Pan/sketchy\_moment\_matching}
}.

\begin{figure}[!ht]
    \centering
    \includegraphics[width=\textwidth]{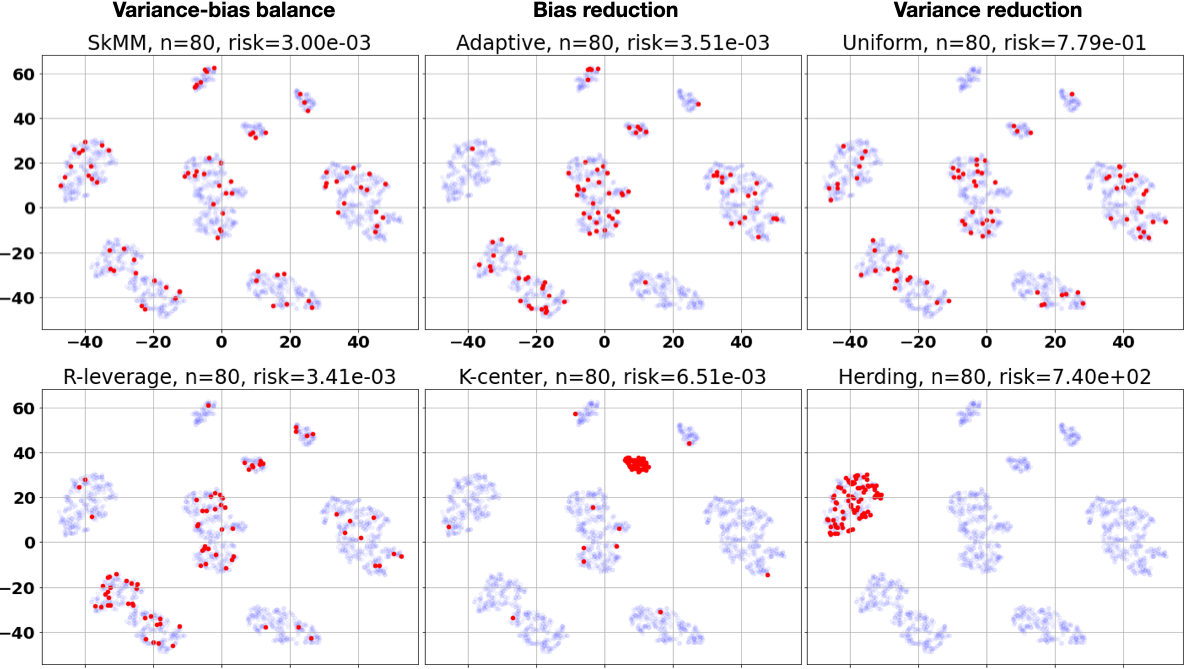}
    \caption{Selecting $n=80$ data (colored in red) from the GMM dataset. Intuitively, a coreset $\coreset$ with low bias contains at least one sample per cluster; whereas a low-variance $\coreset$ selects more data from clusters with larger variance. We recall from \Cref{thm:linear_probing_high_dim_ridge} that the variance-bias balance is essential for good generalization.}\label{fig:gmm_var_bias}
    \vspace{-1em}
\end{figure}

\paragraph{Setup.}
We consider a set of $N = 2000$ samples with high-dimensional pre-trained representations $\phi(\Xb) \in \R^{N \times r}$, $r=2400$, modeled by a Gaussian mixture model (GMM) consisting of $\ol{r} = 8$ well-separated clusters, each with random sizes and variances (vide \Cref{fig:gmm_var_bias}). Samples within each cluster share the same randomly generated label. We solve the ridge regression problem \eqref{eq:data_pruning_finetuning} over the selected coreset of $n$ samples with hyperparameter $\alpha$ tuning. The empirical risk is evaluated over the full dataset $\Lcal_{\Dcal}(\thetab_S) = \frac{1}{N} \nbr{\phi(\Xb)\thetab_S - \yb}_2^2$ (vide \Cref{apx:synthetic_data_details} for implementation details). 

\paragraph{Data selection.}
For \rmm (\Cref{alg:rmm}), we use a sketching dimension $m=4 \ol{r} = 32$ and set $c_S = 0.999$. We optimize \eqref{eq:rmm_high_dim} via Adam~\citep{kingma2014adam} with constraint projection under learning rate $10^{-7}$ for $10^4$ iterations and sample $S$ from $\sb \in \Delta_N$ with the lowest objective value.

We compare \rmm to representative unsupervised data selection methods for regression, including uniform, leverage score~\citep{sarlos2006improved,drineas2008relative,li2013iterative,cohen2017input,alaoui2015fast}, adaptive sampling~\citep{deshpande2006adaptive,chen2022randomly}, herding~\citep{welling2009herding,chen2012super}, and k-center greedy~\citep{sener2017active}. Specifically,
\begin{enumerate*}[label=(\roman*)]
    \item \texttt{\rmm}, truncated leverage score (\texttt{T-leverage}), and ridge leverage score sampling (\texttt{R-leverage}) can be viewed as different ways of variance-bias balancing;
    \item adaptive sampling (\texttt{Adaptive}) and k-center greedy (\texttt{K-center}) focus on bias reduction (\ie, providing good low-rank approximation/clustering for $\phi(\Xb)$); while
    \item \texttt{Herding} and uniform sampling (\texttt{Uniform}) reduce variance (vide \Cref{apx:baseline_details} for baseline details).
\end{enumerate*}

\begin{table}
    \centering
    \vspace{-1em}
    \caption{Empirical risk $\Lcal_{\Dcal}(\thetab_S)$ on the GMM dataset at various $n$, under the same hyperparameter tuning where ridge regression over the full dataset $\Dcal$ with $N = 2000$ samples achieves $\Lcal_{\Dcal}(\thetab_{[N]})=$ \b{2.95e-3}. For methods involving sampling, results are reported over $8$ random seeds.}
    \label{tab:gmm_var_bias} 
    \begin{adjustbox}{width=\textwidth}
    \begin{tabular}{c|ccccccccc}
        \toprule
        $n$ & 48 & 64 & 80 & 120 & 400 & 800 & 1600 \\
        \midrule
        \texttt{Herding} & 7.40e+2 & 7.40e+2 & 7.40e+2 & 7.40e+2 & 7.38e+2 & 1.17e+2 & \b{2.95e-3} \\
        \texttt{Uniform} & (1.14 $\pm$ 2.71)e-1 & (1.01 $\pm$ 2.75)e-1 & (3.44 $\pm$ 0.29)e-3 & (3.13 $\pm$ 0.14)e{-3} & (2.99 $\pm$ 0.03)e-3 & \b{(2.96 $\pm$ 0.01)e-3} & \b{(2.95 $\pm$ 0.00)e-3}  \\
        \texttt{K-center} & (1.23 $\pm$ 0.40)e-2 & (9.53 $\pm$ 0.60)e-2 & (1.12 $\pm$ 0.45)e-2 & (2.73 $\pm$ 1.81)e-2 & (5.93 $\pm$ 4.80)e-2 & (1.18 $\pm$ 0.64)e-1 & (1.13 $\pm$ 0.70)e+0\\
        \texttt{Adaptive} & (3.81 $\pm$ 0.65)e-3 & (3.79 $\pm$ 1.37)e-3 & (4.83 $\pm$ 1.90)e-3 & (4.03 $\pm$ 1.35)e{-3} & (3.40 $\pm$ 0.67)e-3 & (7.34 $\pm$ 3.97)e-3 & (3.19 $\pm$ 0.16)e-3  \\
        \texttt{T-leverage} & (0.99 $\pm$ 1.65)e-2 & (3.63 $\pm$ 0.49)e-3 & (3.30 $\pm$ 0.30)e-3 & (3.24 $\pm$ 0.14)e{-3} & \b{(2.98 $\pm$ 0.01)e-3} & \b{(2.96 $\pm$ 0.01)e-3} & \b{(2.95 $\pm$ 0.00)e-3} \\
        \texttt{R-leverage} & (4.08 $\pm$ 1.58)e-3 & (3.48 $\pm$ 0.43)e-3 & (3.25 $\pm$ 0.31)e-3 & (3.09 $\pm$ 0.06)e{-3} & (3.00 $\pm$ 0.02)e-3 & (2.97 $\pm$ 0.01)e-3 & \b{(2.95 $\pm$ 0.00)e-3} \\
        \midrule
        \texttt{\rmm} & \b{(3.54 $\pm$ 0.51)e{-3}} & \b{(3.31 $\pm$ 0.15)e{-3}} & \b{(3.12 $\pm$ 0.07)e{-3}} & \b{(3.07 $\pm$ 0.08)e{-3}} & \b{(2.98 $\pm$ 0.02)e{-3}} & \b{(2.96 $\pm$ 0.01)e{-3}} & \b{(2.95 $\pm$ 0.00)e{-3}} \\
        \bottomrule
    \end{tabular}
    \end{adjustbox}
\end{table} 

We observe from \Cref{fig:gmm_var_bias} and \Cref{tab:gmm_var_bias} that balancing the variance-bias tradeoff is crucial for the generalization of data selection in high dimensions. In particular, \rmm achieves the best empirical risk across different coreset sizes $n$, especially when $n$ is small. While as $n/N \to 1$, uniform sampling provides a strong baseline, coinciding with common empirical observations~\citep{guo2022deepcore}.

    
\subsection{Experiments on Regression Tasks}\label{sec:reg_experiment}
We further validate the effectiveness of SkMM on UTKFace \citep{zhang2017age}, a real-world regression dataset for age estimation. We finetune a randomly initialized classification head on top of the feature representation of CLIP \citep{radford2021learning} with Adam \cite{kingma2014adam} and learning rate $10^{-1}$. We also retain those baselines from the above synthetic setup in this experiment.
\begin{table}[!ht]
    \centering
    \caption{Mean Absolute Error (the lower the better) on  UTKFace with a linear regressor trained on top of frozen features from a pre-trained CLIP (ViT-B/32). We use the \textbf{bold} font to indicate the best method for each coreset size. \label{tab:utk_clip}}
    \adjustbox{width=\textwidth}{\begin{tabular}{llccccccc}
    \toprule
    Method & 100 & 200 & 500 & 1000 & 2000 & 3000  \\
    \midrule
    {Uniform Sampling}        &  $10.55\pm3.09$   &  $ 8.94\pm 3.48$    &    $6.09 \pm 0.42$     &    $4.70 \pm 0.23$     & 3.92  $ \pm 0.16 $      &     $ 3.68\pm 0.15$    &      &      \\

    {Adaptive}    &   $ 6.02 \pm 0.53$     &    $ 4.75 \pm 0.14$     &    $ 4.40 \pm 0.14 $     &     N/A    &    N/A    &   N/A   &      &      \\

    {Greedy}                    &   $10.40 \pm 1.21$     &    $ 7.56\pm 0.18$     &    $6.43 \pm 0.09$     &     $5.51 \pm 0.19$    &     $4.87 \pm 0.03$    &  $ 4.37\pm 0.08$    &      &      \\

    {Herding}                 &   $ 17.57 \pm 0.01$     &    $ 13.41\pm0.01 $     &    $8.47 \pm 0.01 $     &     $ 5.79 \pm 0.01 $    &     $ 4.19 \pm 0.01 $    &  $3.53 \pm 0.01$    &      &      \\
    R-leverage              &   $ \mathbf{5.44\pm 0.01}$     &    $ 4.79\pm 0.02$     &    $ 4.36\pm 0.01$     &     $ \mathbf{3.86\pm0.01}$    &     $ {3.61\pm 0.01}$    &  $ 3.53\pm 0.04$    &      &      \\

    {SkMM}                 &   $ 5.57\pm 0.38$     &    $ \mathbf{4.70\pm 0.05}$     &    $ \mathbf{4.23\pm 0.20}$     &     $ 4.02\pm 0.11$    &     $ \mathbf{3.54\pm 0.19}$    &  $ \mathbf{3.25\pm 0.10}$    &      &      \\

    \bottomrule
\end{tabular}
    }
\end{table}

The results for linear probing are provided in Table \ref{tab:utk_clip}, where our method remarkably outperforms comparative baselines on UTKFace. For every coreset size, SkMM improves the performance of CLIP compared to uniform sampling. Especially for small coreset size $n=100, 200$, it achieves a Mean Absolute Error reduction of approximately $50\%$.

\subsection{Experiments on Image Classification Tasks}\label{sec:experiment}

While our analysis focuses on data selection for finetuning regression models, a natural question is whether the idea of \rmm applies to broader scopes. To answer this, we extend our empirical investigation to classification. 
In particular, we consider an imbalanced classification task: StanfordCars~\cite{krause2013collecting} with 196 classes, 8144 training samples, and 8041 testing samples where the classes are highly imbalanced with training sample sizes ranging from 24 to 68.

\begin{table}
    \centering
    \vspace{-1em}
    \caption{Accuracy and F1 score (\%) of LP over CLIP on StanfordCars}\label{tab:stanfordcars_clip_ft1}
    \begin{adjustbox}{width=.9\textwidth}
        \begin{tabular}{ccccccc}
    \toprule
     & $n$ & 2000 & 2500 & 3000 & 3500 & 4000 \\
    \midrule
    \multirow[c]{2}{*}{Uniform Sampling} & Acc & 67.63 $\pm$ 0.17 & 70.59 $\pm$ 0.19 & 72.49 $\pm$ 0.19 & 74.16 $\pm$ 0.22 & 75.40 $\pm$ 0.16 \\
     & F1 & 64.54 $\pm$ 0.18 & 67.79 $\pm$ 0.23 & 70.00 $\pm$ 0.20 & 71.77 $\pm$ 0.23 & 73.14 $\pm$ 0.12 \\
    \midrule
    \multirow[c]{2}{*}{Herding~\cite{welling2009herding}} & Acc & 67.22 $\pm$ 0.16 & 71.02 $\pm$ 0.13 & 73.17 $\pm$ 0.22 & 74.64 $\pm$ 0.18 & 75.71 $\pm$ 0.29 \\
     & F1 & 64.07 $\pm$ 0.23 & 68.28 $\pm$ 0.15 & 70.64 $\pm$ 0.28 & 72.22 $\pm$ 0.26 & 73.26 $\pm$ 0.39 \\
    \midrule
    \multirow[c]{2}{*}{Contextual Diversity~\cite{agarwal2020contextual}} & Acc & 67.64 $\pm$ 0.13 & 70.82 $\pm$ 0.23 & 72.66 $\pm$ 0.12 & 74.46 $\pm$ 0.17 & 75.77 $\pm$ 0.12 \\
     & F1 & 64.51 $\pm$ 0.17 & 68.18 $\pm$ 0.25 & 70.05 $\pm$ 0.11 & 72.13 $\pm$ 0.15 & 73.35 $\pm$ 0.07 \\
    \midrule
    \multirow[c]{2}{*}{Glister~\cite{killamsetty2021glister}} & Acc & 67.60 $\pm$ 0.24 & 70.85 $\pm$ 0.27 & 73.07 $\pm$ 0.26 & 74.63 $\pm$ 0.21 & 76.00 $\pm$ 0.20 \\
     & F1 & 64.50 $\pm$ 0.34 & 68.07 $\pm$ 0.38 & 70.47 $\pm$ 0.35 & 72.18 $\pm$ 0.25 & 73.69 $\pm$ 0.24 \\
    \midrule
    \multirow[c]{2}{*}{GraNd~\cite{paul2021deep}} & Acc & 67.27 $\pm$ 0.07 & 70.38 $\pm$ 0.07 & 72.56 $\pm$ 0.05 & 74.67 $\pm$ 0.06 & 75.77 $\pm$ 0.12 \\
     & F1 & 64.04 $\pm$ 0.09 & 67.48 $\pm$ 0.09 & 69.81 $\pm$ 0.08 & 72.13 $\pm$ 0.05 & 73.44 $\pm$ 0.13 \\
    \midrule
    \multirow[c]{2}{*}{Forgetting~\cite{toneva2018empirical}} & Acc & 67.59 $\pm$ 0.10 & 70.99 $\pm$ 0.05 & 72.54 $\pm$ 0.07 & 74.81 $\pm$ 0.05 & 75.74 $\pm$ 0.01 \\
     & F1 & 64.85 $\pm$ 0.13 & 68.53 $\pm$ 0.07 & 70.30 $\pm$ 0.05 & 72.59 $\pm$ 0.04 & 73.74 $\pm$ 0.02 \\
    \midrule
    \multirow[c]{2}{*}{DeepFool~\cite{moosavi2016deepfool}} & Acc & 67.77 $\pm$ 0.29 & 70.73 $\pm$ 0.22 & 73.24 $\pm$ 0.22 & 74.57 $\pm$ 0.23 & 75.71 $\pm$ 0.15 \\
     & F1 & 64.16 $\pm$ 0.68 & 68.49 $\pm$ 0.53 & 70.93 $\pm$ 0.32 & 72.44 $\pm$ 0.27 & 73.79 $\pm$ 0.15 \\
    \midrule
    \multirow[c]{2}{*}{Entropy~\cite{coleman2019selection}} & Acc & 67.95 $\pm$ 0.11 & 71.00 $\pm$ 0.10 & 73.28 $\pm$ 0.10 & 75.02 $\pm$ 0.08 & 75.82 $\pm$ 0.06 \\
     & F1 & 64.55 $\pm$ 0.10 & 67.95 $\pm$ 0.12 & 70.68 $\pm$ 0.12 & 72.46 $\pm$ 0.12 & 73.29 $\pm$ 0.04 \\
    \midrule
    \multirow[c]{2}{*}{Margin~\cite{coleman2019selection}} & Acc & 67.53 $\pm$ 0.14 & 71.19 $\pm$ 0.09 & 73.09 $\pm$ 0.14 & 74.66 $\pm$ 0.11 & 75.57 $\pm$ 0.13 \\
     & F1 & 64.16 $\pm$ 0.15 & 68.33 $\pm$ 0.14 & 70.37 $\pm$ 0.17 & 72.03 $\pm$ 0.11 & 73.14 $\pm$ 0.20 \\
    \midrule
    \multirow[c]{2}{*}{Least Confidence~\cite{coleman2019selection}} & Acc & 67.68 $\pm$ 0.11 & 70.99 $\pm$ 0.14 & 73.04 $\pm$ 0.05 & 74.65 $\pm$ 0.09 & 75.58 $\pm$ 0.08 \\
     & F1 & 64.09 $\pm$ 0.20 & 68.03 $\pm$ 0.20 & 70.30 $\pm$ 0.07 & 72.02 $\pm$ 0.10 & 73.15 $\pm$ 0.12 \\
    \midrule
    \multirow[c]{2}{*}{SkMM-LP} & Acc & \textbf{68.27 $\pm$ 0.03} & \textbf{71.53 $\pm$ 0.05} & \textbf{73.61 $\pm$ 0.02} & \textbf{75.12 $\pm$ 0.01} & \textbf{76.34 $\pm$ 0.02} \\
     & F1 & \textbf{65.29 $\pm$ 0.03} & \textbf{68.75 $\pm$ 0.06} & \textbf{71.14 $\pm$ 0.03} & \textbf{72.64 $\pm$ 0.02} & \textbf{74.02 $\pm$ 0.10} \\
    \bottomrule
\end{tabular}
    \end{adjustbox}
    \vspace{-1em}
\end{table}

\paragraph{Finetuning.}
We consider two common ways of finetuning: 
\begin{enumerate*}[label=(\roman*)]
    \item linear probing (LP) over the last layer and 
    \item funetuning (FT) over the last few layers,
\end{enumerate*}
covering both the low- (\ie, $n \ge r$ for LP) and high-dimensional (\ie, $r > n$ for FT) settings.
For LP, we learn the last layer over the embeddings from a CLIP-pretrained ViT-B/32~\citep{radford2021learning} with a learning rate of $10^{-1}$.
For FT\footnote{
    We notice that finetuning the last few layers of strong pretrained models like CLIP can distort the features and hurt the performance, as studied in \cite{kumar2022fine}. Therefore, we stay with a weaker pretrained model for finetuning.
}, we finetuning the last two layers of an ImageNet-pretrained ResNet18~\citep{he2016deep} with a learning rate of $10^{-2}$.
In both settings, we optimize via Adam for 50 epochs. Due to space limit constraints, detailed results for fine-tuning are deferred to the appendix.

\paragraph{Data selection.}
For \rmm-LP, the gradients (of the last layer) are given by the pretrained features from CLIP. For \rmm-FT, the gradients (of the last two layers) are calculated based on a random classification head.
We tune the sketching dimension $m \in \{32,64,128,256,512\}$ and the lower bound for slackness variables $c_S \in \{0.6,0.7,0.8,0.9\}$. 
Within suitable ranges, smaller $m$ and larger $c_S$ lead to better performance in the low data regime. Intuitively, smaller $m$ encourages variance reduction in a more compressed subspace, and larger $c_s$ leads to easier optimization.

We compare \rmm to various unsupervised and (weakly) supervised data selection methods for classification, including 
uniform sampling, herding~\cite{welling2009herding}, Contextual Diversity~\cite{agarwal2020contextual}, Glister~\cite{killamsetty2021glister}, GraNd~\cite{paul2021deep}, Forgetting~\cite{toneva2018empirical}, DeepFool~\cite{moosavi2016deepfool}, as well as three uncertainty-based methods, Entropy, Margin, and Least Confidence~\cite{coleman2019selection}.

\paragraph{Observations.}
We first observe that for both LP (\Cref{tab:stanfordcars_clip_ft1}) and FT (\Cref{tab:stanfordcars_resnet18_ft2}), \rmm achieves competitive finetuning accuracy on StanfordCars.
Since \rmm is an unsupervised process agnostic of true class sizes, the appealing performance of \rmm on the imbalanced StanfordCars dataset echoes the ability of \rmm to handle data selection among clusters of various sizes through variance-bias balance (\cf synthetic experiments in \Cref{fig:gmm_var_bias}).
Meanwhile, for LP in the low-dimensional setting (\Cref{tab:stanfordcars_clip_ft1}), uniform sampling provides a surprisingly strong baseline. This coincides with the theoretical insight from \Cref{pro:linear_probing_sampled} and the empirical observations in \cite{guo2022deepcore}.

\section{Discussion, Limitations, and Future Directions}\label{sec:discussion}

We investigated data selection for finetuning in both low and high dimensions from a theoretical perspective. Beyond variance reduction in low dimension, our analysis revealed the \emph{variance-bias tradeoff in data selection for high-dimensional finetuning with low intrinsic dimension $\ol{r}$}, balancing which led to a \emph{fast-rate generalization $O(\ol{r}/n)$}. For efficient control of such variance-bias tradeoff in practice, we introduced \rmm that first explores the high-dimensional parameter space via \emph{gradient sketching} and then exploits the resulting low-dimensional subspace via \emph{moment matching}. Theoretically, we showed that \emph{the low-dimensional subspace from gradient sketching preserves the fast-rate generalization}. Moreover, we ground the theoretical insight on balancing the variance-bias tradeoff via synthetic experiments, while demonstrating the effectiveness of \rmm for finetuning real vision tasks.

In this work, we focus only on moment matching via optimization inspired by the analysis for variance reduction after gradient sketching. Nevertheless, there is a remarkable variety of existing low-dimensional data selection strategies (\eg, via greedy selection or sampling) that could potentially be extended to high dimensions leveraging sketching as an efficient pre-processing step. In linear algebra, sketching has been widely studied for accelerating, as well as stabilizing, large-scale low-rank approximations and linear solvers. However, the intuitions and theories there may or may not be directly applicable to the statistical learning regime. In light of the high-dimensional nature of deep learning where sketching brings an effective remedy, we hope that providing a rigorous generalization analysis for sketching in data selection would make a step toward bridging the classical wisdom of sketching and the analogous challenges in modern learning problems.

\section*{Acknowledgments}
The authors wish to thank Yunzhen Feng, Julia Kempe, and Christopher Musco for insightful discussions. QL was partially supported by the NYU Research Catalyst Prize and the Department of Energy under ASCR Award DE-SC0024721. YD was supported by the NYU Courant Instructorship.
\bibliographystyle{unsrt}
\bibliography{ref}

\begin{thebibliography}{10}

\bibitem{guo2022deepcore}
Chengcheng Guo, Bo~Zhao, and Yanbing Bai.
\newblock Deepcore: A comprehensive library for coreset selection in deep learning.
\newblock In {\em International Conference on Database and Expert Systems Applications}, pages 181--195. Springer, 2022.

\bibitem{albalak2024survey}
Alon Albalak, Yanai Elazar, Sang~Michael Xie, Shayne Longpre, Nathan Lambert, Xinyi Wang, Niklas Muennighoff, Bairu Hou, Liangming Pan, Haewon Jeong, et~al.
\newblock A survey on data selection for language models.
\newblock {\em arXiv preprint arXiv:2402.16827}, 2024.

\bibitem{sorscher2022beyond}
Ben Sorscher, Robert Geirhos, Shashank Shekhar, Surya Ganguli, and Ari Morcos.
\newblock Beyond neural scaling laws: beating power law scaling via data pruning.
\newblock {\em Advances in Neural Information Processing Systems}, 35:19523--19536, 2022.

\bibitem{yang2022dataset}
Shuo Yang, Zeke Xie, Hanyu Peng, Min Xu, Mingming Sun, and Ping Li.
\newblock Dataset pruning: Reducing training data by examining generalization influence.
\newblock {\em arXiv preprint arXiv:2205.09329}, 2022.

\bibitem{zheng2022coverage}
Haizhong Zheng, Rui Liu, Fan Lai, and Atul Prakash.
\newblock Coverage-centric coreset selection for high pruning rates.
\newblock {\em arXiv preprint arXiv:2210.15809}, 2022.

\bibitem{xia2022moderate}
Xiaobo Xia, Jiale Liu, Jun Yu, Xu~Shen, Bo~Han, and Tongliang Liu.
\newblock Moderate coreset: A universal method of data selection for real-world data-efficient deep learning.
\newblock In {\em The Eleventh International Conference on Learning Representations}, 2022.

\bibitem{mirzasoleiman2020coresets}
Baharan Mirzasoleiman, Jeff Bilmes, and Jure Leskovec.
\newblock Coresets for data-efficient training of machine learning models.
\newblock In {\em International Conference on Machine Learning}, pages 6950--6960. PMLR, 2020.

\bibitem{borsos2020coresets}
Zal{\'a}n Borsos, Mojmir Mutny, and Andreas Krause.
\newblock Coresets via bilevel optimization for continual learning and streaming.
\newblock {\em Advances in neural information processing systems}, 33:14879--14890, 2020.

\bibitem{killamsetty2021retrieve}
Krishnateja Killamsetty, Xujiang Zhao, Feng Chen, and Rishabh Iyer.
\newblock Retrieve: Coreset selection for efficient and robust semi-supervised learning.
\newblock {\em Advances in neural information processing systems}, 34:14488--14501, 2021.

\bibitem{schuhmann2022laion}
Christoph Schuhmann, Romain Beaumont, Richard Vencu, Cade Gordon, Ross Wightman, Mehdi Cherti, Theo Coombes, Aarush Katta, Clayton Mullis, Mitchell Wortsman, et~al.
\newblock Laion-5b: An open large-scale dataset for training next generation image-text models.
\newblock {\em Advances in Neural Information Processing Systems}, 35:25278--25294, 2022.

\bibitem{gadre2024datacomp}
Samir~Yitzhak Gadre, Gabriel Ilharco, Alex Fang, Jonathan Hayase, Georgios Smyrnis, Thao Nguyen, Ryan Marten, Mitchell Wortsman, Dhruba Ghosh, Jieyu Zhang, et~al.
\newblock Datacomp: In search of the next generation of multimodal datasets.
\newblock {\em Advances in Neural Information Processing Systems}, 36, 2024.

\bibitem{wang2024variance}
Yiping Wang, Yifang Chen, Wendan Yan, Kevin Jamieson, and Simon~Shaolei Du.
\newblock Variance alignment score: A simple but tough-to-beat data selection method for multimodal contrastive learning.
\newblock {\em arXiv preprint arXiv:2402.02055}, 2024.

\bibitem{joshi2024data}
Siddharth Joshi, Arnav Jain, Ali Payani, and Baharan Mirzasoleiman.
\newblock Data-efficient contrastive language-image pretraining: Prioritizing data quality over quantity.
\newblock {\em arXiv preprint arXiv:2403.12267}, 2024.

\bibitem{aghajanyan2020intrinsic}
Armen Aghajanyan, Luke Zettlemoyer, and Sonal Gupta.
\newblock Intrinsic dimensionality explains the effectiveness of language model fine-tuning.
\newblock {\em arXiv preprint arXiv:2012.13255}, 2020.

\bibitem{goyal2024scaling}
Sachin Goyal, Pratyush Maini, Zachary~C Lipton, Aditi Raghunathan, and J~Zico Kolter.
\newblock Scaling laws for data filtering--data curation cannot be compute agnostic.
\newblock {\em arXiv preprint arXiv:2404.07177}, 2024.

\bibitem{xia2024less}
Mengzhou Xia, Sadhika Malladi, Suchin Gururangan, Sanjeev Arora, and Danqi Chen.
\newblock Less: Selecting influential data for targeted instruction tuning.
\newblock {\em arXiv preprint arXiv:2402.04333}, 2024.

\bibitem{woodruff2014sketching}
David~P Woodruff et~al.
\newblock Sketching as a tool for numerical linear algebra.
\newblock {\em Foundations and Trends{\textregistered} in Theoretical Computer Science}, 10(1--2):1--157, 2014.

\bibitem{sarlos2006improved}
Tamas Sarlos.
\newblock Improved approximation algorithms for large matrices via random projections.
\newblock In {\em 2006 47th annual IEEE symposium on foundations of computer science (FOCS'06)}, pages 143--152. IEEE, 2006.

\bibitem{alaoui2015fast}
Ahmed Alaoui and Michael~W Mahoney.
\newblock Fast randomized kernel ridge regression with statistical guarantees.
\newblock {\em Advances in neural information processing systems}, 28, 2015.

\bibitem{raskutti2016statistical}
Garvesh Raskutti and Michael~W Mahoney.
\newblock A statistical perspective on randomized sketching for ordinary least-squares.
\newblock {\em Journal of Machine Learning Research}, 17(213):1--31, 2016.

\bibitem{chen2019active}
Xue Chen and Eric Price.
\newblock Active regression via linear-sample sparsification.
\newblock In {\em Conference on Learning Theory}, pages 663--695. PMLR, 2019.

\bibitem{larsen2022sketching}
Brett~W Larsen and Tamara~G Kolda.
\newblock Sketching matrix least squares via leverage scores estimates.
\newblock {\em arXiv preprint arXiv:2201.10638}, 2022.

\bibitem{derezinski2022unbiased}
Micha{\l} Derezi{\'n}ski, Manfred~K Warmuth, and Daniel Hsu.
\newblock Unbiased estimators for random design regression.
\newblock {\em Journal of Machine Learning Research}, 23(167):1--46, 2022.

\bibitem{shimizu2023improved}
Atsushi Shimizu, Xiaoou Cheng, Christopher Musco, and Jonathan Weare.
\newblock Improved active learning via dependent leverage score sampling.
\newblock {\em arXiv preprint arXiv:2310.04966}, 2023.

\bibitem{chaloner1995bayesian}
Kathryn Chaloner and Isabella Verdinelli.
\newblock Bayesian experimental design: A review.
\newblock {\em Statistical science}, pages 273--304, 1995.

\bibitem{pukelsheim2006optimal}
Friedrich Pukelsheim.
\newblock {\em Optimal design of experiments}.
\newblock SIAM, 2006.

\bibitem{fedorov2013theory}
Valerii~Vadimovich Fedorov.
\newblock {\em Theory of optimal experiments}.
\newblock Elsevier, 2013.

\bibitem{wang2017computationally}
Yining Wang, Adams~Wei Yu, and Aarti Singh.
\newblock On computationally tractable selection of experiments in measurement-constrained regression models.
\newblock {\em Journal of Machine Learning Research}, 18(143):1--41, 2017.

\bibitem{allen2017near}
Zeyuan Allen-Zhu, Yuanzhi Li, Aarti Singh, and Yining Wang.
\newblock Near-optimal discrete optimization for experimental design: A regret minimization approach.
\newblock {\em arXiv preprint arXiv:1711.05174}, 2017.

\bibitem{udell2019big}
Madeleine Udell and Alex Townsend.
\newblock Why are big data matrices approximately low rank?
\newblock {\em SIAM Journal on Mathematics of Data Science}, 1(1):144--160, 2019.

\bibitem{johnson1984extensions}
William~B Johnson.
\newblock Extensions of lipshitz mapping into hilbert space.
\newblock In {\em Conference modern analysis and probability, 1984}, pages 189--206, 1984.

\bibitem{halko2011finding}
Nathan Halko, Per-Gunnar Martinsson, and Joel~A Tropp.
\newblock Finding structure with randomness: Probabilistic algorithms for constructing approximate matrix decompositions.
\newblock {\em SIAM review}, 53(2):217--288, 2011.

\bibitem{martinsson2020randomized}
Per-Gunnar Martinsson and Joel~A Tropp.
\newblock Randomized numerical linear algebra: Foundations and algorithms.
\newblock {\em Acta Numerica}, 29:403--572, 2020.

\bibitem{park2023trak}
Sung~Min Park, Kristian Georgiev, Andrew Ilyas, Guillaume Leclerc, and Aleksander Madry.
\newblock Trak: Attributing model behavior at scale.
\newblock {\em arXiv preprint arXiv:2303.14186}, 2023.

\bibitem{vcerny2012two}
Michal {\v{C}}ern{\`y} and Milan Hlad{\'\i}k.
\newblock Two complexity results on c-optimality in experimental design.
\newblock {\em Computational Optimization and Applications}, 51(3):1397--1408, 2012.

\bibitem{blaker2000minimax}
Helge Blaker.
\newblock Minimax estimation in linear regression under restrictions.
\newblock {\em Journal of statistical planning and inference}, 90(1):35--55, 2000.

\bibitem{lei2021near}
Qi~Lei, Wei Hu, and Jason Lee.
\newblock Near-optimal linear regression under distribution shift.
\newblock In {\em International Conference on Machine Learning}, pages 6164--6174. PMLR, 2021.

\bibitem{mahoney2009cur}
Michael~W Mahoney and Petros Drineas.
\newblock Cur matrix decompositions for improved data analysis.
\newblock {\em Proceedings of the National Academy of Sciences}, 106(3):697--702, 2009.

\bibitem{deshpande2006matrix}
Amit Deshpande, Luis Rademacher, Santosh~S Vempala, and Grant Wang.
\newblock Matrix approximation and projective clustering via volume sampling.
\newblock {\em Theory of Computing}, 2(1):225--247, 2006.

\bibitem{voronin2017efficient}
Sergey Voronin and Per-Gunnar Martinsson.
\newblock Efficient algorithms for cur and interpolative matrix decompositions.
\newblock {\em Advances in Computational Mathematics}, 43:495--516, 2017.

\bibitem{derezinski2021determinantal}
Micha{\l} Derezinski and Michael~W Mahoney.
\newblock Determinantal point processes in randomized numerical linear algebra.
\newblock {\em Notices of the American Mathematical Society}, 68(1):34--45, 2021.

\bibitem{derezinski2020improved}
Michal Derezinski, Rajiv Khanna, and Michael~W Mahoney.
\newblock Improved guarantees and a multiple-descent curve for column subset selection and the nystrom method.
\newblock {\em Advances in Neural Information Processing Systems}, 33:4953--4964, 2020.

\bibitem{dong2023simpler}
Yijun Dong and Per-Gunnar Martinsson.
\newblock Simpler is better: a comparative study of randomized pivoting algorithms for cur and interpolative decompositions.
\newblock {\em Advances in Computational Mathematics}, 49(4):66, 2023.

\bibitem{chen2022randomly}
Yifan Chen, Ethan~N Epperly, Joel~A Tropp, and Robert~J Webber.
\newblock Randomly pivoted cholesky: Practical approximation of a kernel matrix with few entry evaluations.
\newblock {\em arXiv preprint arXiv:2207.06503}, 2022.

\bibitem{dong2023robust}
Yijun Dong, Chao Chen, Per-Gunnar Martinsson, and Katherine Pearce.
\newblock Robust blockwise random pivoting: Fast and accurate adaptive interpolative decomposition.
\newblock {\em arXiv preprint arXiv:2309.16002}, 2023.

\bibitem{pearce2023adaptive}
Katherine~J Pearce, Chao Chen, Yijun Dong, and Per-Gunnar Martinsson.
\newblock Adaptive parallelizable algorithms for interpolative decompositions via partially pivoted lu.
\newblock {\em arXiv preprint arXiv:2310.09417}, 2023.

\bibitem{spielman2008graph}
Daniel~A Spielman and Nikhil Srivastava.
\newblock Graph sparsification by effective resistances.
\newblock In {\em Proceedings of the fortieth annual ACM symposium on Theory of computing}, pages 563--568, 2008.

\bibitem{yu2006active}
Kai Yu, Jinbo Bi, and Volker Tresp.
\newblock Active learning via transductive experimental design.
\newblock In {\em Proceedings of the 23rd international conference on Machine learning}, pages 1081--1088, 2006.

\bibitem{shoham2023experimental}
Neta Shoham and Haim Avron.
\newblock Experimental design for overparameterized learning with application to single shot deep active learning.
\newblock {\em IEEE Transactions on Pattern Analysis and Machine Intelligence}, 45(10):11766--11777, 2023.

\bibitem{radford2021learning}
Alec Radford, Jong~Wook Kim, Chris Hallacy, Aditya Ramesh, Gabriel Goh, Sandhini Agarwal, Girish Sastry, Amanda Askell, Pamela Mishkin, Jack Clark, et~al.
\newblock Learning transferable visual models from natural language supervision.
\newblock In {\em International conference on machine learning}, pages 8748--8763. PMLR, 2021.

\bibitem{welling2009herding}
Max Welling.
\newblock Herding dynamical weights to learn.
\newblock In {\em Proceedings of the 26th annual international conference on machine learning}, pages 1121--1128, 2009.

\bibitem{chen2012super}
Yutian Chen, Max Welling, and Alex Smola.
\newblock Super-samples from kernel herding.
\newblock {\em arXiv preprint arXiv:1203.3472}, 2012.

\bibitem{sener2017active}
Ozan Sener and Silvio Savarese.
\newblock Active learning for convolutional neural networks: A core-set approach.
\newblock {\em arXiv preprint arXiv:1708.00489}, 2017.

\bibitem{chatterjee1986influential}
Samprit Chatterjee and Ali~S Hadi.
\newblock Influential observations, high leverage points, and outliers in linear regression.
\newblock {\em Statistical science}, pages 379--393, 1986.

\bibitem{drineas2011faster}
Petros Drineas, Michael~W Mahoney, Shan Muthukrishnan, and Tam{\'a}s Sarl{\'o}s.
\newblock Faster least squares approximation.
\newblock {\em Numerische mathematik}, 117(2):219--249, 2011.

\bibitem{deshpande2006adaptive}
Amit Deshpande and Santosh Vempala.
\newblock Adaptive sampling and fast low-rank matrix approximation.
\newblock In {\em International Workshop on Approximation Algorithms for Combinatorial Optimization}, pages 292--303. Springer, 2006.

\bibitem{kolossov2023towards}
Germain Kolossov, Andrea Montanari, and Pulkit Tandon.
\newblock Towards a statistical theory of data selection under weak supervision.
\newblock {\em arXiv preprint arXiv:2309.14563}, 2023.

\bibitem{lindley1956measure}
Dennis~V Lindley.
\newblock On a measure of the information provided by an experiment.
\newblock {\em The Annals of Mathematical Statistics}, 27(4):986--1005, 1956.

\bibitem{seung1992query}
H~Sebastian Seung, Manfred Opper, and Haim Sompolinsky.
\newblock Query by committee.
\newblock In {\em Proceedings of the fifth annual workshop on Computational learning theory}, pages 287--294, 1992.

\bibitem{lewis1995sequential}
David~D Lewis.
\newblock A sequential algorithm for training text classifiers: Corrigendum and additional data.
\newblock In {\em Acm Sigir Forum}, volume~29, pages 13--19. ACM New York, NY, USA, 1995.

\bibitem{ting2018optimal}
Daniel Ting and Eric Brochu.
\newblock Optimal subsampling with influence functions.
\newblock {\em Advances in neural information processing systems}, 31, 2018.

\bibitem{wang2018optimal}
HaiYing Wang, Rong Zhu, and Ping Ma.
\newblock Optimal subsampling for large sample logistic regression.
\newblock {\em Journal of the American Statistical Association}, 113(522):829--844, 2018.

\bibitem{munteanu2018coresets}
Alexander Munteanu, Chris Schwiegelshohn, Christian Sohler, and David Woodruff.
\newblock On coresets for logistic regression.
\newblock {\em Advances in Neural Information Processing Systems}, 31, 2018.

\bibitem{mai2021coresets}
Tung Mai, Cameron Musco, and Anup Rao.
\newblock Coresets for classification--simplified and strengthened.
\newblock {\em Advances in Neural Information Processing Systems}, 34:11643--11654, 2021.

\bibitem{axiotis2024data}
Kyriakos Axiotis, Vincent Cohen-Addad, Monika Henzinger, Sammy Jerome, Vahab Mirrokni, David Saulpic, David Woodruff, and Michael Wunder.
\newblock Data-efficient learning via clustering-based sensitivity sampling: Foundation models and beyond.
\newblock {\em arXiv preprint arXiv:2402.17327}, 2024.

\bibitem{paul2021deep}
Mansheej Paul, Surya Ganguli, and Gintare~Karolina Dziugaite.
\newblock Deep learning on a data diet: Finding important examples early in training.
\newblock {\em Advances in Neural Information Processing Systems}, 34:20596--20607, 2021.

\bibitem{jiang2019accelerating}
Angela~H Jiang, Daniel L-K Wong, Giulio Zhou, David~G Andersen, Jeffrey Dean, Gregory~R Ganger, Gauri Joshi, Michael Kaminksy, Michael Kozuch, Zachary~C Lipton, et~al.
\newblock Accelerating deep learning by focusing on the biggest losers.
\newblock {\em arXiv preprint arXiv:1910.00762}, 2019.

\bibitem{vodrahalli2018all}
Kailas Vodrahalli, Ke~Li, and Jitendra Malik.
\newblock Are all training examples created equal? an empirical study.
\newblock {\em arXiv preprint arXiv:1811.12569}, 2018.

\bibitem{ashdeep}
Jordan~T Ash, Chicheng Zhang, Akshay Krishnamurthy, John Langford, and Alekh Agarwal.
\newblock Deep batch active learning by diverse, uncertain gradient lower bounds.
\newblock In {\em International Conference on Learning Representations}.

\bibitem{hu2021lora}
Edward~J Hu, Yelong Shen, Phillip Wallis, Zeyuan Allen-Zhu, Yuanzhi Li, Shean Wang, Lu~Wang, and Weizhu Chen.
\newblock Lora: Low-rank adaptation of large language models.
\newblock {\em arXiv preprint arXiv:2106.09685}, 2021.

\bibitem{jacot2018neural}
Arthur Jacot, Franck Gabriel, and Cl{\'e}ment Hongler.
\newblock Neural tangent kernel: Convergence and generalization in neural networks.
\newblock {\em Advances in neural information processing systems}, 31, 2018.

\bibitem{drineas2008relative}
Petros Drineas, Michael~W Mahoney, and Shan Muthukrishnan.
\newblock Relative-error cur matrix decompositions.
\newblock {\em SIAM Journal on Matrix Analysis and Applications}, 30(2):844--881, 2008.

\bibitem{li2013iterative}
Mu~Li, Gary~L Miller, and Richard Peng.
\newblock Iterative row sampling.
\newblock In {\em 2013 IEEE 54th Annual Symposium on Foundations of Computer Science}, pages 127--136. IEEE, 2013.

\bibitem{cohen2017input}
Michael~B Cohen, Cameron Musco, and Christopher Musco.
\newblock Input sparsity time low-rank approximation via ridge leverage score sampling.
\newblock In {\em Proceedings of the Twenty-Eighth Annual ACM-SIAM Symposium on Discrete Algorithms}, pages 1758--1777. SIAM, 2017.

\bibitem{kingma2014adam}
Diederik~P Kingma and Jimmy Ba.
\newblock Adam: A method for stochastic optimization.
\newblock {\em arXiv preprint arXiv:1412.6980}, 2014.

\bibitem{zhang2017age}
Zhifei Zhang, Yang Song, and Hairong Qi.
\newblock Age progression/regression by conditional adversarial autoencoder.
\newblock In {\em Proceedings of the IEEE conference on computer vision and pattern recognition}, pages 5810--5818, 2017.

\bibitem{krause2013collecting}
Jonathan Krause, Jia Deng, Michael Stark, and Li~Fei-Fei.
\newblock Collecting a large-scale dataset of fine-grained cars.
\newblock 2013.

\bibitem{agarwal2020contextual}
Sharat Agarwal, Himanshu Arora, Saket Anand, and Chetan Arora.
\newblock Contextual diversity for active learning.
\newblock In {\em Computer Vision--ECCV 2020: 16th European Conference, Glasgow, UK, August 23--28, 2020, Proceedings, Part XVI 16}, pages 137--153. Springer, 2020.

\bibitem{killamsetty2021glister}
Krishnateja Killamsetty, Durga Sivasubramanian, Ganesh Ramakrishnan, and Rishabh Iyer.
\newblock Glister: Generalization based data subset selection for efficient and robust learning.
\newblock In {\em Proceedings of the AAAI Conference on Artificial Intelligence}, volume~35, pages 8110--8118, 2021.

\bibitem{toneva2018empirical}
Mariya Toneva, Alessandro Sordoni, Remi Tachet~des Combes, Adam Trischler, Yoshua Bengio, and Geoffrey~J Gordon.
\newblock An empirical study of example forgetting during deep neural network learning.
\newblock {\em arXiv preprint arXiv:1812.05159}, 2018.

\bibitem{moosavi2016deepfool}
Seyed-Mohsen Moosavi-Dezfooli, Alhussein Fawzi, and Pascal Frossard.
\newblock Deepfool: a simple and accurate method to fool deep neural networks.
\newblock In {\em Proceedings of the IEEE conference on computer vision and pattern recognition}, pages 2574--2582, 2016.

\bibitem{coleman2019selection}
Cody Coleman, Christopher Yeh, Stephen Mussmann, Baharan Mirzasoleiman, Peter Bailis, Percy Liang, Jure Leskovec, and Matei Zaharia.
\newblock Selection via proxy: Efficient data selection for deep learning.
\newblock {\em arXiv preprint arXiv:1906.11829}, 2019.

\bibitem{kumar2022fine}
Ananya Kumar, Aditi Raghunathan, Robbie Jones, Tengyu Ma, and Percy Liang.
\newblock Fine-tuning can distort pretrained features and underperform out-of-distribution.
\newblock {\em arXiv preprint arXiv:2202.10054}, 2022.

\bibitem{he2016deep}
Kaiming He, Xiangyu Zhang, Shaoqing Ren, and Jian Sun.
\newblock Deep residual learning for image recognition.
\newblock In {\em Proceedings of the IEEE conference on computer vision and pattern recognition}, pages 770--778, 2016.

\bibitem{vershynin2018high}
Roman Vershynin.
\newblock {\em High-dimensional probability: An introduction with applications in data science}, volume~47.
\newblock Cambridge university press, 2018.

\bibitem{wainwright2019high}
Martin~J Wainwright.
\newblock {\em High-dimensional statistics: A non-asymptotic viewpoint}, volume~48.
\newblock Cambridge University Press, 2019.

\bibitem{kane2014sparser}
Daniel~M Kane and Jelani Nelson.
\newblock Sparser johnson-lindenstrauss transforms.
\newblock {\em Journal of the ACM (JACM)}, 61(1):1--23, 2014.

\bibitem{indyk1998approximate}
Piotr Indyk and Rajeev Motwani.
\newblock Approximate nearest neighbors: towards removing the curse of dimensionality.
\newblock In {\em Proceedings of the thirtieth annual ACM symposium on Theory of computing}, pages 604--613, 1998.

\bibitem{nelson2013osnap}
Jelani Nelson and Huy~L Nguy{\^e}n.
\newblock Osnap: Faster numerical linear algebra algorithms via sparser subspace embeddings.
\newblock In {\em 2013 ieee 54th annual symposium on foundations of computer science}, pages 117--126. IEEE, 2013.

\bibitem{woolfe2008fast}
Franco Woolfe, Edo Liberty, Vladimir Rokhlin, and Mark Tygert.
\newblock A fast randomized algorithm for the approximation of matrices.
\newblock {\em Applied and Computational Harmonic Analysis}, 25(3):335--366, 2008.

\bibitem{tropp2011improved}
Joel~A Tropp.
\newblock Improved analysis of the subsampled randomized hadamard transform.
\newblock {\em Advances in Adaptive Data Analysis}, 3(01n02):115--126, 2011.

\bibitem{goto2008high}
Kazushige Goto and Robert Van De~Geijn.
\newblock High-performance implementation of the level-3 blas.
\newblock {\em ACM Transactions on Mathematical Software (TOMS)}, 35(1):1--14, 2008.

\bibitem{meng2013low}
Xiangrui Meng and Michael~W Mahoney.
\newblock Low-distortion subspace embeddings in input-sparsity time and applications to robust linear regression.
\newblock In {\em Proceedings of the forty-fifth annual ACM symposium on Theory of computing}, pages 91--100, 2013.

\bibitem{charikar2002finding}
Moses Charikar, Kevin Chen, and Martin Farach-Colton.
\newblock Finding frequent items in data streams.
\newblock In {\em International Colloquium on Automata, Languages, and Programming}, pages 693--703. Springer, 2002.

\bibitem{cohen2016nearly}
Michael~B Cohen.
\newblock Nearly tight oblivious subspace embeddings by trace inequalities.
\newblock In {\em Proceedings of the twenty-seventh annual ACM-SIAM symposium on Discrete algorithms}, pages 278--287. SIAM, 2016.

\bibitem{tropp2017fixed}
Joel~A Tropp, Alp Yurtsever, Madeleine Udell, and Volkan Cevher.
\newblock Fixed-rank approximation of a positive-semidefinite matrix from streaming data.
\newblock {\em Advances in Neural Information Processing Systems}, 30, 2017.

\bibitem{rudelson2009smallest}
Mark Rudelson and Roman Vershynin.
\newblock Smallest singular value of a random rectangular matrix.
\newblock {\em Communications on Pure and Applied Mathematics: A Journal Issued by the Courant Institute of Mathematical Sciences}, 62(12):1707--1739, 2009.

\bibitem{szyld2006many}
Daniel~B Szyld.
\newblock The many proofs of an identity on the norm of oblique projections.
\newblock {\em Numerical Algorithms}, 42:309--323, 2006.

\end{thebibliography}

\clearpage
\appendix
\section{Additional Discussions}

\subsection{Additional Notations}
Given any matrix $\Ab \in \R^{n \times d}$, along with indices $i \in [n]$, $j \in [d]$, $I \subseteq [n]$, and $J \subseteq [d]$, let $\loc{\Ab}{i,j}$ be the $(i,j)$-th entry of $\Ab$, $\loc{\Ab}{i}$ be the $i$-th row (or the $i$-th entry if $\Ab \in \R^n$ is a vector), and $\loc{\Ab}{:,j}$ be the $j$-th column; $\Ab_{I} = \loc{\Ab}{I,:}$ consists of rows in $\Ab$ indexed by $I$; and let $\Ab_{I,J} = \loc{\Ab}{I,J}$ be the submatrix of $\Ab$ with rows indexed by $I$ and columns indexed by $J$. 

\subsection{Alternatives to Moment Matching Heuristic in \Cref{rmk:relaxation_moment_matching}}\label{sec:alternative_moment_matching}
In addition to the moment matching heuristic in \Cref{rmk:relaxation_moment_matching}, variance in the resulting low-dimensional subspace from gradient sketching can be controlled by $\skmom{} (\skmom{S})^\dagger$ via alternative methods like leverage score sampling and V-optimal experimental design.

\begin{remark}[Leverage score sampling]
    Leverage score sampling~\citep{sarlos2006improved,drineas2008relative,li2013iterative,cohen2017input,alaoui2015fast} provides arguably one of the most intuitive ways for selecting data based on $\wt\Gb \in \R^{N \times m}$.
    In particular, \cite[Theorem 17]{woodruff2014sketching} implies that for a coreset of size at least $n = \Omega(m \log(m/\delta) \epsilon^{-2})$ drawn $\iid$ with replacement via leverage score sampling over $\wt\Gb$, $c_S \le (1+\epsilon) \frac{m}{\tau_S N}$ with probability at least $1-\delta$, where $\tau_S \in [0,1]$ is the minimum leverage score of $\wt\Gb$ over the coreset $S$.\footnote{
        Notice that $\tau_S$ appears because samples in $S$ are equally weighted in the data selection setting, in contrast to the standard leverage score sampling where samples are weighted by the respective sampling probabilities.
    } Such dependence on $\tau_S$ can render the upper bound of $c_S$ vacuous when $\tau_S \to 0$.
    
    Nevertheless, when $\tau_S$ is reasonably large, leverage score sampling based on $\wt\Gb$ can be computed more efficiently than SkMM in $O(Nm^2)$ time and can provide good control over $c_S$. While both SkMM and leverage score sampling can facilitate variance reduction in the low-dimensional subspace, SkMM provides better empirical performance (\cf \Cref{sec:synthetic_exp}) at a slightly higher cost in the low intrinsic dimension $m$ (vide \Cref{rmk:efficiency_skmm}) as it is tailored for optimizing moment matching.
\end{remark}

\begin{remark}[V-optimal experimental design]
    Variance in the low-dimensional subspace can also be controlled by applying the V-optimal experimental design methods~\citep{wang2017computationally,allen2017near} on $\wt\Gb \in \R^{N \times m}$. For example, \cite{allen2017near} provides a polynomial-time algorithm to find a $(1+\epsilon)$-estimation of the V-optimal design for $\wt\Gb$ with a coreset of size at least $n = \Omega(m \epsilon^{-2})$; and $c_S$ is effectively controlled by the V-optimality criterion $\tr(\skmom{} (\skmom{S})^\dagger)$. 
    
    While such V-optimal design methods can provide good control over $c_S$ with nearly optimal sample complexity, they are computationally more expensive than SkMM (or leverage score sampling) and tend to suffer from numerical instability issues in practice.
    For example, the algorithm in \cite{allen2017near} consists of two stages: (i) solving a continuous relaxation of the original discrete optimization problem posed by V-optimality, and (ii) rounding the continuous solution via regret minimization. While the cost of rounding is negligible, solving the continuous relaxation of V-optimality (in contrast to leveraging fast and stable heuristics like the one in SkMM, \cf \Cref{rmk:relaxation_moment_matching}) is challenging, both in terms of computational complexity and numerical stability.
\end{remark}

\section{Proofs for \Cref{sec:linear_probing}}

\subsection{Proofs of \eqref{eq:linear_probing_low_dim}}\label{apx:pf_linear_probing_low_dim}
\begin{proof}[Proof of \eqref{eq:linear_probing_low_dim} and beyond]
    Under the assumption $\rank\rbr{\phi\rbr{\Xb_S}}=r$, both $\phi\rbr{\Xb_S}, \phi\rbr{\Xb}$ have full column rank. Therefore $\phi\rbr{\Xb_S}^\pinv \phi\rbr{\Xb_S} = \phi\rbr{\Xb}^\pinv \phi\rbr{\Xb} = \Ib_r$, and $\thetab_S = \phi\rbr{\Xb_S}^\pinv \yb_S$. Then, since $\yb = \phi\rbr{\Xb} \thetab_* + \zb$ and $\yb_S = \phi\rbr{\Xb_S} \thetab_* + \zb_S$, we have
    \begin{align*}
       \thetab_S - \thetab_* = \phi\rbr{\Xb_S}^\pinv \yb_S - \thetab_*
        = \rbr{\phi\rbr{\Xb_S}^\pinv \phi\rbr{\Xb_S} \thetab_* - \thetab_*} + \phi\rbr{\Xb_S}^\pinv \zb_S
        = \phi\rbr{\Xb_S}^\pinv \zb_S,
    \end{align*}
    which leads to
    \begin{align*}
        &\E\sbr{\exrisk\rbr{\thetab_S}} = \E\sbr{\frac{1}{N}\nbr{\phi\rbr{\Xb}\rbr{\thetab_S - \thetab_*}}_2^2} \\
        = &\tr\rbr{\rbr{\frac{1}{N} \phi\rbr{\Xb}^\top \phi\rbr{\Xb}} \phi\rbr{\Xb_S}^\pinv \E\sbr{\zb_S \zb_S^\top} \rbr{\phi\rbr{\Xb_S}^\pinv}^\top} \\
        = &\sigma^2 \tr\rbr{\rbr{\frac{1}{N} \phi\rbr{\Xb}^\top \phi\rbr{\Xb}} \rbr{\phi\rbr{\Xb_S}^\top \phi\rbr{\Xb_S}}^{-1}} \\
        = &\frac{\sigma^2}{n} \tr\rbr{\sqrep{} \rbr{\sqrep{S}}^{-1}}.
    \end{align*}

    Now we explain the necessity of assuming $c_S \ge n/N$ for $\sqrep{} \aleq c_S \sqrep{S}$:
    \begin{remark}[Lower bound of $c_S$]\label{rmk:lower_bound_cs}
        Since $\phi\rbr{\Xb}^\top \phi\rbr{\Xb} \ageq \phi\rbr{\Xb_S}^\top \phi\rbr{\Xb_S}$, we observe that $N \sqrep{} \ageq n \sqrep{S}$, which implies $\sqrep{} \ageq \frac{n}{N} \sqrep{S}$. Therefore, $\sqrep{} \aleq c_S \sqrep{S}$ is only possible when $c_S \ge n/N$. Notice that this lower bound of $c_S$ is tight, \eg when $\wt\Gb$ consists of $N-n$ rows of zeros.
    \end{remark}

    \paragraph{Low-dimensional linear probing with moment matching.}
    Recall from \eqref{eq:linear_probing_low_dim} that $\E\sbr{\exrisk\rbr{\thetab_S}} = \frac{\sigma^2}{n} \tr\rbr{\sqrep{} \rbr{\sqrep{S}}^{-1}}$. Further assuming a suitable selection of $\coreset$ with $\sqrep{} \aleq c_S \sqrep{S}$, we have
    \begin{align*}
        \tr\rbr{\sqrep{} \rbr{\sqrep{S}}^{-1}} \le c_S \tr\rbr{\Ib_r} = c_S r
    \end{align*}
    and therefore, $\E\sbr{\exrisk\rbr{\thetab_S}} \le c_S \frac{\sigma^2 r}{n}$.
\end{proof}

\subsection{Proof of \Cref{pro:linear_probing_sampled}}\label{apx:pf_linear_probing_sampled}

\begin{proof}[Proof of \Cref{pro:linear_probing_sampled}]
    Let $\hsqrep{S} \dfeq \rbr{\sqrep{}}^{-1/2} \sqrep{S} \rbr{\sqrep{}}^{-1/2}$. The goal of $\sqrep{} \aleq c_S \sqrep{S}$ can be re-expressed as $c_S \hsqrep{S} \ageq \Ib_r$, or equivalently when $c_S > 1$, $\nbr{\hsqrep{S} - \Ib_r}_2 \le 1-\frac{1}{c_S}$. With uniform sampling, since 
    \begin{align*}
        \E_{S} \sbr{\sqrep{S}} 
        = \E_{S} \sbr{\frac{1}{n} \sum_{\xb \in S} \phi\rbr{\Xb} \phi\rbr{\Xb}^\top}  
        = \E_{\xb} \sbr{\phi\rbr{\Xb} \phi\rbr{\Xb}^\top}
        = \sqrep{},
    \end{align*}
    we have $\E_{S} \sbr{\hsqrep{S}} = \Ib_r$. For any fixed unit vector $\zb \in \SSS^{r-1}$, let $Z_i \dfeq \zb^\top \rbr{\sqrep{}}^{-1/2} \phi\rbr{\xb_i}$ be random variables with randomness on $i \in [N]$.
    Since $\nbr{\phi(\xb)}_2 \le B_\phi\ \forall\ \xb \in \fulldata$ and $\sqrep{} \ageq \gamma \Ib_r$, we observe that 
    \begin{align*}
        \abbr{Z_i} \le \nbr{\rbr{\sqrep{}}^{-1/2} \phi\rbr{\xb_i}}_2 \le \frac{B_\phi}{\sqrt{\gamma}} \quad \forall\ i \in [N]
    \end{align*}
    is bounded. Therefore, $Z_i$ is $\rbr{\frac{B_\phi^2}{\gamma}}$-subGaussian, and $\rbr{Z_i^2 - \E\sbr{Z_i^2}} = \zb^\top \rbr{\hsqrep{S} - \Ib_r} \zb$ is $\rbr{16\frac{B_\phi^2}{\gamma}}$-subexponential. 
    Then, by Bernstein's inequality~\citep[Theorem 2.8.2]{vershynin2018high}\citep[Section 2.1.3]{wainwright2019high}, for any $0 < \eps_1 \le 16 {B_\phi^2}/{\gamma}$,
    \begin{align}\label{eq:pf_linear_probing_bernstein}
        \Pb\sbr{\zb^\top \rbr{\hsqrep{S} - \Ib_r} \zb \ge \eps_1}
        \le \exp\rbr{-\frac{n}{2} \cdot \frac{\eps_1^2 \gamma^2}{16^2 B_\phi^4}}.
    \end{align}

    By recalling that $\nbr{\hsqrep{S} - \Ib_r}_2 = \max_{\ub \in \SSS^{r-1}} \ub^\top \rbr{\hsqrep{S} - \Ib_r} \ub$, \Cref{eq:pf_linear_probing_bernstein} for a fixed $\zb \in \SSS^{r-1}$ can be extended to the entire unit sphere $\SSS^{r-1}$ through an $\epsilon$-net argument as follows. Recall that for any $\eps_2 > 0$, there exists an $\eps_2$-net $\Ucal \subset \SSS^{r-1}$ such that $\abbr{\Ucal} \le \rbr{1 + \frac{2}{\eps_2}}^r$. Then, by the union bound, 
    \begin{align*}
        \PP\sbr{\max_{\ub \in \Ucal} \ub^\top \rbr{\hsqrep{S} - \Ib_r} \ub > \eps_1} 
        \le &\rbr{1 + \frac{2}{\eps_2}}^r \exp\rbr{-\frac{n}{2} \cdot \frac{\eps_1^2 \gamma^2}{16^2 B_\phi^4}}\\ 
        = &\exp\rbr{r \log\rbr{1 + \frac{2}{\eps_2}} - \frac{n}{2} \cdot \frac{\eps_1^2 \gamma^2}{16^2 B_\phi^4}}.
    \end{align*}
    That is, with probability at least $1-\delta$, $\max_{\ub \in \Ucal} \ub^\top \rbr{\hsqrep{S} - \Ib_r} \ub \le \eps_1$ when
    \begin{align*}
        n \ge \frac{512 B_\phi^4}{\gamma^2 \eps_1^2} \rbr{r \log\rbr{1 + \frac{2}{\eps_2}} + \log\rbr{\frac{1}{\delta}}}.
    \end{align*}

    By the construction of the $\eps_2$-net $\Ucal$, for all $\vb \in \SSS^{r-1}$, there exists $\ub \in \Ucal$ such that $\nbr{\ub-\vb}_2 \le \eps_2$. Therefore, for any $\vb \in \SSS^{r-1}$, we have
    \begin{align*}
        &\vb^\top \rbr{\hsqrep{S} - \Ib_r} \vb \\
        = &\ub^\top \rbr{\hsqrep{S} - \Ib_r} \ub + \rbr{\vb - \ub}^\top \rbr{\hsqrep{S} - \Ib_r} \rbr{\vb - \ub} + 2 \rbr{\vb - \ub}^\top \rbr{\hsqrep{S} - \Ib_r} \ub \\
        \le &\eps_1 + \nbr{\hsqrep{S} - \Ib_r}_2 \rbr{\eps_2^2 + 2 \eps_2},
    \end{align*}
    which implies $\nbr{\hsqrep{S} - \Ib_r}_2 \le \frac{\eps_1}{2 - \rbr{1+\eps_2}^2}$. By taking $\eps_2$ as a small constant (\eg, $\eps_2 = \sqrt{3/2}-1$), we have $\nbr{\hsqrep{S} - \Ib_r}_2 \le 1-\frac{1}{c_S}$ when
    \begin{align*}
        n \gtrsim \frac{B_\phi^4}{\gamma^2} \cdot \frac{r + \log\rbr{1/\delta}}{\rbr{1-1/c_S}^2}.
    \end{align*}
\end{proof}

\subsection{Proof of \Cref{thm:linear_probing_high_dim_ridge}}\label{apx:pf_linear_probing_high_dim}
\begin{proof}[Proof of \Cref{thm:linear_probing_high_dim_ridge}]
    With $\sqrep{} = \frac{1}{N} \Gb^\top \Gb$, we have
    \begin{align*}
        \E\sbr{\exrisk\rbr{\thetab_S}} 
        = \E\sbr{\frac{1}{N}\nbr{\Gb \rbr{\thetab_S - \thetab_*}}_2^2}
        = \E\sbr{\nbr{\thetab_S - \thetab_*}_{\sqrep{}}^2},
    \end{align*}
    Observing that by the optimality of $\thetab_S$, we have
    \begin{align*}
        \frac{2}{n} \Gb_S^\top \rbr{\Gb_S \thetab_S - \yb_S} + 2\alpha \thetab_S = \b{0}_r.
    \end{align*}
    Recalling that $\sqrep{S} \dfeq \frac{1}{n} \Gb_S^\top \Gb_S$, this implies
    \begin{align*}
        \thetab_S = &\rbr{\frac{1}{n} \Gb_S^\top \Gb_S + \alpha \Ib_r}^{-1} \frac{1}{n} \Gb_S^\top \yb_S \\
        = &\frac{1}{n} \rbr{\sqrep{S} + \alpha \Ib_r}^{-1} \Gb_S^\top \rbr{\Gb_S \thetab_* + \zb_S} \\
        = &\rbr{\sqrep{S} + \alpha \Ib_r}^{-1} \sqrep{S} \thetab_* + \frac{1}{n} \rbr{\sqrep{S} + \alpha \Ib_r}^{-1} \Gb_S^\top \zb_S.
    \end{align*}
    Therefore, with $\E_{\zb}\sbr{\zb} = \b0_N$, $\E\sbr{\exrisk\rbr{\thetab_S}}$ can be decomposed the bias term and variance terms as follows:
    \begin{align*}
        &\E\sbr{\exrisk\rbr{\thetab_S}}
        = \E\sbr{\nbr{\thetab_S - \thetab_*}_{\sqrep{}}^2} \\
        = &\E_{\zb} \sbr{\nbr{\rbr{\rbr{\sqrep{S} + \alpha \Ib_r}^{-1} \sqrep{S} - \Ib_r} \thetab_* + \frac{1}{n} \rbr{\sqrep{S} + \alpha \Ib_r}^{-1} \Gb_S^\top \zb_S}_{\sqrep{}}^2} \\
        = &\underbrace{\nbr{\rbr{\rbr{\sqrep{S} + \alpha \Ib_r}^{-1} \sqrep{S} - \Ib_r} \thetab_*}_{\sqrep{}}^2}_{\t{Bias}}
        + \underbrace{\E_{\zb} \sbr{\nbr{\frac{1}{n} \rbr{\sqrep{S} + \alpha \Ib_r}^{-1} \Gb_S^\top \zb_S}_{\sqrep{}}^2}}_{\t{Variance}}.
    \end{align*}
    Since $\E_{\zb}\sbr{\zb_S \zb_S^\top} \aleq \sigma^2 \Ib_n$, the variance term can be bounded as
    \begin{align*}
        \t{Variance} \le &\frac{\sigma^2}{n} \tr\rbr{\rbr{\sqrep{S} + \alpha \Ib_r}^{-1} \sqrep{} \rbr{\sqrep{S} + \alpha \Ib_r}^{-1} \sqrep{S}} \\
        \le &\frac{\sigma^2}{n} \tr\rbr{\sqrep{} \rbr{\sqrep{S} + \alpha \Ib_r}^{-1}},
    \end{align*}
    where the second inequality follows from the fact that $\nbr{\rbr{\sqrep{S} + \alpha \Ib_r}^{-1} \sqrep{S}}_2 \le 1$. 
    
    Recall that $\Pb_{\Scal} \in \R^{r \times r}$ is an orthogonal projector onto any subspace $\Scal$ of $\range\rbr{\sqrep{S}}$, and $\Pb_{\Scal}^\perp = \Ib_r - \Pb_{\Scal}$ is the orthogonal projector onto its orthogonal complement. By observing that $\sqrep{S} + \alpha \Ib_r \ageq \Pb_{\Scal}\sqrep{S}\Pb_{\Scal} + \alpha \Pb_{\Scal}^\perp$, since $\range\rbr{\Pb_{\Scal}} \perp \range\rbr{\Pb_{\Scal}^\perp}$, we have
    \begin{align*}
        \rbr{\sqrep{S} + \alpha \Ib_r}^{-1} \aleq \rbr{\Pb_{\Scal}\sqrep{S}\Pb_{\Scal}}^\dagger + \frac{1}{\alpha} \Pb_{\Scal}^\perp.
    \end{align*}
    Therefore,
    \begin{align*}
        \t{Variance} \le \frac{\sigma^2}{n} \rbr{\tr\rbr{\sqrep{} \rbr{\Pb_{\Scal}\sqrep{S}\Pb_{\Scal}}^\dagger} + \frac{1}{\alpha} \tr\rbr{\sqrep{} \Pb_{\Scal}^\perp}}.
    \end{align*}

    For the bias part, we first observe that
    \begin{align*}
        \Ib_r - \rbr{\sqrep{S} + \alpha \Ib_r}^{-1} \sqrep{S} = \alpha \rbr{\sqrep{S} + \alpha \Ib_r}^{-1}.
    \end{align*}
    Therefore,
    \begin{align*}
        \t{Bias} = &\nbr{\rbr{\rbr{\sqrep{S} + \alpha \Ib_r}^{-1} \sqrep{S} - \Ib_r} \thetab_*}_{\sqrep{}}^2
        = \nbr{\alpha \rbr{\sqrep{S} + \alpha \Ib_r}^{-1} \thetab_*}_{\sqrep{}}^2 \\
        = &\alpha^2 \tr\rbr{\rbr{\sqrep{S} + \alpha \Ib_r}^{-1} \sqrep{} \rbr{\sqrep{S} + \alpha \Ib_r}^{-1} \thetab_* \thetab_*^\top} \\
        \le &\alpha^2 \tr\rbr{\sqrep{} \rbr{\sqrep{S} + \alpha \Ib_r}^{-2}} \nbr{\thetab_*}_2^2.
    \end{align*}
    Since $\rbr{\sqrep{S} + \alpha \Ib_r}^2 \ageq \rbr{\Pb_{\Scal}\sqrep{S}\Pb_{\Scal} + \alpha \Ib_r}^2 \ageq 2 \alpha \cdot \Pb_{\Scal}\sqrep{S}\Pb_{\Scal} + \alpha^2 \Pb_{\Scal}^\perp$, we have
    \begin{align*}
        \rbr{\sqrep{S} + \alpha \Ib_r}^{-2} \aleq \frac{1}{2 \alpha} \rbr{\Pb_{\Scal}\sqrep{S}\Pb_{\Scal}}^\dagger + \frac{1}{\alpha^2} \Pb_{\Scal}^\perp,
    \end{align*}
    and thus
    \begin{align*}
        \t{Bias} \le \rbr{\frac{\alpha}{2} \tr\rbr{\sqrep{} \rbr{\Pb_{\Scal}\sqrep{S}\Pb_{\Scal}}^\dagger} + \tr\rbr{\sqrep{} \Pb_{\Scal}^\perp}} \nbr{\thetab_*}_2^2.
    \end{align*}

    Combining the bias and variance terms, we have
    \begin{align*}
        \E\sbr{\exrisk\rbr{\thetab_S}} 
        \le &\frac{\sigma^2}{n} \rbr{\tr\rbr{\sqrep{} \rbr{\Pb_{\Scal}\sqrep{S}\Pb_{\Scal}}^\dagger} + \frac{1}{\alpha} \tr\rbr{\sqrep{} \Pb_{\Scal}^\perp}} \\
        + &\rbr{\frac{\alpha}{2} \tr\rbr{\sqrep{} \rbr{\Pb_{\Scal}\sqrep{S}\Pb_{\Scal}}^\dagger} + \tr\rbr{\sqrep{} \Pb_{\Scal}^\perp}} \nbr{\thetab_*}_2^2 \\
        \le &\frac{\sigma^2}{n} \tr\rbr{\sqrep{} \rbr{\Pb_{\Scal}\sqrep{S}\Pb_{\Scal}}^\dagger} + \tr\rbr{\sqrep{} \Pb_{\Scal}^\perp} \nbr{\thetab_*}_2^2 \\
        + &\frac{1}{\alpha} \cdot \frac{\sigma^2}{n} \tr\rbr{\sqrep{} \Pb_{\Scal}^\perp} + \alpha \cdot \frac{\nbr{\thetab_*}_2^2}{2} \tr\rbr{\sqrep{} \rbr{\Pb_{\Scal}\sqrep{S}\Pb_{\Scal}}^\dagger}.
    \end{align*}
    By taking $\alpha_* = \sqrt{\frac{\sigma^2}{n} \tr\rbr{\sqrep{} \Pb_{\Scal}^\perp} \Big/ \rbr{\frac{\nbr{\thetab_*}_2^2}{2} \tr\rbr{\sqrep{} \rbr{\Pb_{\Scal}\sqrep{S}\Pb_{\Scal}}^\dagger}}}$, we have
    \begin{align*}
        &\frac{1}{\alpha_*} \cdot \frac{\sigma^2}{n} \tr\rbr{\sqrep{} \Pb_{\Scal}^\perp} + \alpha_* \cdot \frac{\nbr{\thetab_*}_2^2}{2} \tr\rbr{\sqrep{} \rbr{\Pb_{\Scal}\sqrep{S}\Pb_{\Scal}}^\dagger} \\
        \le &2 \sqrt{\frac{\sigma^2}{n} \tr\rbr{\sqrep{} \Pb_{\Scal}^\perp} \cdot \frac{\nbr{\thetab_*}_2^2}{2} \tr\rbr{\sqrep{} \rbr{\Pb_{\Scal}\sqrep{S}\Pb_{\Scal}}^\dagger}} \\
        \le &\frac{1}{\sqrt{2}} \rbr{\frac{\sigma^2}{n} \tr\rbr{\sqrep{} \rbr{\Pb_{\Scal}\sqrep{S}\Pb_{\Scal}}^\dagger} + \tr\rbr{\sqrep{} \Pb_{\Scal}^\perp} \nbr{\thetab_*}_2^2}.
    \end{align*}
    Therefore overall, we have
    \begin{align*}
        \E\sbr{\exrisk\rbr{\thetab_S}} 
        \le &\frac{2 \sigma^2}{n} \tr\rbr{\sqrep{} \rbr{\Pb_{\Scal}\sqrep{S}\Pb_{\Scal}}^\dagger} + 2 \tr\rbr{\sqrep{} \Pb_{\Scal}^\perp} \nbr{\thetab_*}_2^2.
    \end{align*}
\end{proof}

\begin{proof}[Proof of \Cref{cor:linear_probing_high_dim_ridge}]
    Given $\Pb_{\Scal} (c_S \sqrep{S} - \sqrep{}) \Pb_{\Scal} \ageq 0$ and $\rank\rbr{\Pb_{\Scal}} \asymp \ol{r}$, the variance term is asymptotically upper bounded by 
    \begin{align*}
        \t{variance} = \frac{2 \sigma^2}{n} \tr\rbr{\sqrep{} \rbr{\Pb_{\Scal} \sqrep{S} \Pb_{\Scal}}^\dagger} \lesssim \frac{\sigma^2}{n} \cdot c_S \ol{r}.
    \end{align*}
    Meanwhile, given $\tr(\sqrep{} \Pb_{\Scal}^\perp) \le \frac{N}{n} \tr(\sqrep{} - \tsvd{\sqrep{}}{\ol{r}})$ and $\tr(\sqrep{} - \tsvd{\sqrep{}}{\ol{r}}) \le \tr\rbr{\sqrep{}} / N$, the bias term can be asymptotically upper bounded by
    \begin{align*}
        \t{bias} = 2 \tr\rbr{\sqrep{} \Pb_{\Scal}^\perp} \nbr{\thetab_*}_2^2 \le \frac{2}{n} \tr\rbr{\sqrep{}} \nbr{\thetab_*}_2^2.
    \end{align*}
    The result follows from \Cref{thm:linear_probing_high_dim_ridge} by combining the variance and bias terms.
\end{proof}

\section{Proofs for \Cref{sec:gradient_sketching}}\label{apx:proof_gradient_sketching}

\subsection{Formal Statement and Proof of \Cref{thm:sketchy_rmm_informal}}\label{apx:proof_sketchy_rmm}
\begin{theorem}[Formal version of \Cref{thm:sketchy_rmm_informal}]\label{thm:sketchy_rmm}
    Under \Cref{asm:ground_truth_finetuning} and \ref{asm:low_intrinsic_dim} with a small intrinsic dimension $\ol{r} \ll \min\cbr{N,r}$, for any $\delta \in (0,1)$, draw a Gaussian random matrix $\Gammab \in \R^{r \times m}$ with $\iid$ entries from $\Ncal\rbr{0,1/m}$ where $m \asymp k/\delta$ for some $k \ge 1.1 \ol{r}$. 
    Let $\skmom{} \dfeq \Gammab^\top \sqrep{} \Gammab$ and $\skmom{S} \dfeq \Gammab^\top \sqrep{S} \Gammab$ be the sketched gradient moments.
    For any $S \subseteq [N]$ with $n > m$ samples such that 
    (i) $\rank(\sqrep{S})=n$, and
    (ii) the $k$-th largest eigenvalue $\sval{\skmom{S}}{k} \ge \gamma_S$ for some $\gamma_S > 0$, 
    with probability at least $1-\delta$ over $\Gammab$, there exists $\alpha > 0$ where \eqref{eq:data_pruning_finetuning} satisfies
    \begin{align}\label{eq:sketchy_rmm_bound}
    \begin{split}
        \E\sbr{\exrisk\rbr{\thetab_S}} \lesssim &\frac{\sigma^2}{n} \tr\rbr{\skmom{} \tsvd{\skmom{S}}{k}^\dagger} \hspace{6em} \rbr{\t{\b{variance}}} \\
        + &\frac{\sigma^2}{n} \frac{1}{m \gamma_S} \nbr{\skmom{} \tsvd{\skmom{S}}{k}^\dagger}_2 \tr\rbr{\sqrep{}} \quad \rbr{\t{\b{sketching error}}} \\
        + &\frac{1}{n} \nbr{\skmom{} \tsvd{\skmom{S}}{k}^\dagger}_2 \tr\rbr{\sqrep{}} \nbr{\thetab_*}_2^2 \quad \rbr{\t{\b{bias}}}.
    \end{split}
    \end{align}
    If $S$ further satisfies $\skmom{} \aleq c_S \skmom{S}$ for some $c_S \ge \frac{n}{N}$, taking $m = \max\{ \sqrt{{\tr\rbr{\sqrep{}}}/{\gamma_S}}, 1.1\ol{r}/\delta \}$ leads to
    \begin{align}
        \E\sbr{\exrisk\rbr{\thetab_S}} \lesssim \t{\b{variance}} + \t{\b{sketching error}} + \t{\b{bias}} \lesssim \frac{c_S}{n} \rbr{\sigma^2 m + \tr\rbr{\sqrep{}} \nbr{\thetab_*}_2^2}.
    \end{align}
\end{theorem}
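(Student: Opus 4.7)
The plan is to invoke \Cref{thm:linear_probing_high_dim_ridge} with a subspace $\Scal \subseteq \range(\sqrep{S})$ constructed from the Gaussian sketch $\Gammab$, and then translate its variance and bias terms into the sketched quantities appearing in \eqref{eq:sketchy_rmm_bound}. Let $\wt\Ub_k \in \R^{m \times k}$ collect the top-$k$ eigenvectors of $\skmom{S}$; a natural candidate is the orthogonal projector $\Pb_{\Scal}$ onto $\range(\sqrep{S} \Gammab \wt\Ub_k)$, which automatically lives in $\range(\sqrep{S})$, has rank at most $k$, and aligns $\Pb_{\Scal} \sqrep{S} \Pb_{\Scal}$ with the top-$k$ spectrum of $\skmom{S}$.

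With this choice, I would push the variance term $\tr(\sqrep{}(\Pb_{\Scal} \sqrep{S} \Pb_{\Scal})^\dagger)$ through the sketch using the Gaussian identity $\E[\Gammab\Gammab^\top] = \Ib_r$ from \Cref{lem:gaussian_jl_second_moment}, together with Hanson--Wright-type concentration for $\Gammab$ restricted to a rank-$k$ test matrix. Writing $\Ib_r = \Gammab\Gammab^\top + (\Ib_r - \Gammab\Gammab^\top)$ on this rank-$k$ subspace and using $\sval{\skmom{S}}{k} \ge \gamma_S$ to stably invert $\tsvd{\skmom{S}}{k}$, the main piece should collapse to $\tr(\skmom{} \tsvd{\skmom{S}}{k}^\dagger)$, while the residual $\Gammab\Gammab^\top - \Ib_r$ contributes exactly $\frac{1}{m\gamma_S}\nbr{\skmom{} \tsvd{\skmom{S}}{k}^\dagger}_2 \tr(\sqrep{})$. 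For the bias $\tr(\sqrep{} \Pb_{\Scal}^\perp)$, I would combine \Cref{asm:low_intrinsic_dim} with the subspace embedding property of Gaussian JLTs: when $m \asymp k/\delta$ with $k \ge 1.1\ol{r}$, the sketch preserves the top-$\ol{r}$ directions of $\sqrep{}$ with probability $1-\delta$, so $\Pb_{\Scal}^\perp$ only sees the tail of $\sqrep{}$, which after optimising the ridge parameter $\alpha$ as in the proof of \Cref{thm:linear_probing_high_dim_ridge} yields the claimed $\frac{1}{n}\nbr{\skmom{} \tsvd{\skmom{S}}{k}^\dagger}_2 \tr(\sqrep{})\nbr{\thetab_*}_2^2$. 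The simplified estimate then follows from \eqref{eq:sketchy_rmm_bound}: under $\skmom{} \aleq c_S \skmom{S}$ the top-$k$ truncation gives $\nbr{\skmom{} \tsvd{\skmom{S}}{k}^\dagger}_2 \lesssim c_S$ and $\tr(\skmom{} \tsvd{\skmom{S}}{k}^\dagger) \lesssim c_S m$, and the prescribed $m = \max\{\sqrt{\tr(\sqrep{})/\gamma_S},\, 1.1\ol{r}/\delta\}$ exactly equates the variance and sketching-error contributions.

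The main technical obstacle is making both translations scale with the low intrinsic dimension $\ol{r}$ rather than with the ambient $r$: a naive Gaussian concentration bound on $\Gammab\Gammab^\top - \Ib_r$ over all of $\R^r$ is far too weak, and one instead has to exploit the low effective rank of the test matrices (the $k$-dimensional top eigenspace of $\skmom{S}$ and the $\ol{r}$-dimensional head of $\sqrep{}$) via Hanson--Wright combined with an $\epsilon$-net on the relevant low-dimensional subspaces, so that the failure probability is $\delta$ and not something blowing up with $r$. Coordinating this with the optimisation of $\alpha$ so that variance, sketching error, and bias combine cleanly into the claimed rate is the remaining bookkeeping.
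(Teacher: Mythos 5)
Your overall scaffolding is right and your choice of subspace matches the paper's: the orthogonal projector onto $\range(\sqrep{S}\Gammab\wt\Ub_k)=\range(\Gb_S^\top\Ub_k)$ (with $\Ub_k$ the top-$k$ left singular vectors of $\wt\Gb_S$) coincides with the $\Pb_{\Scal}$ defined in \eqref{eq:proj_subspace}, and invoking \Cref{thm:linear_probing_high_dim_ridge} with it is the correct starting point. The gaps are in the two translations.

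For the variance term, the proposed decomposition $\Ib_r=\Gammab\Gammab^\top+(\Ib_r-\Gammab\Gammab^\top)$ plus Hanson--Wright on a rank-$k$ test matrix is not obviously workable. The object to control is $\tr\big(\sqrep{}(\Pb_{\Scal}\sqrep{S}\Pb_{\Scal})^\dagger\big)$, which involves a pseudoinverse on a $\Gammab$-dependent subspace, so a clean additive split into a "main piece" $\tr(\skmom{}\tsvd{\skmom{S}}{k}^\dagger)$ and a residual "$\frac{1}{m\gamma_S}\|\cdots\|_2\tr(\sqrep{})$" does not fall out of concentration alone: pseudoinverses need singular-value \emph{lower} bounds, and both $\Pb_{\Scal}$ and the problem you would put a net on depend on the same $\Gammab$. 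The paper sidesteps this entirely by recognizing that $\Gb(\Gb_S\Pb_{\Scal})^\dagger$ and $\wt\Gb\tsvd{\wt\Gb_S}{k}^\dagger$ are, respectively, the exact and sketched minimum-norm solutions of the same least-squares problem $\min_Z\|\Gb - Z\Gb_S\Pb_{\Scal}\|_F$. That reformulation (\Cref{lem:sketched_least_square_interpolation}, building on \Cref{lem:sketched_least_square_blocked} and the small-singular-value bound \Cref{lem:subgaussian_smallest_singular_value}) gives the bound
\[
\tr\big(\sqrep{}(\Pb_{\Scal}\sqrep{S}\Pb_{\Scal})^\dagger\big)\ \lesssim\ \tr\big(\skmom{}\tsvd{\skmom{S}}{k}^\dagger\big)\ +\ \frac{n}{m\gamma_S}\,\tr\big(\sqrep{}\Pb_{\Scal}^\perp\big),
\]
and this is where you miss a key structural feature: the sketching error of the variance is not bounded \emph{directly} by $\tr(\sqrep{})$; it is proportional to the very same quantity $\tr(\sqrep{}\Pb_{\Scal}^\perp)$ that appears in the bias. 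Once that is realized, you only need one low-rank-approximation bound to control both.

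For the bias, the claim that "the sketch preserves the top-$\ol{r}$ directions of $\sqrep{}$, so $\Pb_{\Scal}^\perp$ only sees the tail" conflates two different subspaces. $\Pb_{\Scal}$ projects onto $\range(\Gb_S^\top\Ub_k)$, a subspace of $\range(\sqrep{S})$ built from the \emph{selected} gradients sketched, not onto any eigenspace of the full $\sqrep{}$. Getting from one to the other is the entire content of \Cref{lem:low_rank_approximation}: it introduces the oblique projectors $\Mb_S$ and $\wt\Mb_S$, uses the operator-norm identity for projectors to extract the factor $\|\skmom{}\tsvd{\skmom{S}}{k}^\dagger\|_2$, and then reduces to the randomized range-finder error $\|(\Ib_N-(\Gb\Omegab)(\Gb\Omegab)^\dagger)\Gb\|_F$ controlled by \Cref{lem:randomized_range_finder_error}. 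A subspace-embedding statement about the top-$\ol{r}$ eigendirections of $\sqrep{}$, combined with an $\epsilon$-net, does not give you this; the failure probability $\exp(-\Omega(\ol{r}))$ from the range-finder step and the $\delta/m$ tradeoff from the JL second-moment step are genuinely different pieces that the paper combines by union bound, and your bookkeeping would need to recover both. Your final balancing of $m$ is correct given \eqref{eq:sketchy_rmm_bound}, but that simplification is the easy part.
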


We start by introducing some helpful notations for the proofs. Let $\Gb \dfeq \nabla_{\thetab}f^\phi\rbr{\Xb;\b0_r} \in \R^{N \times r}$ and $\Gb_S = \loc{\Gb}{S} \in \R^{n \times r}$ be the original gradients of $\fulldata$ and $\coreset$, respectively. Recall that $\sqrep{} = \Gb^\top \Gb/N$ and $\sqrep{S} = \Gb_S^\top \Gb_S/n$ are the corresponding second moments.

We consider a Johnson-Lindenstrauss transform (JLT)~\citep{johnson1984extensions} $\Gammab \in \R^{r \times m}$ as follows:
\begin{definition}[JLT~\citep{sarlos2006improved} (adapting {\cite[Definition 3]{woodruff2014sketching}})]\label{def:jlt}
    For any $\epsilon > 0$, $\delta \in (0,1)$, and $n \in \N$, a random matrix $\Gammab \in \R^{r \times m}$ is a $(\epsilon,\delta,k)$-Johnson-Lindenstrauss transform ($(\epsilon,\delta,k)$-JLT) if for any $\Ub \in \R^{r \times k}$ consisting of $k$ orthonormal columns in $\R^r$, with probability at least $1-\delta$,
    \begin{align*}
        \nbr{\Ib_k - \Ub^\top \Gammab \Gammab^\top \Ub}_2 \le \epsilon.
    \end{align*}
\end{definition}

\begin{definition}[JL second moment property~\citep{kane2014sparser} (adapting {\cite[Definition 12]{woodruff2014sketching}})]\label{def:jl_second_moment}
    For any $\epsilon > 0$, $\delta \in (0,1)$, a random matrix $\Gammab \in \R^{r \times m}$ satisfies the $(\epsilon,\delta)$-JL second moment property if 
    \begin{align*}
        \E\sbr{\rbr{\nbr{\Gammab^\top \ub}_2^2 - 1}^2} \le \epsilon^2 \delta \quad \forall\ \ub \in \SSS^{r-1}.
    \end{align*}
\end{definition}

\begin{lemma}[Approximated matrix-matrix multplication~\citep{kane2014sparser} (adapting {\cite[Theorem 13]{woodruff2014sketching}})]\label{lem:approx_mm}
    Given $\epsilon > 0$, $\delta \in (0,1/2)$, and a random matrix $\Gammab \in \R^{r \times m}$ satisfying the $(\epsilon,\delta)$-JL second moment property (\Cref{def:jl_second_moment}), for any matrices $\Ab,\Bb$ each with $r$ rows, 
    \begin{align*}
        \Pr\sbr{\nbr{\Ab^\top \Gammab \Gammab^\top \Bb - \Ab^\top \Bb}_F > 3\epsilon \nbr{\Ab}_F \nbr{\Bb}_F} \le \delta.
    \end{align*}
\end{lemma}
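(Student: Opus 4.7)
The plan is to reduce the matrix-matrix multiplication error to a sum of bilinear errors on pairs of columns, extend the quadratic $(\epsilon,\delta)$-JL second moment property to arbitrary bilinear forms via polarization, and then close out with Markov's inequality on the Frobenius-norm-squared error.

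First, I would lift \Cref{def:jl_second_moment} from $\|\Gammab^\top \ub\|_2^2$ to $\ub^\top \Gammab \Gammab^\top \vb$ by the polarization identity
\begin{align*}
    4 \rbr{\ub^\top \Gammab \Gammab^\top \vb - \ub^\top \vb}
    = \rbr{\nbr{\Gammab^\top (\ub+\vb)}_2^2 - \nbr{\ub+\vb}_2^2}
    - \rbr{\nbr{\Gammab^\top (\ub-\vb)}_2^2 - \nbr{\ub-\vb}_2^2}.
\end{align*}
Squaring both sides, using $(a-b)^2 \le 2 a^2 + 2 b^2$, and rescaling $\ub \pm \vb$ to unit vectors to invoke the JL second moment property on each squared term produces
\begin{align*}
    \E\sbr{\rbr{\ub^\top \Gammab \Gammab^\top \vb - \ub^\top \vb}^2}
    \le \frac{\epsilon^2 \delta}{8} \rbr{\nbr{\ub+\vb}_2^4 + \nbr{\ub-\vb}_2^4}.
\end{align*}
For unit $\ub, \vb$, the parallelogram law $\nbr{\ub+\vb}_2^2 + \nbr{\ub-\vb}_2^2 = 4$ combined with the elementary bound $x^2 + y^2 \le (x+y)^2$ for $x,y \ge 0$ yields $\nbr{\ub+\vb}_2^4 + \nbr{\ub-\vb}_2^4 \le 16$, hence $\E[(\ub^\top \Gammab \Gammab^\top \vb - \ub^\top \vb)^2] \le 2 \epsilon^2 \delta$. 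By homogeneity, for arbitrary $\ub, \vb \in \R^r$,
\begin{align*}
    \E\sbr{\rbr{\ub^\top \Gammab \Gammab^\top \vb - \ub^\top \vb}^2}
    \le 2 \epsilon^2 \delta \, \nbr{\ub}_2^2 \nbr{\vb}_2^2.
\end{align*}

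Next, writing $\Ab = [\ab_1, \ldots, \ab_p]$ and $\Bb = [\bb_1, \ldots, \bb_q]$ column-wise, the Frobenius-norm error decomposes as $\nbr{\Ab^\top \Gammab \Gammab^\top \Bb - \Ab^\top \Bb}_F^2 = \sum_{i,j} (\ab_i^\top \Gammab \Gammab^\top \bb_j - \ab_i^\top \bb_j)^2$. Taking expectations, applying the bilinear bound termwise, and factoring gives $\E[\nbr{\Ab^\top \Gammab \Gammab^\top \Bb - \Ab^\top \Bb}_F^2] \le 2 \epsilon^2 \delta \nbr{\Ab}_F^2 \nbr{\Bb}_F^2$. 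Finally, Markov's inequality at threshold $t = 3 \epsilon \nbr{\Ab}_F \nbr{\Bb}_F$ gives the tail probability $2 \delta/9 \le \delta$, establishing the claim. The main subtlety is matching the constants: the polarization step introduces a factor of $2$, and Markov inflates it by $1/t^2$, so the threshold constant $3$ in the statement is precisely what is needed for $2/9 \le 1$; any tighter prefactor would require concentration beyond second moments.
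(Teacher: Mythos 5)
Your proof is correct and follows essentially the same route as the cited source (Theorem 13 of Woodruff's survey, which the paper invokes without reproving): polarization to extend the scalar JL second-moment bound to bilinear forms, a column-wise Frobenius decomposition, and Markov on the squared error. Your choice of the symmetric subtraction polarization (rather than Woodruff's three-term identity $2\langle \ub,\vb\rangle = \|\ub+\vb\|^2 - \|\ub\|^2 - \|\vb\|^2$) in fact yields the slightly tighter second-moment bound $2\epsilon^2\delta\|\ub\|_2^2\|\vb\|_2^2$ in place of $9\epsilon^2\delta\|\ub\|_2^2\|\vb\|_2^2$, so your final probability is $2\delta/9$ rather than exactly $\delta$; this also means your closing remark that the prefactor $3$ is ``precisely what is needed'' is not quite right — your own derivation shows $\sqrt{2}\epsilon$ would already suffice under the second-moment assumption.
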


One of the most classical constructions of a JLT with JL second moment property is the Gaussian embedding:
\begin{lemma}[Gaussian embedding~{\citep[Theorem 6]{woodruff2014sketching}}]\label{lem:gaussian_jlt}\label{lem:gaussian_jl_second_moment}
    For any $\epsilon > 0$, $\delta \in (0,1)$, a Gaussian random matrix $\Gammab \in \R^{r \times m}$ with $\iid$ entries $\Gammab_{ij} \sim \Ncal(0,1/m)$ 
    \begin{enumerate*}[label=(\roman*)]
        \item is a $(\epsilon,\delta,k)$-JLT if $m \gtrsim \rbr{k + \log(1/\delta)}{\epsilon^{-2}}$; and
        \item satisfies the $(\epsilon,\delta)$-JL second moment property if $m \gtrsim \epsilon^{-2}\delta^{-1}$.
    \end{enumerate*}
\end{lemma}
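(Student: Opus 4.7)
The plan is to prove the two parts separately, both by reduction to standard chi-squared concentration via rotational invariance of the Gaussian.

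For part (ii), the argument is essentially a one-line variance computation. Fix any $\ub \in \SSS^{r-1}$. By rotational invariance (or direct computation), the entries $(\Gammab^\top \ub)_j = \sum_i \Gammab_{ij} u_i$ are $\iid\ \Ncal(0, 1/m)$ across $j \in [m]$, because $\nbr{\ub}_2 = 1$. Hence $m \nbr{\Gammab^\top \ub}_2^2 \sim \chi^2_m$, and using $\V[\chi^2_m] = 2m$ gives
\begin{align*}
    \E\sbr{\rbr{\nbr{\Gammab^\top \ub}_2^2 - 1}^2} = \V\sbr{\nbr{\Gammab^\top \ub}_2^2} = \frac{2}{m}.
\end{align*}
This is $\le \epsilon^2 \delta$ as soon as $m \ge 2/(\epsilon^2 \delta)$, yielding the threshold $m \gtrsim \epsilon^{-2} \delta^{-1}$ in \Cref{def:jl_second_moment}.

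For part (i), I would first reduce to controlling a Wishart matrix. For any $\Ub \in \R^{r \times k}$ with orthonormal columns, set $\Wb \dfeq \Ub^\top \Gammab \in \R^{k \times m}$. Because the iid $\Ncal(0, 1/m)$ distribution is invariant under left multiplication by a matrix with orthonormal rows, $\Wb$ still has iid $\Ncal(0, 1/m)$ entries, so $m \Wb \Wb^\top$ is a standard $\t{Wishart}_k(m, \Ib_k)$ matrix, and the goal becomes $\nbr{\Ib_k - \Wb \Wb^\top}_2 \le \epsilon$.

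The next step is a textbook net argument. Pick a $1/4$-net $\Ncal \subset \SSS^{k-1}$ with $|\Ncal| \le 9^k$; the standard inequality gives $\nbr{\Ib_k - \Wb \Wb^\top}_2 \le 2 \max_{\vb \in \Ncal} \abbr{\vb^\top (\Ib_k - \Wb \Wb^\top) \vb}$. For each fixed $\vb \in \Ncal$, the same computation as in part (ii) yields $m \nbr{\vb^\top \Wb}_2^2 \sim \chi^2_m$, so the Laurent--Massart (or Bernstein) tail bound gives, for $\epsilon \in (0, 1)$,
\begin{align*}
    \Pr\sbr{\abbr{\nbr{\vb^\top \Wb}_2^2 - 1} \ge \epsilon/2} \le 2 \exp\rbr{-c m \epsilon^2}
\end{align*}
for an absolute constant $c > 0$. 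Applying the union bound over $\Ncal$ and setting the total failure probability $\le \delta$ gives $9^k \cdot 2 \exp(-c m \epsilon^2) \le \delta$, which rearranges to $m \gtrsim (k + \log(1/\delta)) \epsilon^{-2}$ as claimed.

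I expect no real obstacle: both parts are classical and follow from rotational invariance plus chi-squared concentration. The only points requiring care are (a) verifying that $\Ub^\top \Gammab$ indeed inherits the iid $\Ncal(0, 1/m)$ law, which is a direct consequence of $\Ub^\top \Ub = \Ib_k$, and (b) tracking the constants through the $1/4$-net doubling and the tail bound so that the final exponent has the stated form $(k + \log(1/\delta))\epsilon^{-2}$. Both are routine, so the main content is really the two-line chi-squared moment computation shared between the two parts.
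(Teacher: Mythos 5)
Your proof of part (ii) is exactly the paper's argument: both compute $\V\sbr{\nbr{\Gammab^\top\ub}_2^2} = 2/m$ from the $\chi^2_m$ law of $m\nbr{\Gammab^\top\ub}_2^2$ and compare against $\epsilon^2\delta$. For part (i), the paper simply cites \cite[Theorem 6]{woodruff2014sketching}, whereas you unpack that citation into the standard self-contained argument (reduce to a Wishart matrix via rotational invariance of $\Ub^\top\Gammab$, a $1/4$-net on $\SSS^{k-1}$ of size $\le 9^k$, Laurent--Massart $\chi^2_m$ tails, and a union bound); this is the same route, merely made explicit, and the bookkeeping is correct.
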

\begin{proof}[Proof of \Cref{lem:gaussian_jl_second_moment}]
    The $(\epsilon,\delta,k)$-JLT condition follows directly from \cite[Theorem 6]{woodruff2014sketching}. 
    
    To show the $(\epsilon,\delta)$-JL second moment property, we observe that for any $\ub \in \SSS^{r-1}$, $\nbr{\Gammab^\top \ub}_2^2 = \ub^\top \Gammab \Gammab^\top \ub$ is an average of $m$ independent $\chi^2$ random variables with mean $1$ and variance $2$, we have $\E\sbr{\nbr{\Gammab^\top \ub}_2^2} = 1$ and its variance is $\E\sbr{\rbr{\nbr{\Gammab^\top \ub}_2^2 - 1}^2} = 2/m$. Therefore, $m \gtrsim \epsilon^{-2}\delta^{-1}$ leads to the $(\epsilon,\delta)$-JL second moment property.
\end{proof}

\begin{remark}[(Fast) Johnson-Lindenstrauss transforms]\label{rmk:fast_jlt}
    While we mainly focus on the Gaussian embedding in the analysis for simplicity, there is a rich spectrum of JLTs with the JL second moment property~\citep{indyk1998approximate,nelson2013osnap,woolfe2008fast,tropp2011improved}, some of which enjoy remarkably better efficiency than the Gaussian embedding without compromising accuracy empirically. We refer interested readers to \citep{woodruff2014sketching,halko2011finding,martinsson2020randomized} for in-depth reviews on different JLTs and their applications, while briefly synopsizing two common choices and their efficiency as follows.
    \begin{enumerate}[label=(\alph*),leftmargin=1.5em]
        \item \b{Subgaussian embedding}~\citep{indyk1998approximate} is a random matrix $\Gammab \in \R^{r \times m}$ with $\iid$ entries from a zero-mean subgaussian distribution with variance $1/m$. Common choices include the Rademacher distribution and Gaussian distribution (\ie, Gaussian embedding). 
        
        Applying subgaussian embeddings to an $N \times r$ matrix $\Ab$ with $\nnz(\Ab) \le Nr$ nonzero entries takes $O(\nnz(\Ab)m) \le O(Nrm)$ time, while the involved matrix-matrix multiplication can be computed distributedly in parallel leveraging the efficiency of Level 3 BLAS~\citep{goto2008high}. In practice, generating and applying Rademacher random matrices tend to be slightly faster than Gaussian embeddings due to the simple discrete support.
        
        \item \b{Sparse sign matrix}~\citep{meng2013low,nelson2013osnap} is a sparse random matrix $\Gammab = \sqrt{\frac{r}{\xi}}\sbr{\gammab_1,\cdots,\gammab_r}^\top \in \R^{r \times m}$ ($\xi \in \N$) with $\iid$ rows $\gammab_j \in \R^m$ each consisting of $\xi$ non-zero entries at uniformly random coordinates filled with Rademacher random variables. When $\xi=1$, $\Gammab$ is known as CountSketch~\citep{charikar2002finding} and requires as many as $m = O(k^2)$ columns to satisfy the JLT property with constant distortion. Increasing the sparsity slightly, \cite{cohen2016nearly} showed that $m = O(k \log k)$ is sufficient for constant-distortion JLT when $\xi = O(\log k)$. In practice, \cite{tropp2017fixed} suggested that a small constant sparsity $\xi \ge 8$ is usually enough for many applications like low-rank approximations.
        
        The sparse sign matrix can be applied to an $N \times r$ matrix $\Ab$ with $\nnz(\Ab)$ nonzero entries in $O(\nnz(\Ab)\xi) \le O(Nr\xi)$ time, independent of the sketching size $m$. With careful implementation, sketching via sparse sign matrices can be significantly faster than the subgaussian embeddings in practice~\citep{martinsson2020randomized,dong2023simpler}.
    \end{enumerate}
\end{remark}

Let $\wt\Gb \dfeq \Gb \Gammab \in \R^{N \times m}$ and $\wt\Gb_S = \Gb_S \Gammab \in \R^{n \times m}$ be the sketched gradients such that 
\begin{align*}
    \skmom{} \dfeq \Gammab^\top \sqrep{} \Gammab = \wt\Gb^\top \wt\Gb / N \in \R^{m \times m}, \quad
    \skmom{S} \dfeq \Gammab^\top \sqrep{S} \Gammab = \wt\Gb_S^\top \wt\Gb_S / n \in \R^{m \times m}.
\end{align*}
In particular, for a Gaussian embedding $\Gammab$, when $\rank(\sqrep{S})=\rank(\Gb_S)=n$, $\rank(\skmom{S}) = \rank(\wt\Gb_S) = m$ almost surely.

Recall the low intrinsic dimension $\ol{r}$ from \Cref{asm:low_intrinsic_dim}. For any $k \in \N$ with $1.1 \ol{r} \le k < m$, let $\Pb_{\Scal} \in \R^{r \times r}$ be an orthogonal projector onto a dimension-$k$ subspace $\Scal \subseteq \range(\sqrep{S})$:
\begin{align}\label{eq:proj_subspace}
    \Pb_{\Scal} \dfeq (\tsvd{\wt\Gb_S}{k}^\dagger \Gb_S)^\dagger (\tsvd{\wt\Gb_S}{k}^\dagger \Gb_S) = \Gb_S^\dagger \tsvd{\wt\Gb_S}{k} \tsvd{\wt\Gb_S}{k}^\dagger \Gb_S, 
\end{align}
and $\Pb_{\Scal}^\perp = \Ib_r - \Pb_{\Scal}$ be its orthogonal complement.
Throughout the proof of \Cref{thm:sketchy_rmm}, we assume the following:
\begin{assumption}\label{asm:jlt_dim}
    Let $\min\cbr{N, r} \gg n > m > k \ge 1.1 \ol{r}$ such that $\rank(\sqrep{S})=n$. We consider a Gaussian embedding (\Cref{lem:gaussian_jl_second_moment}) $\Gammab \in \R^{r \times m}$ with $m \asymp k$ such that $\sval{\skmom{S}}{k} \ge \gamma_S$ for some $\gamma_S > 0$.
\end{assumption}

\begin{proof}[\b{Proof of \Cref{thm:sketchy_rmm_informal,thm:sketchy_rmm}}]
    We first recall from \Cref{thm:linear_probing_high_dim_ridge} that
    \begin{align*}
        \E\sbr{\exrisk\rbr{\thetab_S}} \le \frac{2 \sigma^2}{n} \tr\rbr{\sqrep{} \rbr{\Pb_{\Scal} \sqrep{S} \Pb_{\Scal}}^\dagger} + 2 \tr\rbr{\sqrep{} \Pb_{\Scal}^\perp} \nbr{\thetab_*}_2^2.
    \end{align*}
    \Cref{lem:sketched_least_square_interpolation} suggests that for $m \asymp k/\delta$, with probability at least $1-\delta/2$,
    \begin{align*}
        \tr\rbr{\sqrep{} \rbr{\Pb_{\Scal} \sqrep{S} \Pb_{\Scal}}^\dagger} \lesssim \tr\rbr{\skmom{} \tsvd{\skmom{S}}{k}^\dagger} + \frac{n}{m \gamma_S} \tr\rbr{\sqrep{} \Pb_{\Scal}^\perp}.
    \end{align*}
    Therefore,
    \begin{align*}
        \E\sbr{\exrisk\rbr{\thetab_S}} \lesssim \frac{\sigma^2}{n} \tr\rbr{\skmom{} \tsvd{\skmom{S}}{k}^\dagger} + \rbr{\frac{\sigma^2}{m \gamma_S} + \nbr{\thetab_*}_2^2} \tr\rbr{\sqrep{} \Pb_{\Scal}^\perp}.
    \end{align*}
    Then, applying \Cref{lem:low_rank_approximation} with the union bound, we have
    \begin{align*}
        \tr\rbr{\sqrep{} \Pb_{\Scal}^\perp} \lesssim \frac{1}{n} \nbr{\skmom{} \tsvd{\skmom{S}}{k}^\dagger}_2 \tr\rbr{\sqrep{}}
    \end{align*}
    with probability at least $1-\delta$. This implies
    \begin{align*}
        \E\sbr{\exrisk\rbr{\thetab_S}} \lesssim &\frac{\sigma^2}{n} \rbr{\tr\rbr{\skmom{} \tsvd{\skmom{S}}{k}^\dagger} + \frac{1}{m \gamma_S} \nbr{\skmom{} \tsvd{\skmom{S}}{k}^\dagger}_2 \tr\rbr{\sqrep{}}} \\
        &+ \frac{1}{n} \nbr{\skmom{} \tsvd{\skmom{S}}{k}^\dagger}_2 \tr\rbr{\sqrep{}} \nbr{\thetab_*}_2^2.
    \end{align*}

    If $S$ further satisfies $\skmom{} \aleq c_S \skmom{S}$ for some $c_S \ge \frac{n}{N}$, then we have
    \begin{align*}
        \tr\rbr{\skmom{} \tsvd{\skmom{S}}{k}^\dagger} \le \tr\rbr{\skmom{} (\skmom{S})^\dagger} \le c_S m, \quad
        \nbr{\skmom{} \tsvd{\skmom{S}}{k}^\dagger}_2 \le \nbr{\skmom{} (\skmom{S})^\dagger}_2 \le c_S.
    \end{align*}
    Therefore, \eqref{eq:sketchy_rmm_bound} can be further simplified as
    \begin{align*}
        \E\sbr{\exrisk\rbr{\thetab_S}} \lesssim \frac{c_S \sigma^2}{n} \rbr{m + \frac{\tr\rbr{\sqrep{}}}{m \gamma_S}} + \frac{c_S}{n} \tr\rbr{\sqrep{}} \nbr{\thetab_*}_2^2.
    \end{align*}
    On the right-hand-side, the first (variance) term is minimized at $m = \sqrt{{\tr\rbr{\sqrep{}}}/{\gamma_S}}$ where $m + \tr\rbr{\sqrep{}}/(m \gamma_S) \le 2 \sqrt{\tr\rbr{\sqrep{}}/\gamma_S} = 2m$. In addition, incorporting the assumption that $m \asymp k/\delta$ for some $k \ge 1.1\ol{r}$, we take $m = \max\{ \sqrt{{\tr\rbr{\sqrep{}}}/{\gamma_S}}, 1.1\ol{r}/\delta \}$ and get 
    \begin{align*}
        \E\sbr{\exrisk\rbr{\thetab_S}} \lesssim \frac{c_S \sigma^2}{n} m + \frac{c_S}{n} \tr\rbr{\sqrep{}} \nbr{\thetab_*}_2^2 = \frac{c_S}{n} \rbr{\sigma^2 m + \tr\rbr{\sqrep{}} \nbr{\thetab_*}_2^2}.
    \end{align*}
    \Cref{thm:sketchy_rmm_informal} is simplified from \Cref{thm:sketchy_rmm} by taking $k = \ceil{1.1 \ol{r}}$ and $\delta = 0.1$.
\end{proof}

\subsection{Upper Bounding Variance}
\begin{lemma}\label{lem:sketched_least_square_interpolation}
    For any $\delta \in (0,1)$, let $\Gammab \in \R^{r \times m}$ be a Gaussian embedding (\Cref{lem:gaussian_jl_second_moment}) with $m \asymp k/\delta$ columns. Then, with probability at least $1-\delta$ over $\Gammab$, 
    \begin{align*}
        \tr\rbr{\sqrep{} \rbr{\Pb_{\Scal} \sqrep{S} \Pb_{\Scal}}^\dagger} \lesssim \tr\rbr{\skmom{} \tsvd{\skmom{S}}{k}^\dagger} + \frac{n}{m \gamma_S} \tr\rbr{\sqrep{} \Pb_{\Scal}^\perp}.
    \end{align*}
\end{lemma}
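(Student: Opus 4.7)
The strategy is to express both sides as trace formulas over the matrix $\Phib \dfeq \Gammab\wt\Vb_k \in \R^{r\times k}$, where $\wt\Vb_k$ consists of the top-$k$ eigenvectors of $\skmom{S}$ with eigenvalues $\wt\Lambdab_k \ageq \gamma_S \Ib_k$, and then transfer the LHS to the RHS using the eigenvalue structure of $\skmom{S}$. First I would compute explicit forms. Using the basis-invariant pseudoinverse formula $(\Pb_\Scal\sqrep{S}\Pb_\Scal)^\dagger = \Yb(\Yb^\top\sqrep{S}\Yb)^{-1}\Yb^\top$ valid for any full-column-rank basis $\Yb$ of $\Scal$, the choice $\Yb = \sqrep{S}\Phib$ (whose column span is $\Scal = \range(\sqrep{S}\Phib) = \range(\Gb_S^\top\wt\Ub_k)$) yields $(\Pb_\Scal\sqrep{S}\Pb_\Scal)^\dagger = \sqrep{S}\Phib(\Phib^\top(\sqrep{S})^3\Phib)^{-1}\Phib^\top\sqrep{S}$, so the LHS equals $\tr(\Phib^\top\sqrep{S}\sqrep{}\sqrep{S}\Phib \cdot (\Phib^\top(\sqrep{S})^3\Phib)^{-1})$. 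For the RHS, $\Phib^\top\sqrep{S}\Phib = \wt\Vb_k^\top\skmom{S}\wt\Vb_k = \wt\Lambdab_k$ and $\skmom{} = \Gammab^\top\sqrep{}\Gammab$, so $\tr(\skmom{}\tsvd{\skmom{S}}{k}^\dagger) = \tr(\Phib^\top\sqrep{}\Phib\,\wt\Lambdab_k^{-1})$.

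Next I would invoke the key identity $\Gammab\Gammab^\top\sqrep{S}\Phib = \Phib\wt\Lambdab_k$, obtained by pre-multiplying the eigenvalue equation $\skmom{S}\wt\Vb_k = \wt\Vb_k\wt\Lambdab_k$ by $\Gammab$. This yields the decomposition $\sqrep{S}\Phib = \Phib\wt\Lambdab_k + \Eb$ with $\Eb \dfeq (\Ib_r - \Gammab\Gammab^\top)\sqrep{S}\Phib$, isolating the part of the basis that lies in $\range(\Phib)$ from a sketching residual. Substituting into the formula for $(\Pb_\Scal\sqrep{S}\Pb_\Scal)^\dagger$ and applying the PSD inequality $(\Ub+\Vb)\Cb(\Ub+\Vb)^\top \aleq 2\Ub\Cb\Ub^\top + 2\Vb\Cb\Vb^\top$ (which follows from $(\Ub-\Vb)\Cb(\Ub-\Vb)^\top \ageq 0$ for $\Cb \ageq 0$) with $\Cb = (\Phib^\top(\sqrep{S})^3\Phib)^{-1}$ gives $(\Pb_\Scal\sqrep{S}\Pb_\Scal)^\dagger \aleq 2\Phib\wt\Lambdab_k\Cb\wt\Lambdab_k\Phib^\top + 2\Eb\Cb\Eb^\top$. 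A complementary one-sided lower bound $\Phib^\top(\sqrep{S})^3\Phib \ageq c\,\wt\Lambdab_k^3$ (by expanding $(\sqrep{S}\Phib)^\top\sqrep{S}(\sqrep{S}\Phib)$ through the same decomposition and controlling the $\Eb$-cross-terms on the JLT event) collapses the first summand to $O(1)\cdot\Phib\wt\Lambdab_k^{-1}\Phib^\top$, whose $\sqrep{}$-trace recovers exactly $\tr(\skmom{}\tsvd{\skmom{S}}{k}^\dagger)$.

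Finally, to bound the residual $\tr(\sqrep{}\Eb\Cb\Eb^\top)$, I would apply approximate matrix multiplication (\Cref{lem:approx_mm}) to the product $\sqrep{}^{1/2}(\Ib_r - \Gammab\Gammab^\top)\sqrep{S}\Phib$, using the JL second-moment property of the Gaussian embedding $\Gammab$ to control the AMM error with spread $O(1/m)$, together with $\|\Cb\|_2 \lesssim 1/\gamma_S^3$ from the lower bound above and the $1/n$ normalization of $\sqrep{S}$ inherited through $\skmom{S} = \wt\Gb_S^\top\wt\Gb_S/n$. The main obstacle will be showing that the residual trace reduces to $\tr(\sqrep{}\Pb_\Scal^\perp)$ rather than the weaker $\tr(\sqrep{})$: because $\Gammab\Gammab^\top\sqrep{S}\Phib = \Phib\wt\Lambdab_k$ lies in $\range(\Phib)$ while its pre-image $\sqrep{S}\Phib$ spans the full subspace $\Scal$, the complementary factor $\Ib_r - \Gammab\Gammab^\top$ in $\Eb$ probes only directions that are effectively missed by the projection onto $\Scal$; formalizing this localization from AMM (to recover the $\Pb_\Scal^\perp$ factor in place of the identity) is where the bulk of the technical work lies.
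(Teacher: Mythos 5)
Your setup is correct through the basis-invariant pseudoinverse formula and the identity $\Gammab\Gammab^\top\sqrep{S}\Phib = \Phib\wt\Lambdab_k$, but the argument breaks at the lower bound $\Phib^\top(\sqrep{S})^3\Phib \ageq c\,\wt\Lambdab_k^3$ with universal constant $c$: this is false in the regime $m\ll r$. The columns of $\Phib = \Gammab\wt\Vb_k$ are far from unit vectors; for a Gaussian embedding with entry variance $1/m$, $\Gammab^\top\Gammab$ concentrates around $\frac{r}{m}\Ib_m$, so each column $\vb_j$ of $\Phib$ has $\nbr{\vb_j}_2^2 \asymp r/m$. For a PSD matrix, Jensen gives $\vb_j^\top(\sqrep{S})^3\vb_j \ge \rbr{\vb_j^\top\sqrep{S}\vb_j}^3/\nbr{\vb_j}_2^4 = \sval{\skmom{S}}{j}^3/\nbr{\vb_j}_2^4$, and this bound is saturated (up to constants) whenever $\vb_j$ spreads weight over $\sqrep{S}$-eigendirections with heterogeneous eigenvalues---exactly what the random oblique map $\Gammab:\R^m\to\R^r$ produces. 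Hence the best constant achievable is $c\asymp (m/r)^2$, not $O(1)$, and propagating this replaces the target $\tr(\skmom{}\tsvd{\skmom{S}}{k}^\dagger)$ with $(r/m)^2\,\tr(\skmom{}\tsvd{\skmom{S}}{k}^\dagger)$, which ruins the bound. The culprit is not the cross-terms $\wt\Lambdab_k\Phib^\top\sqrep{S}\Eb$ that you propose to absorb on a JLT event; it is the singular-value inflation of $\Gammab$ as a map $\R^m\to\R^r$ (all singular values $\asymp\sqrt{r/m}$), which JLT/AMM events do not tame since they control only the $r\to m$ direction.

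A second gap you flag but underestimate: \Cref{lem:approx_mm} bounds $\nbr{\Ab^\top\Gammab\Gammab^\top\Bb - \Ab^\top\Bb}_F$ for \emph{fixed} $\Ab,\Bb$, whereas your residual $\Eb = (\Ib_r - \Gammab\Gammab^\top)\sqrep{S}\Phib$ involves $\Phib = \Gammab\wt\Vb_k$ where $\wt\Vb_k$ is itself an eigenbasis of $\skmom{S} = \Gammab^\top\sqrep{S}\Gammab$; AMM does not apply to that product as written and you offer no decoupling mechanism. The paper's proof avoids both obstacles by a genuinely different route: it observes that $\Gb_S\Pb_\Scal\Gammab = \tsvd{\wt\Gb_S}{k}$, so $\Gb(\Gb_S\Pb_\Scal)^\dagger$ and $\wt\Gb\tsvd{\wt\Gb_S}{k}^\dagger$ are the exact and sketched minimum-norm solutions of the \emph{same} fixed-design least-squares system $\min_\Zb\nbr{\Gb - \Zb\,\Gb_S\Pb_\Scal}_F^2$. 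The $\Gammab$-dependence is then handed once and for all to the sketched-regression residual guarantee (\Cref{lem:sketched_least_square_blocked}), the residual is converted to a solution bound via $\nbr{(\Gb_S\Pb_\Scal)^\dagger}_2^2 \lesssim 1/(m\gamma_S)$ from \Cref{lem:subgaussian_smallest_singular_value}, and a triangle inequality closes. Crucially, the paper never needs a two-sided spectral comparison between $\Phib^\top(\sqrep{S})^3\Phib$ and $\wt\Lambdab_k^3$---only a one-sided residual bound, which sketching does deliver.
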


\begin{proof}[Proof of \Cref{lem:sketched_least_square_interpolation}]
    We first observe that since $\rank(\sqrep{S})=n$ implies $\Gb_S \Gb_S^\dagger = \Ib_n$, 
    \begin{align*}
        \Gb_S \Pb_{\Scal} \Gammab = \Gb_S \Gb_S^\dagger \tsvd{\wt\Gb_S}{k} \tsvd{\wt\Gb_S}{k}^\dagger \Gb_S \Gammab = \tsvd{\wt\Gb_S}{k} \tsvd{\wt\Gb_S}{k}^\dagger \wt\Gb_S = \tsvd{\wt\Gb_S}{k},
    \end{align*}
    and therefore, $\Gb (\Gb_S \Pb_{\Scal})^\dagger =\argmin_{\Zb \in \R^{N \times n}} \nbr{\Zb}_F^2 ~\st~ \Zb \in \argmin_{\Zb}\ \nbr{\Gb - \Zb \Gb_S \Pb_{\Scal}}_F^2$ and
    \begin{align*}
        \wt\Gb \tsvd{\wt\Gb_S}{k}^\dagger = \argmin_{\Zb \in \R^{N \times n}} \nbr{\Zb}_F^2 ~\st~ \Zb \in \argmin_{\Zb}\ \nbr{(\Gb - \Zb \Gb_S \Pb_{\Scal}) \Gammab}_F^2
    \end{align*}
    is an approximated solution from a sketched least square problem. 
    
    \paragraph{Accuracy of sketched least square residual.}
    For $m \asymp k/ (\epsilon^2 \delta)$, \Cref{lem:gaussian_jl_second_moment} implies that a Gaussian embedding $\Gammab$ is a $(1/2,\delta/2,k)$-JLT (\Cref{def:jlt}) with $(\epsilon/\sqrt{k}, \delta/2)$-JL second moment property (\Cref{def:jl_second_moment}).
    Then, since $\rank\rbr{\Gb_S \Pb_{\Scal}} = k$, by \Cref{lem:sketched_least_square_blocked}, with probability at least $1-\delta$ over $\Gammab$,
    \begin{align*}
        \nbr{\rbr{\Gb (\Gb_S \Pb_{\Scal})^\dagger - \wt\Gb \tsvd{\wt\Gb_S}{k}^\dagger} \Gb_S \Pb_{\Scal}}_F^2 \le \epsilon^2 \nbr{\Gb - \Gb (\Gb_S \Pb_{\Scal})^\dagger (\Gb_S \Pb_{\Scal})}_2^2.
    \end{align*}
    Since $\Gb_S \Pb_{\Scal} = \Gb_S \Gb_S^\dagger \tsvd{\wt\Gb_S}{k} \tsvd{\wt\Gb_S}{k}^\dagger \Gb_S = \tsvd{\wt\Gb_S}{k} \tsvd{\wt\Gb_S}{k}^\dagger \Gb_S$,
    \begin{align*}
        (\Gb_S \Pb_{\Scal})^\dagger (\Gb_S \Pb_{\Scal}) = \Gb_S^\dagger \tsvd{\wt\Gb_S}{k} \tsvd{\wt\Gb_S}{k}^\dagger \Gb_S = \Pb_{\Scal}.
    \end{align*}
    Therefore,
    \begin{align}\label{eq:sketched_least_square_residual_accuracy}
        \nbr{\rbr{\Gb (\Gb_S \Pb_{\Scal})^\dagger - \wt\Gb \tsvd{\wt\Gb_S}{k}^\dagger} \Gb_S \Pb_{\Scal}}_F^2 \le \epsilon^2 \nbr{\Gb \Pb_{\Scal}^\perp}_F^2.
    \end{align}

    \paragraph{Accuracy of sketched least square solution.}
    To upper bound $\nbr{\Gb (\Gb_S \Pb_{\Scal})^\dagger - \wt\Gb \tsvd{\wt\Gb_S}{k}^\dagger}_F$, we first observe from \eqref{eq:sketched_least_square_residual_accuracy} that
    \begin{align*}
        \nbr{\Gb (\Gb_S \Pb_{\Scal})^\dagger - \wt\Gb \tsvd{\wt\Gb_S}{k}^\dagger}_F^2 \le \epsilon^2 \nbr{(\Gb_S \Pb_{\Scal})^\dagger}_2^2 \nbr{\Gb\Pb_{\Scal}^\perp}_F^2.
    \end{align*}
    $\Gb_S \Pb_{\Scal} \Gb_S^\top = \tsvd{\wt\Gb_S}{k} \tsvd{\wt\Gb_S}{k}^\dagger \Gb_S \Gb_S^\top$. Since $\rank\rbr{\Gb_S} = n$, \Cref{lem:subgaussian_smallest_singular_value} implies that for a Gaussian embedding $\Gammab$, $\Gb_S \Gb_S^\top \ageq O\rbr{\frac{m}{n}} \Gb_S \Gammab \Gammab^\top \Gb_S^\top$ with high probability. Therefore,
    \begin{align*}
        \Gb_S \Pb_{\Scal} \Gb_S^\top \ageq O\rbr{\frac{m}{n}} \tsvd{\wt\Gb_S}{k} \tsvd{\wt\Gb_S}{k}^\dagger \wt\Gb_S \wt\Gb_S^\top = O\rbr{\frac{m}{n}} \tsvd{\wt\Gb_S}{k} \tsvd{\wt\Gb_S}{k}^\top.
    \end{align*}
    Recall that $\tsvd{\skmom{S}}{k} = \frac{1}{n} \tsvd{\wt\Gb_S}{k}^\top \tsvd{\wt\Gb_S}{k}$ and $\sval{\skmom{S}}{k} \ge \gamma_S$, we have
    \begin{align*}
        \nbr{(\Gb_S \Pb_{\Scal})^\dagger}_2^2 = &\nbr{(\Gb_S \Pb_{\Scal} \Gb_S^\top)^\dagger}_2 \le O\rbr{\frac{n}{m}} \nbr{\rbr{\tsvd{\wt\Gb_S}{k}^\top \tsvd{\wt\Gb_S}{k}}^\dagger}_2 \\
        \le &O\rbr{\frac{n}{m}} \frac{1}{n \gamma_S} = O\rbr{\frac{1}{m \gamma_S}}.
    \end{align*}
    Therefore, applying a union bound gives that with probability at least $1-\delta$ over $\Gammab$,
    \begin{align}\label{eq:sketched_least_square_solution_accuracy}
        \nbr{\Gb (\Gb_S \Pb_{\Scal})^\dagger - \wt\Gb \tsvd{\wt\Gb_S}{k}^\dagger}_F^2 \le O\rbr{\frac{\epsilon^2}{m \gamma_S}} \nbr{\Gb\Pb_{\Scal}^\perp}_F^2.
    \end{align}

    To upper bound $\nbr{\Gb (\Gb_S \Pb_{\Scal})^\dagger}_F^2$, we observe that by \eqref{eq:sketched_least_square_solution_accuracy},
    \begin{align*}
        \nbr{\Gb (\Gb_S \Pb_{\Scal})^\dagger}_F^2 \le &2 \nbr{\wt\Gb \tsvd{\wt\Gb_S}{k}^\dagger}_F^2 + 2 \nbr{\Gb (\Gb_S \Pb_{\Scal})^\dagger - \wt\Gb \tsvd{\wt\Gb_S}{k}^\dagger}_F^2 \\
        \lesssim &\nbr{\wt\Gb \tsvd{\wt\Gb_S}{k}^\dagger}_F^2 + \frac{\epsilon^2}{m \gamma_S} \nbr{\Gb\Pb_{\Scal}^\perp}_F^2.
    \end{align*}
    Finally, normalizing by multiplying $n/N$ on both sides gives 
    \begin{align*}
        \tr\rbr{\sqrep{} \rbr{\Pb_{\Scal} \sqrep{S} \Pb_{\Scal}}^\dagger} \lesssim \tr\rbr{\skmom{} \tsvd{\skmom{S}}{k}^\dagger} + \epsilon^2 \frac{n}{m \gamma_S} \tr\rbr{\sqrep{} \Pb_{\Scal}^\perp}.
    \end{align*}
    Taking any small constant $\epsilon > 0$ completes the proof.
\end{proof}

\begin{lemma}[Adapting {\cite[Theorem 23]{woodruff2014sketching}}]\label{lem:sketched_least_square_blocked}
    For any $\epsilon > 0$ and $\delta \in (0,1)$, let $\Gammab \in \R^{r \times m}$ b a $(1/2,\delta/2,k)$-JLT (\Cref{def:jlt}) with $(\epsilon/\sqrt{k}, \delta/2)$-JL second moment property (\Cref{def:jl_second_moment}). Given $\Ab \in \R^{r \times n}$ with $\rank(\Ab) = k$ and $\Bb \in \R^{r \times N}$, let
    \begin{align*}
        &\wh\Wb = \argmin_{\Wb \in \R^{n \times N}} \nbr{\Wb}_F^2 ~\st~ \Wb \in \argmin_{\Wb} \nbr{\Gammab^\top (\Ab\Wb - \Bb)}_F^2, \\
        &\Wb_* = \argmin_{\Wb \in \R^{n \times N}} \nbr{\Wb}_F^2 ~\st~ \Wb \in \argmin_{\Wb} \nbr{\Ab\Wb - \Bb}_F^2.
    \end{align*}
    Then, with probability at least $1-\delta$ over $\Gammab$, $\nbr{\Ab (\wh\Wb - \Wb_*)}_F \le \epsilon \nbr{\Ab \Wb_* - \Bb}_F$.
\end{lemma}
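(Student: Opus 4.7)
The lemma is a blocked, multi-right-hand-side, minimum-norm version of the classical sketch-and-solve guarantee for least squares. My plan is to follow the standard two-ingredient recipe: use the $(1/2,\delta/2,k)$-JLT hypothesis to obtain a subspace embedding for $\col(\Ab)$ (the $1/2$ distortion makes the sketched Gram matrix restricted to this subspace invertible with a cheap condition number), and use the $(\epsilon/\sqrt k,\delta/2)$-JL second moment property via \Cref{lem:approx_mm} to show that sketching does not create spurious correlations between $\col(\Ab)$ and the unsketched residual $\Db \dfeq \Ab \Wb_* - \Bb$. A union bound over the two events gives the $1-\delta$ probability in the conclusion.

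First I would fix a thin SVD $\Ab = \Ub \Sigmab \Vb^\top$ with $\Ub \in \R^{r \times k}$ orthonormal and $\Sigmab \in \R^{k \times k}$ invertible (possible since $\rank(\Ab) = k$). Both $\Wb_* = \Ab^\dagger \Bb$ and $\wh\Wb = (\Gammab^\top \Ab)^\dagger \Gammab^\top \Bb$ are the minimum Frobenius-norm solutions, so both lie in $\col(\Vb)$, and hence so does $\wh\Wb - \Wb_*$. The unsketched normal equations give $\Ab^\top \Db = \b0$, which in particular implies $\Ub^\top \Db = \b0$. Subtracting the unsketched from the sketched normal equations yields
\begin{align*}
\Ab^\top \Gammab \Gammab^\top \Ab\,(\wh\Wb - \Wb_*) \;=\; -\,\Ab^\top \Gammab \Gammab^\top \Db.
\end{align*}
Substituting the SVD and peeling off $\Vb^\top$ on the left and the invertible factor $\Sigmab$ reduces this to
\begin{align*}
\Ub^\top \Gammab \Gammab^\top \Ub \cdot \Sigmab \Vb^\top (\wh\Wb - \Wb_*) \;=\; -\,\Ub^\top \Gammab \Gammab^\top \Db,
\end{align*}
which is a $k \times k$ linear system in exactly the object I care about, since $\Ab (\wh\Wb - \Wb_*) = \Ub \cdot \Sigmab \Vb^\top (\wh\Wb - \Wb_*)$ and $\Ub$ is an isometry in Frobenius norm.

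At this point I invoke the two sketching hypotheses. The JLT property applied to $\Ub$ gives $\nbr{\Ib_k - \Ub^\top \Gammab \Gammab^\top \Ub}_2 \le 1/2$, so $\Ub^\top \Gammab \Gammab^\top \Ub$ is invertible with $\nbr{(\Ub^\top \Gammab \Gammab^\top \Ub)^{-1}}_2 \le 2$. \Cref{lem:approx_mm} applied to the pair $(\Ub, \Db)$, together with $\Ub^\top \Db = \b0$ and $\nbr{\Ub}_F = \sqrt{k}$, gives $\nbr{\Ub^\top \Gammab \Gammab^\top \Db}_F \le 3(\epsilon/\sqrt k)\sqrt k\,\nbr{\Db}_F = 3\epsilon\,\nbr{\Db}_F$. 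Inverting the reduced system, using the isometry of $\Ub$, and combining the two bounds yields $\nbr{\Ab (\wh\Wb - \Wb_*)}_F \le 6\epsilon\,\nbr{\Db}_F$; absorbing the constant $6$ into $\epsilon$ (equivalently, tightening the JL second moment parameter to $\epsilon/(6\sqrt k)$) gives the stated clean factor.

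The obstacles here are entirely bookkeeping rather than conceptual. First, one must verify that both min-norm solutions sit in $\col(\Ab^\top) = \col(\Vb)$ and that the subspace embedding guarantees $\rank(\Gammab^\top \Ab) = k$, so that the reduction through $\Sigmab \Vb^\top$ is unambiguous; this is a direct consequence of the $(1/2,\delta/2,k)$-JLT applied to $\Ub$. Second, the constants $3$ from \Cref{lem:approx_mm} and $2$ from the inverse Gram matrix multiply to $6$, so some care is needed in choosing the JL second moment parameter to recover the clean conclusion as stated. Everything else is standard linear algebra and a union bound.
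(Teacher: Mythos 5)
Your proof is correct and follows essentially the same route as the paper's: an orthonormal basis for $\col(\Ab)$ (you via thin SVD, the paper via reduced QR), the $(1/2,\delta/2,k)$-JLT applied to that basis to control the sketched Gram matrix, \Cref{lem:approx_mm} applied to the basis and the residual $\Db$ using the unsketched orthogonality $\Ub^\top\Db = \b0$, and a union bound over the two events. Your explicit inversion of $\Ub^\top\Gammab\Gammab^\top\Ub$ and the paper's triangle-inequality manipulation of $\nbr{\wh\Zb - \Zb_*}_F$ are two ways of packaging the same factor-$2$ estimate, and both leave the same unspecified constant to be absorbed into $\epsilon$.
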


\begin{proof}[Proof of \Cref{lem:sketched_least_square_blocked}]
    Analogous to the proof of {\cite[Theorem 23]{woodruff2014sketching}}, let $\Ab = \Qb \Rb$ be a reduced QR decomposition of $\Ab$ such that $\Qb \in \R^{r \times k}$ is an orthonormal basis for $\range(\Ab)$, and $\Rb \in \R^{k \times n}$. Reparametrizing $\wh\Zb = \Rb\wh\Wb$ and $\Zb_* = \Rb\Wb_*$, up to constant scaling of $\epsilon$, it is sufficient to show
    \begin{align*}
        \nbr{\Qb (\wh\Zb - \Zb_*)}_F = \nbr{\wh\Zb - \Zb_*}_F \le O(\epsilon) \nbr{\Qb \Zb_* - \Bb}_F.
    \end{align*}

    Since $\Gammab \in \R^{r \times k}$ is an $(1/2,\delta/2,k)$-JLT, we have $\nbr{\Ib_k - \Qb^\top \Gammab \Gammab^\top \Qb}_2 \le 1/2$ with probability at least $1-\delta/2$ and
    \begin{align*}
        \nbr{\wh\Zb - \Zb_*}_F
        \le & \nbr{\Qb^\top \Gammab \Gammab^\top \Qb (\wh\Zb - \Zb_*)}_F + \nbr{\Qb^\top \Gammab \Gammab^\top \Qb (\wh\Zb - \Zb_*) - (\wh\Zb - \Zb_*)}_F \\
        \le &\nbr{\Qb^\top \Gammab \Gammab^\top \Qb (\wh\Zb - \Zb_*)}_F + \nbr{\Ib_{k} - \Qb^\top \Gammab \Gammab^\top \Qb}_2 \nbr{\wh\Zb - \Zb_*}_F \\
        \le &\nbr{\Qb^\top \Gammab \Gammab^\top \Qb (\wh\Zb - \Zb_*)}_F + 1/2 \nbr{\wh\Zb - \Zb_*}_F
    \end{align*}
    which implies $\nbr{\wh\Zb - \Zb_*}_F^2 \le 2 \nbr{\Qb^\top \Gammab \Gammab^\top \Qb (\wh\Zb - \Zb_*)}_F^2$ with probability at least $1-\delta/2$.

    By the normal equation of the sketched least square problem, $\Qb^\top \Gammab \Gammab^\top \Qb \wh\Zb = \Qb^\top \Gammab \Gammab^\top \Bb$. Thus,
    \begin{align*}
        \nbr{\wh\Zb - \Zb_*}_F^2 \le 2 \nbr{\Qb^\top \Gammab \Gammab^\top (\Qb\Zb_* - \Bb)}_F^2.
    \end{align*}
    Since $\Qb^\top (\Qb\Zb_* - \Bb) = -\Qb^\top \rbr{\Ib_r - \Qb \Qb^\top} \Bb = \b0_{k \times N}$ and $\Gammab$ has $(\epsilon/\sqrt{k}, \delta/2)$-JL second moment property, \Cref{lem:approx_mm} implies that with probability at least $1-\delta/2$,
    \begin{align*}
        \nbr{\Qb^\top \Gammab \Gammab^\top (\Qb\Zb_* - \Bb)}_F^2
        \le O\rbr{{\epsilon^2}/{k}} \nbr{\Qb}_F^2 \nbr{\Qb\Zb_* - \Bb}_F^2 
        = O(\epsilon^2) \nbr{\Qb\Zb_* - \Bb}_F^2.
    \end{align*}
    Then, by the union bound, with probability at least $1-\delta$ over $\Gammab$, we have
    \begin{align*}
        \nbr{\wh\Zb - \Zb_*}_F^2 \le 2 \nbr{\Qb^\top \Gammab \Gammab^\top (\Qb\Zb_* - \Bb)}_F^2 \le O(\epsilon^2) \nbr{\Qb\Zb_* - \Bb}_F^2.
    \end{align*}
\end{proof}

\begin{lemma}[\cite{rudelson2009smallest}]\label{lem:subgaussian_smallest_singular_value}
    For a random matrix $\Omegab \in \R^{n \times m}$ ($n > m$) consisting of $\iid$ subgaussian entries with mean zero and variance one, with high probability,
    \begin{align*}
        O\rbr{\rbr{\sqrt{n} - \sqrt{m}}^2} \Ib_n \aleq \Omegab \Omegab^\top \aleq O\rbr{\rbr{\sqrt{n} + \sqrt{m}}^2} \Ib_n.
    \end{align*}
\end{lemma}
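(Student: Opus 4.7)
The statement is a classical bound on the extreme singular values of a tall subgaussian random matrix, attributed to Rudelson--Vershynin. My plan is to prove that with probability at least $1 - 2e^{-c m}$, every singular value $s_i(\Omegab)$ of $\Omegab \in \R^{n \times m}$ lies in the interval $[\sqrt{n} - C\sqrt{m},\ \sqrt{n} + C\sqrt{m}]$; the stated matrix-operator inequality then follows by SVD (with the caveat that when $n > m$ the lower bound is only meaningful on $\range(\Omegab) \subset \R^n$, equivalently on $\Omegab^\top \Omegab \in \R^{m \times m}$, since $\Omegab \Omegab^\top$ has rank at most $m$ and so cannot literally dominate a positive multiple of $\Ib_n$).

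For the upper bound on $\nbr{\Omegab}_2 = s_{\max}(\Omegab)$ I would run a standard $\epsilon$-net argument. Fixing $\vb \in \SSS^{m-1}$, the coordinates of $\Omegab \vb \in \R^n$ are iid mean-zero variance-one subgaussian, so Bernstein applied to $\nbr{\Omegab \vb}_2^2 - n$ gives $\Pr[\nbr{\Omegab \vb}_2 \ge \sqrt{n} + t] \le 2 e^{-c t^2}$. Taking a $(1/4)$-net $\Ncal$ of $\SSS^{m-1}$ of cardinality at most $12^m$ and union-bounding with $t \asymp \sqrt{m} + s$ yields $\max_{\vb \in \Ncal} \nbr{\Omegab \vb}_2 \le \sqrt{n} + C(\sqrt{m}+s)$ with probability at least $1 - 2e^{-c' s^2}$. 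The usual sphere-lift $\nbr{\Omegab}_2 \le (1-1/4)^{-1} \max_{\vb \in \Ncal} \nbr{\Omegab \vb}_2$ then delivers the claim with $s = \sqrt{m}$.

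The lower bound on $s_{\min}(\Omegab)$ is the hard direction, and this is where the content of Rudelson--Vershynin sits. The naive net argument fails because the per-vector tail $\Pr[\nbr{\Omegab \vb}_2 \le \sqrt{n} - t] \le 2 e^{-c t^2}$ forces $t \gtrsim \sqrt{m}$ to beat the $12^m$ net cardinality, and then the sphere-lift costs an additional $(1/4)(\sqrt{n}+C\sqrt{m})$ term that overwhelms $\sqrt{n}-\sqrt{m}$ when $m$ is close to $n$. For purely Gaussian $\Omegab$, Gordon's min-max comparison theorem (Davidson--Szarek) circumvents this and yields the sharp bound directly. For general subgaussian entries, the strategy is to partition $\SSS^{m-1}$ into \emph{compressible} vectors (those within some small $\rho$ of a sparse vector) and \emph{incompressible} vectors: on the compressible part, a much smaller effective net combined with per-vector concentration suffices; on the incompressible part, one establishes a strong small-ball bound by conditioning on all but one column of $\Omegab$ and controlling $\mathrm{dist}(\Omegab \vb, H)$ for an $(m-1)$-dimensional subspace $H$ using Littlewood--Offord/Esseen-type anti-concentration estimates that exploit the genuine spread of incompressible vectors.

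To assemble: writing the thin SVD $\Omegab = \Ub \Sigmab \Vb^\top$ with $\Ub \in \R^{n \times m}$ and $\Sigmab = \diag(s_1,\dots,s_m)$, the two singular-value bounds immediately give $(\sqrt{n}-C\sqrt{m})^2 \Ib_m \aleq \Omegab^\top \Omegab \aleq (\sqrt{n}+C\sqrt{m})^2 \Ib_m$, together with $\Omegab \Omegab^\top \aleq (\sqrt{n}+C\sqrt{m})^2 \Ib_n$ on all of $\R^n$ and $\Omegab \Omegab^\top \ageq (\sqrt{n}-C\sqrt{m})^2 \Ub \Ub^\top$ on its column range. The main obstacle is precisely the anti-concentration estimate on incompressible vectors, which genuinely needs ideas beyond $\epsilon$-net covering and is the technical heart of the Rudelson--Vershynin paper invoked here as a black box; within the rest of the paper only the upper-bound half is actually used, in the proof of \Cref{lem:sketched_least_square_interpolation}.
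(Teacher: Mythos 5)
The paper supplies no internal proof for this lemma; it is invoked purely as a citation to Rudelson--Vershynin, so there is no paper argument to compare yours against. Your sketch is a faithful reconstruction of how that cited result is actually proved: the upper bound via per-vector subgaussian concentration on $\nbr{\Omegab\vb}_2$, a $12^m$-cardinality net on $\SSS^{m-1}$, a union bound, and the $(1-\epsilon)^{-1}$ sphere-lift; and the lower bound via the compressible/incompressible split with distance-to-random-hyperplane and Littlewood--Offord/Esseen anti-concentration, which is the genuine technical content in the nearly-square regime where the naive net argument collapses. That is indeed the structure of the Rudelson--Vershynin argument, so your outline is correct.

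Two of your side remarks deserve emphasis because they point to a real imprecision in the lemma as stated. First, since $\Omegab \in \R^{n\times m}$ with $n > m$, the matrix $\Omegab\Omegab^\top \in \R^{n\times n}$ has rank at most $m < n$ and hence a nontrivial kernel, so the displayed lower bound $c(\sqrt{n}-\sqrt{m})^2\,\Ib_n \aleq \Omegab\Omegab^\top$ with $c>0$ cannot hold literally; the correct formulation is $\Omegab^\top\Omegab \ageq c(\sqrt{n}-\sqrt{m})^2\,\Ib_m$, equivalently a lower bound on $\Omegab\Omegab^\top$ restricted to $\range(\Omegab)$. Second, you are right that within the paper only the upper bound is actually consumed: in the proof of \Cref{lem:sketched_least_square_interpolation} the claim $\Gb_S\Gb_S^\top \ageq O(m/n)\,\Gb_S\Gammab\Gammab^\top\Gb_S^\top$ reduces (after a rotation by rotational invariance of the Gaussian embedding) to an upper bound $\nbr{\Omegab}_2^2 \lesssim n$, which is the easy direction and needs none of the anti-concentration machinery. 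So for the purposes of this paper, one could replace the citation to Rudelson--Vershynin with the elementary one-sided net argument you sketch in your second paragraph, and also state only the operator-norm upper bound, which would both tighten the presentation and avoid the rank-deficiency issue in the lower-bound inequality.
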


\subsection{Upper Bounding Low-rank Approximation Error}
\begin{lemma}\label{lem:low_rank_approximation}
    Under \Cref{asm:low_intrinsic_dim}, let $\Gammab \in \R^{r \times m}$ be a Gaussian embedding (\Cref{lem:gaussian_jl_second_moment}) such that there exists $m > k \ge 1.1 \ol{r}$ satisfying $\sval{\skmom{S}}{k} \ge \gamma_S$ for some $\gamma_S > 0$. Then, with probability at least $1 - \exp\rbr{-\Omega(\ol{r})}$,
    \begin{align*}
        \tr\rbr{\sqrep{} \Pb_{\Scal}^\perp} \lesssim \frac{1}{n} \nbr{\skmom{} \tsvd{\skmom{S}}{k}^\dagger}_2 \tr\rbr{\sqrep{}}.
    \end{align*}
\end{lemma}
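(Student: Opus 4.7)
The first reduction is standard: since $\Pb_{\Scal}^\perp$ is an orthogonal projector, $\tr\rbr{\sqrep{}\Pb_{\Scal}^\perp} = \nbr{\Gb\Pb_{\Scal}^\perp}_F^2/N$, so the target inequality is equivalent to $\nbr{\Gb\Pb_{\Scal}^\perp}_F^2 \lesssim (N/n)\nbr{\skmom{}\tsvd{\skmom{S}}{k}^\dagger}_2\tr\rbr{\sqrep{}}$. Recalling \eqref{eq:proj_subspace}, $\Pb_{\Scal}$ is the rank-$k$ orthogonal projector onto $\range(\Gb_S^\top \Ub_k)$, where $\Ub_k\in\R^{n\times k}$ collects the top-$k$ left singular vectors of $\wt\Gb_S$. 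The claim therefore asserts that $\Gb$ is well-approximated in Frobenius norm by its projection onto a sketch-determined $k$-dim subspace of $\range(\Gb_S^\top)$, with suboptimality captured by $\nbr{\skmom{}\tsvd{\skmom{S}}{k}^\dagger}_2$ and the $N/n$ blow-up reflecting that $\Pb_{\Scal}$ is confined to the $n$-dim row space of $\Gb_S$.

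The second step is a sketching transfer. Let $\Wb_k\dfeq\tsvd{\skmom{S}}{k}^\dagger\tsvd{\skmom{S}}{k}\in\R^{m\times m}$ be the top-$k$ eigenprojector of $\skmom{S}$, which is the natural sketched analogue of $\Pb_{\Scal}$ since both are built from the top-$k$ singular structure of $\wt\Gb_S$. Using the JL-second-moment property of the Gaussian $\Gammab$ (\Cref{lem:gaussian_jl_second_moment}) combined with approximate matrix multiplication (\Cref{lem:approx_mm}), I would argue that, conditional on $\Ub_k$ and hence on $\Wb_k$, $\Gammab$ acts as an approximate isometry on the $k$-dim subspace $\range(\Pb_{\Scal})$; invoking the $(\epsilon,\delta,k)$-JLT property at $m\asymp k$ (\Cref{lem:gaussian_jlt}), which fails with probability $\exp(-\Omega(k))=\exp(-\Omega(\ol{r}))$, this transfer yields $\nbr{\Gb\Pb_{\Scal}^\perp}_F^2\lesssim N\cdot\tr(\skmom{}\Wb_k^\perp)$ up to a universal constant.

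The third step bounds the sketched residual via a direct trace inequality: for PSD $\Ab\ageq 0$ and a projector $\Qb$ supported on the complement of the top-$k$ eigenspace of $\Bb\ageq 0$, $\tr(\Ab\Qb)\le\nbr{\Ab\tsvd{\Bb}{k}^\dagger}_2\tr(\Bb\Qb)$; applied to $\Ab=\skmom{}$, $\Bb=\skmom{S}$, $\Qb=\Wb_k^\perp$ this gives
\begin{align*}
    \tr(\skmom{}\Wb_k^\perp)\ \le\ \nbr{\skmom{}\tsvd{\skmom{S}}{k}^\dagger}_2\,\tr(\skmom{S}\Wb_k^\perp).
\end{align*}
The tail $\tr(\skmom{S}\Wb_k^\perp)=\tr(\skmom{S}-\tsvd{\skmom{S}}{k})\le\tr(\skmom{S})$, and the spectral majorization $n\sqrep{S}\aleq N\sqrep{}$ (since $\Gb_S^\top\Gb_S\aleq\Gb^\top\Gb$) together with Gaussian concentration of $\tr(\skmom{})=\tr(\sqrep{}\,\Gammab\Gammab^\top)$ around $\tr(\sqrep{})$ yields $\tr(\skmom{S})\lesssim(N/n)\tr(\sqrep{})$ on the same high-probability event. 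Chaining the inequalities and renormalizing by $N$ gives the stated bound; \Cref{asm:low_intrinsic_dim} enters implicitly through $k\ge 1.1\ol{r}$, which is what allows $\Wb_k$ to capture the effective spectrum and keeps the $\tr(\skmom{S}\Wb_k^\perp)$ piece from swamping $\tr(\sqrep{})$.

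The main obstacle will be Step~2: because $\Pb_{\Scal}$ is itself $\Gammab$-dependent (through $\Ub_k$), a naive JLT on a fixed subspace does not apply. The proof must either condition on $\Ub_k$ and exploit the rotational invariance of Gaussians on a $k$-dim subspace of $\R^r$, or deploy an oblivious subspace embedding uniform over all $k$-dim subspaces---this is precisely where the $1-\exp(-\Omega(\ol{r}))$ guarantee is paid. A secondary difficulty is maintaining tightness of the constant so that only the single operator norm $\nbr{\skmom{}\tsvd{\skmom{S}}{k}^\dagger}_2$ appears in the final bound, rather than a product picking up $\sigma_k(\skmom{S})^{-1}$ twice from both the transfer and the trace inequality.
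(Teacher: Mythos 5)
Your proposal takes a genuinely different route from the paper, but it has two substantive gaps that break the argument.

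\textbf{The trace inequality in Step~3 is false as stated.} You claim that for PSD $\Ab,\Bb$ and a projector $\Qb$ onto the complement of the top-$k$ eigenspace of $\Bb$, one has $\tr(\Ab\Qb)\le\nbr{\Ab\tsvd{\Bb}{k}^\dagger}_2\tr(\Bb\Qb)$. Take $\Bb=\diag(1,1,10^{-3},10^{-3})$, $\Ab=\diag(1,1,10^{2},10^{2})$, $k=2$, so $\Qb$ projects onto the last two coordinates. Then $\tr(\Ab\Qb)=200$, $\tr(\Bb\Qb)=2\cdot 10^{-3}$, but $\Ab\tsvd{\Bb}{k}^\dagger=\diag(1,1,0,0)$ has operator norm $1$, so the right-hand side is $2\cdot 10^{-3}$. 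The issue is structural: $\tsvd{\Bb}{k}^\dagger$ is supported on the \emph{top}-$k$ eigenspace of $\Bb$, so $\nbr{\Ab\tsvd{\Bb}{k}^\dagger}_2$ carries no information about how $\Ab$ compares to $\Bb$ on the complementary subspace $\Qb$, which is exactly where the comparison is needed. An inequality of the form $\tr(\Ab\Qb)\le\nbr{\Ab\Bb^\dagger}_2\tr(\Bb\Qb)$ with the full pseudoinverse would be true, but that is precisely the quantity whose unboundedness (when $\skmom{S}$ has small tail eigenvalues) the truncation to $\tsvd{\skmom{S}}{k}^\dagger$ is designed to avoid.

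\textbf{The final chain is off by a factor of $N$, because \Cref{asm:low_intrinsic_dim} is not used quantitatively.} Even granting Steps~2 and~3, you end the chain with $\tr(\skmom{S}\Wb_k^\perp)\le\tr(\skmom{S})\lesssim(N/n)\tr(\sqrep{})$, giving $\tr(\sqrep{}\Pb_{\Scal}^\perp)\lesssim\frac{N}{n}\nbr{\skmom{}\tsvd{\skmom{S}}{k}^\dagger}_2\tr(\sqrep{})$, an extra $N$ compared with the target. The mechanism the paper uses is precisely to cancel this $N$: it bounds the residual by a rank-$\ol{r}$ low-rank approximation error, $\nbr{\Gb-\tsvd{\Gb}{\ol{r}}}_F^2 = N\,\tr(\sqrep{}-\tsvd{\sqrep{}}{\ol{r}})$, and then invokes \Cref{asm:low_intrinsic_dim} which says $\tr(\sqrep{}-\tsvd{\sqrep{}}{\ol{r}})\le\tr(\sqrep{})/N$. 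Your statement that the assumption ``enters implicitly through $k\ge 1.1\ol{r}$'' misses this: the inequality you need at the end is not $\tr(\skmom{S}\Wb_k^\perp)\le\tr(\skmom{S})$ but that the tail beyond rank $\ol{r}$ is of order $\tr(\sqrep{})/N$, which is a much stronger statement that has to be tracked all the way through.

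For contrast, the paper's proof avoids both difficulties at once by a structural factorization. Using the oblique projectors $\Mb_S = \Gb\Gb_S^\dagger\tsvd{\wt\Gb_S}{k}\tsvd{\wt\Gb_S}{k}^\dagger\Pib_S$ and $\wt\Mb_S = \wt\Gb\tsvd{\wt\Gb_S}{k}^\dagger\Pib_S$ (both $N\times N$), it establishes $\Gb\Pb_{\Scal}^\perp = (\Ib_N-\wt\Mb_S)(\Ib_N-\wt\Gb\tsvd{\wt\Gb_S}{k}^\dagger\tsvd{\wt\Gb_S}{k}\wt\Gb^\dagger)\Gb\Pb_{\Scal}^\perp$, so that $\nbr{\Gb\Pb_{\Scal}^\perp}_F \le \nbr{\wt\Mb_S}_2\,\nbr{(\Ib_N-\wt\Gb\tsvd{\wt\Gb_S}{k}^\dagger\tsvd{\wt\Gb_S}{k}\wt\Gb^\dagger)\Gb}_F$. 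The operator-norm factor delivers $\frac{N}{n}\nbr{\skmom{}\tsvd{\skmom{S}}{k}^\dagger}_2$ cleanly (via the oblique-projector norm identity $\nbr{\Ib_N-\wt\Mb_S}_2=\nbr{\wt\Mb_S}_2$), while the Frobenius factor is exactly the randomized range-finder error, controlled by rotational invariance of the Gaussian embedding and \Cref{lem:randomized_range_finder_error}. No problematic trace inequality and no uncontrolled transfer between $\R^r$ and $\R^m$ are needed; everything stays in $\R^N$, and the low-intrinsic-dimension assumption enters explicitly at the last step to cancel the $N$.
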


\begin{proof}[Proof of \Cref{lem:low_rank_approximation}]
    Here we follow a similar proof strategy as \cite[Theorem 1]{dong2023simpler}. Let $\Pib_S \dfeq \loc{\Ib_N}{S} \in \R^{n \times N}$ be the selection matrix associated with $S \subseteq [N]$. We introduce the following $N \times N$ oblique projectors:
    \begin{align*}
        \Mb_S \dfeq \Gb \Gb_S^\dagger \tsvd{\wt\Gb_S}{k} \tsvd{\wt\Gb_S}{k}^\dagger \Pib_S, \quad
        \wt\Mb_S \dfeq \wt\Gb \tsvd{\wt\Gb_S}{k}^\dagger \Pib_S.
    \end{align*}
    In particular, $\Mb_S$ and $\wt\Mb_S$ are the oblique projectors since with $\Pib_S \Gb = \Gb_S$ and $\Gb_S \Gb_S^\dagger = \Ib_n$,
    \begin{align*}
        \Mb_S^2 = &\Gb \Gb_S^\dagger \tsvd{\wt\Gb_S}{k} \tsvd{\wt\Gb_S}{k}^\dagger \Gb_S \Gb_S^\dagger \tsvd{\wt\Gb_S}{k} \tsvd{\wt\Gb_S}{k}^\dagger \Pib_S \\
        = &\Gb \Gb_S^\dagger \tsvd{\wt\Gb_S}{k} \tsvd{\wt\Gb_S}{k}^\dagger \Pib_S = \Mb_S,
    \end{align*}
    and with $\Pib_S \wt\Gb = \wt\Gb_S$,
    \begin{align*}
        \wt\Mb_S^2 = &\wt\Gb \tsvd{\wt\Gb_S}{k}^\dagger \wt\Gb_S \tsvd{\wt\Gb_S}{k}^\dagger \Pib_S = \wt\Gb \tsvd{\wt\Gb_S}{k}^\dagger \Pib_S = \wt\Mb_S.
    \end{align*}
    Recalling $\Pb_{\Scal}$ from \eqref{eq:proj_subspace}, we observe the following identities:
    \begin{align}\label{eq:left_right_proj}
        \Mb_S \Gb = \Gb \Gb_S^\dagger \tsvd{\wt\Gb_S}{k} \tsvd{\wt\Gb_S}{k}^\dagger \Gb_S = \Gb \Pb_{\Scal};
    \end{align}
    since $\Gb_S \Gb_S^\dagger = \Ib_n$,
    \begin{align}\label{eq:oblique_proj_alignment}
        \wt\Mb_S \Mb_S = \wt\Gb \tsvd{\wt\Gb_S}{k}^\dagger \Gb_S \Gb_S^\dagger \tsvd{\wt\Gb_S}{k} \tsvd{\wt\Gb_S}{k}^\dagger \Pib_S = \wt\Gb \tsvd{\wt\Gb_S}{k}^\dagger \Pib_S = \wt\Mb_S;
    \end{align}
    and 
    \begin{align}\label{eq:proj_feature_alignment}
        \wt\Mb_S \wt\Gb \tsvd{\wt\Gb_S}{k}^\dagger \tsvd{\wt\Gb_S}{k} = \wt\Gb \tsvd{\wt\Gb_S}{k}^\dagger \wt\Gb_S \tsvd{\wt\Gb_S}{k}^\dagger \tsvd{\wt\Gb_S}{k} = \wt\Gb \tsvd{\wt\Gb_S}{k}^\dagger \tsvd{\wt\Gb_S}{k}.
    \end{align}

    Combining \eqref{eq:left_right_proj}, \eqref{eq:oblique_proj_alignment}, and \eqref{eq:proj_feature_alignment}, we have
    \begin{align*}
        \Gb \Pb_{\Scal} = &\Gb - \Gb \Pb_{\Scal} = \rbr{\Ib_N - \Mb_S} \Gb \quad \rbr{\t{by \eqref{eq:left_right_proj}}} \\
        = &\rbr{\Ib_N - \wt\Mb_S} \rbr{\Ib_N - \Mb_S} \Gb \quad \rbr{\t{by \eqref{eq:oblique_proj_alignment}}} \\
        = &\rbr{\Ib_N - \wt\Mb_S} \Gb \Pb_{\Scal}^\perp \quad \rbr{\t{by \eqref{eq:left_right_proj}}} \\
        = &\rbr{\Ib_N - \wt\Mb_S} \rbr{\Ib_N - \wt\Gb \tsvd{\wt\Gb_S}{k}^\dagger \tsvd{\wt\Gb_S}{k} \wt\Gb^\dagger} \Gb \Pb_{\Scal}^\perp \quad \rbr{\t{by \eqref{eq:proj_feature_alignment}}}.
    \end{align*}
    Since $\nbr{\Pb_{\Scal}^\perp}_2^2 = 1$, this implies
    \begin{align*}
        \tr\rbr{\sqrep{} \Pb_{\Scal}^\perp} = &\frac{1}{N} \nbr{\Gb \Pb_{\Scal}}_F^2 = \frac{1}{N} \nbr{\rbr{\Ib_N - \wt\Mb_S} \rbr{\Ib_N - \wt\Gb \tsvd{\wt\Gb_S}{k}^\dagger \tsvd{\wt\Gb_S}{k} \wt\Gb^\dagger} \Gb \Pb_{\Scal}^\dagger}_F^2 \\
        \le &\frac{1}{N} \nbr{\Ib_N - \wt\Mb_S}_2^2 \nbr{\rbr{\Ib_N - \wt\Gb \tsvd{\wt\Gb_S}{k}^\dagger \tsvd{\wt\Gb_S}{k} \wt\Gb^\dagger} \Gb}_F^2.
    \end{align*}
    By the operator norm identity for projectors~\cite{szyld2006many}, we have 
    \begin{align*}
        \nbr{\Ib_N - \wt\Mb_S}_2^2 = \nbr{\wt\Mb_S}_2^2 = \nbr{\wt\Gb \tsvd{\wt\Gb_S}{k}^\dagger \Pib_S}_2^2 = \nbr{\wt\Gb \tsvd{\wt\Gb_S}{k}^\dagger}_2^2 = \frac{N}{n} \nbr{\skmom{} \tsvd{\skmom{S}}{k}^\dagger}_2,
    \end{align*}
    and therefore,
    \begin{align*}
        \tr\rbr{\sqrep{} \Pb_{\Scal}^\perp} \le \frac{1}{n} \nbr{\skmom{} \tsvd{\skmom{S}}{k}^\dagger}_2 \nbr{\rbr{\Ib_N - \wt\Gb \tsvd{\wt\Gb_S}{k}^\dagger \tsvd{\wt\Gb_S}{k} \wt\Gb^\dagger} \Gb}_F^2.
    \end{align*}
    Since $\wt\Gb \tsvd{\wt\Gb_S}{k}^\dagger \tsvd{\wt\Gb_S}{k} \wt\Gb^\dagger$ is a rank-$k$ orthogonal projector onto
    \begin{align*}
        \range\rbr{\wt\Gb \tsvd{\wt\Gb_S}{k}^\dagger} = \range\rbr{\Gb \rbr{\Gammab \tsvd{\wt\Gb_S}{k}^\dagger}}
    \end{align*}
    and Gaussian embeddings are rotationally invariant, $\wt\Gb \tsvd{\wt\Gb_S}{k}^\dagger \tsvd{\wt\Gb_S}{k} \wt\Gb^\dagger$ shares the same distribution as $(\Gb \Omegab) (\Gb \Omegab)^\dagger$ for a $r \times k$ Gaussian embedding $\Omegab$ with $\loc{\Omegab}{i,j} \sim \Ncal(0,1/k)\ \iid$. Then, we observe that $\|(\Ib_N - \wt\Gb \tsvd{\wt\Gb_S}{k}^\dagger \tsvd{\wt\Gb_S}{k} \wt\Gb^\dagger) \Gb\|_F^2$ is the rank-$k$ randomized range-finder error of $\Gb$, which can be controlled according to \Cref{lem:randomized_range_finder_error}: with probability at least $1 - \exp\rbr{-\Omega(\ol{r})}$,
    \begin{align*}
        \tr\rbr{\sqrep{} \Pb_{\Scal}^\perp} \lesssim \frac{1}{n} \nbr{\skmom{} \tsvd{\skmom{S}}{k}^\dagger}_2 \nbr{\Gb - \tsvd{\Gb}{\ol{r}}}_F^2 = \frac{N}{n} \nbr{\skmom{} \tsvd{\skmom{S}}{k}^\dagger}_2 \tr\rbr{\sqrep{} - \tsvd{\sqrep{}}{\ol{r}}}.
    \end{align*}
    By the definition of $\ol{r}$ in \Cref{asm:low_intrinsic_dim}, $\tr\rbr{\sqrep{} - \tsvd{\sqrep{}}{\ol{r}}} \le \tr\rbr{\sqrep{}} / N$ and thus,
    \begin{align*}
        \tr\rbr{\sqrep{} \Pb_{\Scal}^\perp} \lesssim \frac{1}{n} \nbr{\skmom{} \tsvd{\skmom{S}}{k}^\dagger}_2 \tr\rbr{\sqrep{}}.
    \end{align*}
\end{proof}

\begin{lemma}[Randomized range-finder error (simplifying {\cite[Theorem 10.7]{halko2011finding}})]\label{lem:randomized_range_finder_error}
    Let $\Omegab \in \R^{r \times k}$ be a Gaussian embedding with $\loc{\Omegab}{i,j} \sim \Ncal(0,1/k)\ \iid$. For any $\Gb \in \R^{N \times r}$ and $\ol{r} \in \N$ such that $1.1 \ol{r} \le k \ll \min\cbr{N,r}$, with probability at least $1 - \exp\rbr{-\Omega(\ol{r})}$,
    \begin{align*}
        \nbr{\rbr{\Ib_N - (\Gb \Omegab) (\Gb \Omegab)^\dagger} \Gb}_F \lesssim \nbr{\Gb - \tsvd{\Gb}{\ol{r}}}_F.
    \end{align*}
\end{lemma}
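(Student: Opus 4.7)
The plan is to follow the classical randomized range-finder analysis of Halko-Martinsson-Tropp and verify that standard Gaussian concentration supplies the stated high-probability form.

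First, I would take the SVD $\Gb = \Ub \Sigmab \Vb^\top$ and partition $\Vb = [\Vb_1, \Vb_2]$ with $\Vb_1 \in \R^{r \times \ol{r}}$ spanning the top $\ol{r}$ right singular directions, so that $\Sigmab_2 \dfeq \diag(\sigma_{\ol{r}+1}, \ldots, \sigma_r)$ satisfies $\nbr{\Sigmab_2}_F = \nbr{\Gb - \tsvd{\Gb}{\ol{r}}}_F$. Writing $\Omegab_1 \dfeq \Vb_1^\top \Omegab \in \R^{\ol{r} \times k}$ and $\Omegab_2 \dfeq \Vb_2^\top \Omegab \in \R^{(r-\ol{r}) \times k}$, rotational invariance of the Gaussian distribution implies that $\Omegab_1, \Omegab_2$ are independent with the same entry-wise law $\Ncal(0,1/k)$ as $\Omegab$.

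Next, I would invoke the deterministic structural bound of Halko-Martinsson-Tropp (Theorem 9.1 in \citep{halko2011finding}): whenever $\Omegab_1$ has full row rank,
\begin{align*}
\nbr{\rbr{\Ib_N - (\Gb\Omegab)(\Gb\Omegab)^\dagger}\Gb}_F^2 \le \nbr{\Sigmab_2}_F^2 + \nbr{\Sigmab_2 \Omegab_2 \Omegab_1^\dagger}_F^2,
\end{align*}
so by submultiplicativity it suffices to bound $\nbr{\Sigmab_2 \Omegab_2}_F \cdot \nbr{\Omegab_1^\dagger}_2$ by a constant multiple of $\nbr{\Sigmab_2}_F$ on an event of probability $1 - \exp(-\Omega(\ol{r}))$.

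For $\nbr{\Omegab_1^\dagger}_2$, the rescaled matrix $\sqrt{k}\,\Omegab_1 \in \R^{\ol{r} \times k}$ has \iid\ standard Gaussian entries, and since $k \ge 1.1 \ol{r}$, applying \Cref{lem:subgaussian_smallest_singular_value} to $\sqrt{k}\,\Omegab_1^\top$ gives $\sigma_{\min}(\sqrt{k}\,\Omegab_1) \gtrsim \sqrt{k} - \sqrt{\ol{r}}$, i.e.\ $\sigma_{\min}(\Omegab_1) \gtrsim 1 - \sqrt{\ol{r}/k} \ge 1 - 1/\sqrt{1.1}$, a positive constant, with failure probability $\exp(-\Omega(\ol{r}))$; hence $\nbr{\Omegab_1^\dagger}_2 \lesssim 1$. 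For $\nbr{\Sigmab_2 \Omegab_2}_F^2 = \sum_{i > \ol{r}} \sigma_i^2 \nbr{\omegab_{2,i}}_2^2$ with $k\nbr{\omegab_{2,i}}_2^2 \sim \chi^2_k$ independent, Bernstein's inequality for sub-exponential sums yields $\nbr{\Sigmab_2 \Omegab_2}_F^2 \lesssim \nbr{\Sigmab_2}_F^2$ with the same type of tail $\exp(-\Omega(k)) \le \exp(-\Omega(\ol{r}))$. A union bound then delivers the claim.

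The main subtlety is confirming that the Bernstein tail for the weighted chi-squared deviation $\sum_i \sigma_i^2 (\nbr{\omegab_{2,i}}_2^2 - 1)$ really scales as $\exp(-\Omega(\ol{r}))$ independently of how the tail spectrum of $\Gb$ is distributed across $i > \ol{r}$. This is resolved once one observes that both $\min$-terms in the Bernstein exponent pick up the full factor $k$: the variance part gives $k \cdot (\sum_i \sigma_i^2)^2 / \sum_i \sigma_i^4 \ge k$, and the sub-exponential part gives $k \cdot \sum_i \sigma_i^2 / \max_i \sigma_i^2 \ge k$. Since $k \ge 1.1 \ol{r}$, both contribute the required $\Omega(\ol{r})$ exponent, and no implicit lower bound on $\ol{r}$ relative to $\log r$ is needed.
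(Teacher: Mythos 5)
Your proposal is correct. The paper gives no proof of this lemma---it is stated as a simplification of \cite[Theorem 10.7]{halko2011finding} and cited without a standalone derivation---so your proposal is in effect a self-contained re-derivation rather than a reproduction of an argument the paper spells out. You do follow the same structural skeleton as HMT's own proof of their Theorem 10.7: the deterministic bound $\nbr{\rbr{\Ib_N - \Pb_Y}\Gb}_F^2 \le \nbr{\Sigmab_2}_F^2 + \nbr{\Sigmab_2\Omegab_2\Omegab_1^\dagger}_F^2$ from \cite[Theorem 9.1]{halko2011finding}, rotational invariance to make $\Omegab_1, \Omegab_2$ independent Gaussian blocks, a smallest-singular-value bound for $\Omegab_1$, and a concentration bound for $\nbr{\Sigmab_2\Omegab_2}_F$. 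The place you depart is the tail machinery: HMT's Theorem 10.7 is phrased for a fixed additive oversampling parameter $p$ and uses Lipschitz-concentration-of-Gaussian-functions plus their Proposition 10.1 moment bounds, whereas you exploit the constant multiplicative oversampling $k \ge 1.1\ol{r}$ to replace all of this with a single Laurent--Massart/Bernstein bound for the weighted $\chi^2$ sum $\sum_{i>\ol{r}}\sigma_i^2\nbr{\omegab_{2,i}}_2^2$. Your observation that both branches of the Bernstein minimum carry a full factor of $k$ (because $\sum_i\sigma_i^4 \le (\sum_i\sigma_i^2)^2$ and $\max_i\sigma_i^2 \le \sum_i\sigma_i^2$) is exactly what makes the tail $\exp(-\Omega(k)) \le \exp(-\Omega(\ol{r}))$ independent of the shape of the tail spectrum, and it is the reason the lemma holds at the stated level of generality. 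For $\nbr{\Omegab_1^\dagger}_2$ you invoke the paper's \Cref{lem:subgaussian_smallest_singular_value}, which is consistent with the rest of the paper's toolkit (note that lemma as written has an obvious $\Omegab\Omegab^\top$ vs.\ $\Omegab^\top\Omegab$ typo for the tall case; your application to $\sqrt{k}\,\Omegab_1^\top$ reads it the intended way). In short: correct, and somewhat more elementary and self-contained than simply deferring to HMT Theorem 10.7, at the cost of being specialized to the multiplicative-oversampling regime the lemma actually needs.
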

\section{Experiment Details for \Cref{sec:synthetic_exp}}

\subsection{Implementation Details}\label{apx:synthetic_data_details}
\paragraph{Synthetic data generation.}
We consider a set of $N = 2000$ samples with high-dimensional pre-trained representations $\phi(\Xb) \in \R^{N \times r}$ where $r=2400$, modeled by a Gaussian mixture model (GMM) consisting of $\ol{r} = 8$ well-separated clusters, each with random sizes and variances. Specifically, we generate the GMM dataset as follows:
\begin{itemize}
    \item Randomly partition the $N$ samples into $\ol{r} = 8$ clusters with sizes $\csepp{N_j}{j \in [\ol{r}]}$.
    \item For each $j \in [\ol{r}]$, generate the cluster mean $\mub_j \in \R^r$ with $\mub_j = (Z_j  \ol{r}) \cdot \eb_j$ where $Z_j \sim \unif([\ol{r}])$ and variance $\sigma_j = Z'_j \cdot \sigma_{\max}$ where $Z'_j \sim \unif([0,1])$ and $\sigma_{\max} = 0.04$.
    \item Generate representations $\csepp{\phi(\xb_i) \sim \Ncal(\mub_j, \sigma_j^2 \Ib_r)}{i \in [N_j]}$ $\iid$ for each cluster $j \in [\ol{r}]$.
    \item Draw a latent label generator $\thetab_g \sim \Ncal(\b0_r, \Ib_r)$. For each cluster $j \in [\ol{r}]$, assign the same label $y_i = \mub_j^\top \thetab_g$ for all samples $i \in [N_j]$ within the cluster.
\end{itemize}

\paragraph{Ridge regression.}
We solve the ridge regression problem over the selected coreset $\coreset$ of $n$ samples and tune the regularization hyperparameter $\alpha$ via grid search over $100$ linearly spaced values in $[10^{-2}, 10^2]$ with 2-fold cross-validation.

\subsection{Baselines}\label{apx:baseline_details}
We compare \rmm to the following unsupervised data selection methods for regression:
\begin{enumerate}[label=(\alph*)]
    \item \b{Uniform sampling} (\texttt{Uniform}) selects $n$ samples uniformly at random from the full dataset $\fulldata$.
    
    \item \b{Herding}~\citep{welling2009herding,chen2012super} (\texttt{Herding}) selects data greedily to minimize the distance between the centers of the coreset $\coreset$ and the original dataset $\fulldata$. Notice that although herding aims to reduce the ``bias'' of the coreset center, it fails to control our notion of bias in the low-rank approximation sense. Given the construction of the GMM dataset, herding has more emphasis on variance reduction, as illustrated in \Cref{fig:gmm_var_bias}.
    
    \item \b{K-center greedy}~\citep{sener2017active} (\texttt{K-center}) provides a greedy heuristic for the minimax facility location problem that aims to minimize the maximum distance between any non-coreset sample and the nearest coreset sample.
    
    \item \b{Adaptive sampling}~\citep{deshpande2006adaptive,chen2022randomly} (\texttt{Adaptive}) iteratively samples data based on their squared norms and adaptively updates the distribution by eliminating the spanning subspace of the selected samples from the dataset. It is proved in the recent work \cite{chen2022randomly} that adaptive sampling achieves nearly optimal sample complexity for low-rank approximations, matching that of volume sampling~\citep{deshpande2006matrix,derezinski2021determinantal} (with the best know theoretical guarantee) up to a logarithmic factor. 
    
    In practice, adaptive sampling generally achieves comparable accuracy to volume sampling for low-rank approximations, with considerably better efficiency~\citep{chen2022randomly,dong2023robust}. Due to the prohibitive cost of volume sampling in high dimensions, we choose adaptive sampling in the comparison.
    
    \item \b{Truncated}~\citep{sarlos2006improved,drineas2008relative} and \b{ridge leverage score sampling}~\citep{li2013iterative,cohen2017input,alaoui2015fast} (\texttt{T/R-leverage}) are the extensions of classical leverage score sampling~\citep{chatterjee1986influential} to high dimensions. In particular, leverage score sampling is originally designed for low-dimensional linear regression, while degenerating to uniform sampling in high dimensions. Consider the high-dimensional representations $\phi(\Xb) \in \R^{N \times r}$ ($r > N$) in our setting, for each $i \in [N]$,
    \begin{itemize}
        \item leverage score: $l_i \dfeq \phi(\xb_i)^\top (\phi(\Xb)^\top \phi(\Xb))^\dagger \phi(\xb_i)$,
        \item truncated leverage score: $l_i^{(m)} \dfeq \phi(\xb_i)^\top (\tsvd{\phi(\Xb)}{m}^\top \tsvd{\phi(\Xb)}{m})^\dagger \phi(\xb_i)$ for a given truncation rank $m$, and 
        \item ridge leverage score: $l_i^{(\rho)} \dfeq \phi(\xb_i)^\top (\phi(\Xb)^\top \phi(\Xb) + \rho \Ib_r)^\dagger \phi(\xb_i)$ for a given regularization parameter $\rho > 0$. Larger $\rho$ brings ridge leverage score sampling closer to uniform sampling.
    \end{itemize}
    Therefore, both truncated and ridge leverage score sampling balance the variance-bias tradeoff by adjusting the truncation rank $m$ and regularization parameter $\rho$, respectively.
\end{enumerate}

\paragraph{Baseline details.}
For both \texttt{Herding} and \texttt{K-center}, we adopt the DeepCore implementation~\citep{guo2022deepcore}. Notice that \texttt{Herding} is a deterministic algorithm. 
For \texttt{Adaptive}, we use the implementation from~\citep{dong2023robust}. 
For \texttt{T-leverage}, we use a rank-$m$ truncated SVD to compute the leverage scores, with $m = 4 \ol{r} = 32$ as in \rmm (\ie, providing both methods approximately the same amount of information and compute). 
For \texttt{R-leverage}, we choose $\rho = 10^3$.

\section{Additional Experiments and Details for \Cref{sec:experiment}}\label{apx:real_exp_details}

\begin{table}[!ht]
    \centering
    \caption{Accuracy and F1 score (\%)  of FT over (the last two layers of) ResNet18 on StanfordCars}\label{tab:stanfordcars_resnet18_ft2}
    \begin{adjustbox}{width=.9\textwidth}
        \begin{tabular}{ccccccc}
    \toprule
     & $n$ & 2000 & 2500 & 3000 & 3500 & 4000 \\
    \midrule
    \multirow[c]{2}{*}{Uniform Sampling} & Acc & 29.19 $\pm$ 0.37 & 32.83 $\pm$ 0.19 & 35.69 $\pm$ 0.35 & 38.31 $\pm$ 0.16 & 40.35 $\pm$ 0.26 \\
     & F1 & 26.14 $\pm$ 0.39 & 29.91 $\pm$ 0.16 & 32.80 $\pm$ 0.37 & 35.38 $\pm$ 0.19 & 37.51 $\pm$ 0.23 \\
    \midrule
    \multirow[c]{2}{*}{Herding~\cite{welling2009herding}} & Acc & 29.19 $\pm$ 0.21 & 32.42 $\pm$ 0.16 & 35.83 $\pm$ 0.24 & 38.30 $\pm$ 0.19 & 40.51 $\pm$ 0.19 \\
     & F1 & 25.90 $\pm$ 0.24 & 29.48 $\pm$ 0.23 & 32.89 $\pm$ 0.27 & 35.50 $\pm$ 0.22 & 37.56 $\pm$ 0.21 \\
    \midrule
    \multirow[c]{2}{*}{Contextual Diversity~\cite{agarwal2020contextual}} & Acc & 28.50 $\pm$ 0.34 & 32.66 $\pm$ 0.27 & 35.67 $\pm$ 0.32 & 38.31 $\pm$ 0.15 & 40.53 $\pm$ 0.18 \\
     & F1 & 25.65 $\pm$ 0.40 & 29.79 $\pm$ 0.29 & 32.86 $\pm$ 0.31 & 35.55 $\pm$ 0.14 & 37.81 $\pm$ 0.23 \\
    \midrule
    \multirow[c]{2}{*}{Glister~\cite{killamsetty2021glister}} & Acc & 29.16 $\pm$ 0.26 & 32.91 $\pm$ 0.19 & 36.03 $\pm$ 0.20 & 38.16 $\pm$ 0.12 & 40.47 $\pm$ 0.16 \\
     & F1 & 26.33 $\pm$ 0.19 & 30.05 $\pm$ 0.28 & \textbf{33.26 $\pm$ 0.18} & 35.41 $\pm$ 0.14 & 37.63 $\pm$ 0.17 \\
    \midrule
    \multirow[c]{2}{*}{GraNd~\cite{paul2021deep}} & Acc & 28.59 $\pm$ 0.17 & 32.67 $\pm$ 0.20 & 35.83 $\pm$ 0.16 & 38.58 $\pm$ 0.15 & 40.70 $\pm$ 0.11 \\
     & F1 & 25.66 $\pm$ 0.15 & 29.70 $\pm$ 0.22 & 32.76 $\pm$ 0.16 & 35.72 $\pm$ 0.15 & 37.83 $\pm$ 0.11 \\
    \midrule
    \multirow[c]{2}{*}{Forgetting~\cite{toneva2018empirical}} & Acc & 28.61 $\pm$ 0.31 & 32.48 $\pm$ 0.28 & 35.18 $\pm$ 0.24 & 37.78 $\pm$ 0.22 & 40.24 $\pm$ 0.13 \\
     & F1 & 25.64 $\pm$ 0.25 & 29.58 $\pm$ 0.30 & 32.38 $\pm$ 0.20 & 35.16 $\pm$ 0.18 & 37.41 $\pm$ 0.14 \\
    \midrule
    \multirow[c]{2}{*}{DeepFool~\cite{moosavi2016deepfool}} & Acc & 24.97 $\pm$ 0.20 & 29.02 $\pm$ 0.17 & 32.60 $\pm$ 0.18 & 35.59 $\pm$ 0.24 & 38.20 $\pm$ 0.22 \\
     & F1 & 22.11 $\pm$ 0.11 & 26.08 $\pm$ 0.29 & 29.83 $\pm$ 0.27 & 32.92 $\pm$ 0.33 & 35.47 $\pm$ 0.22 \\
    \midrule
    \multirow[c]{2}{*}{Entropy~\cite{coleman2019selection}} & Acc & 28.87 $\pm$ 0.13 & 32.84 $\pm$ 0.20 & 35.64 $\pm$ 0.20 & 37.96 $\pm$ 0.11 & 40.29 $\pm$ 0.27 \\
     & F1 & 25.95 $\pm$ 0.17 & 30.03 $\pm$ 0.17 & 32.85 $\pm$ 0.23 & 35.19 $\pm$ 0.12 & 37.33 $\pm$ 0.34 \\
    \midrule
    \multirow[c]{2}{*}{Margin~\cite{coleman2019selection}} & Acc & 29.18 $\pm$ 0.12 & 32.73 $\pm$ 0.15 & 35.67 $\pm$ 0.30 & 38.27 $\pm$ 0.20 & 40.58 $\pm$ 0.06 \\
     & F1 & 26.15 $\pm$ 0.12 & 29.66 $\pm$ 0.05 & 32.86 $\pm$ 0.30 & 35.61 $\pm$ 0.17 & 37.77 $\pm$ 0.07 \\
    \midrule
    \multirow[c]{2}{*}{Least Confidence~\cite{coleman2019selection}} & Acc & 29.05 $\pm$ 0.07 & 32.88 $\pm$ 0.13 & 35.66 $\pm$ 0.18 & 38.25 $\pm$ 0.20 & 39.91 $\pm$ 0.09 \\
     & F1 & 26.18 $\pm$ 0.04 & 30.03 $\pm$ 0.14 & 32.79 $\pm$ 0.15 & 35.42 $\pm$ 0.16 & 37.14 $\pm$ 0.12 \\
    \midrule
    \multirow[c]{2}{*}{SkMM-FT} & Acc & \textbf{29.44 $\pm$ 0.09} & \textbf{33.48 $\pm$ 0.04} & \textbf{36.11 $\pm$ 0.12} & \textbf{39.18 $\pm$ 0.03} & \textbf{41.77 $\pm$ 0.07} \\
     & F1 & \textbf{26.71 $\pm$ 0.10} & \textbf{30.75 $\pm$ 0.05} & 33.24 $\pm$ 0.05 & \textbf{36.38 $\pm$ 0.05} & \textbf{39.07 $\pm$ 0.10} \\
    \midrule
\end{tabular}
    \end{adjustbox}
\end{table}

\begin{table}[!ht]
    \centering
    \caption{Classification accuracy (\%) of LP over CLIP on CIFAR-10.}
    \label{tab:clip_lp_cifar10}
    \begin{adjustbox}{width=.9\textwidth}
        \begin{tabular}{c|cccc}
            \toprule
            $n$ & 1000 & 2000 & 3000 & 4000 \\
            \midrule
            Uniform Sampling & $91.68 \pm 0.45$ & $92.22 \pm 0.25$ & $92.60 \pm 0.14$ & $92.79 \pm 0.12$ \\
            Herding~\cite{welling2009herding} & $91.68 \pm 0.27$ & $92.20 \pm 0.17$ & $92.72 \pm 0.51$ & $93.00 \pm 0.45$ \\
            Contextual Diversity~\cite{agarwal2020contextual} & $91.98 \pm 0.11$ & $92.53 \pm 0.05$ & $92.81 \pm 0.25$ & $92.99 \pm 0.24$ \\
            Glister~\cite{killamsetty2021glister} & $91.09 \pm 0.08$ & $91.60 \pm 0.23$ & $91.83 \pm 0.08$ & $91.87 \pm 0.09$ \\
            GraNd~\cite{paul2021deep} & $88.48 \pm 0.90$ & $88.89 \pm 0.53$ & $89.04 \pm 0.24$ & $89.54 \pm 0.46$ \\
            Forgetting~\cite{toneva2018empirical} & $91.51 \pm 0.01$ & $92.15 \pm 0.02$ & $92.61 \pm 0.01$ & $92.81 \pm 0.01$ \\
            DeepFool~\cite{moosavi2016deepfool} & $91.68 \pm 0.32$ & $91.78 \pm 0.79$ & $91.93 \pm 0.89$ & $92.18 \pm 0.70$ \\
            Entropy~\cite{coleman2019selection} & $88.66 \pm 1.01$ & $89.96 \pm 0.42$ & $90.27 \pm 1.11$ & $91.06 \pm 0.75$ \\
            Margin~\cite{coleman2019selection} & $91.22 \pm 0.64$ & $91.60 \pm 0.92$ & $91.94 \pm 0.83$ & $92.09 \pm 0.81$ \\
            Least Confidence~\cite{coleman2019selection} & $89.38 \pm 1.73$ & $90.49 \pm 1.47$ & $90.83 \pm 1.43$ & $91.26 \pm 1.30$ \\
            \midrule
            SkMM-LP & \b{92.96 $\pm$ 0.07} & \b{93.38 $\pm$ 0.01} & \b{93.67 $\pm$ 0.01} & \b{93.78 $\pm$ 0.04} \\
            \bottomrule
        \end{tabular}
    \end{adjustbox}
\end{table}

\begin{table}[!ht]
    \centering
    \caption{Accuracy of FT over ResNet18 on CIFAR-10.}\label{tab:cifar10_resnet18_ft2}
    \begin{adjustbox}{width=.9\textwidth}
    \begin{tabular}{c|cccccc}
    \toprule
    $n$ & 2500 & 3000 & 3500 & 4000 \\
    \midrule
    Uniform Sampling & 77.55 $\pm$ 0.16 & 78.04 $\pm$ 0.18 & 78.46 $\pm$ 0.09 & 78.83 $\pm$ 0.15 \\
    Herding~\cite{welling2009herding} & 77.58 $\pm$ 0.17 & 77.74 $\pm$ 0.19 & 78.37 $\pm$ 0.14 & 78.39 $\pm$ 0.11 \\
    Contextual Diversity~\cite{agarwal2020contextual} & 77.24 $\pm$ 0.08 & 77.65 $\pm$ 0.10 & 78.17 $\pm$ 0.07 & 78.22 $\pm$ 0.11 \\
    Glister~\cite{killamsetty2021glister} & 77.46 $\pm$ 0.13 & 77.95 $\pm$ 0.15 & 78.19 $\pm$ 0.10 & 78.54 $\pm$ 0.08 \\
    GraNd~\cite{paul2021deep} & 77.22 $\pm$ 0.10 & 77.74 $\pm$ 0.07 & 78.31 $\pm$ 0.11 & 78.49 $\pm$ 0.10 \\
    Forgetting~\cite{toneva2018empirical} & 77.32 $\pm$ 0.20 & 77.87 $\pm$ 0.21 & 78.05 $\pm$ 0.04 & 78.53 $\pm$ 0.09 \\
    DeepFool~\cite{moosavi2016deepfool} & 77.25 $\pm$ 0.09 & 77.70 $\pm$ 0.21 & 78.04 $\pm$ 0.12 & 78.46 $\pm$ 0.13 \\
    Entropy~\cite{coleman2019selection} & 77.55 $\pm$ 0.21 & 77.74 $\pm$ 0.12 & 78.23 $\pm$ 0.20 & 78.41 $\pm$ 0.08 \\
    Margin~\cite{coleman2019selection} & 77.24 $\pm$ 0.15 & 77.81 $\pm$ 0.23 & 78.32 $\pm$ 0.17 & 78.52 $\pm$ 0.15 \\
    LeastConfidence~\cite{coleman2019selection} & 77.46 $\pm$ 0.23 & 77.93 $\pm$ 0.14 & 78.21 $\pm$ 0.17 & 78.60 $\pm$ 0.10 \\
    \midrule
    SkMM-FT & \textbf{77.75 $\pm$ 0.08} & \textbf{78.12 $\pm$ 0.04} & \textbf{78.66 $\pm$ 0.06} & \textbf{79.11 $\pm$ 0.02} \\
    \bottomrule
\end{tabular}

    \end{adjustbox}
\end{table}

\paragraph{Implementation details.}
For CLIP standard transform, we transform the image size to 224, with normalization mean (0.48145466, 0.4578275, 0.40821073) and std (0.26862954, 0.26130258, 0.27577711).

\paragraph*{Parameter Count.}
We show the parameter sizes for the two-layer fine-tuning experiments in \Cref{table:parameter_sizes}. The representation dimension $d$ is $512$ for ResNet18, the number of classes $K$ is $10$ for CIFAR-10 and $196$ for Stanford Cars. The last layer parameter size is $5130$ for CIFAR-10 and $100548$ for Stanford Cars. The second but last layer parameter size is $2364426$ for CIFAR-10. When we do the last two layers fine-tuning, the total parameter size is $8398858$ for CIFAR-10 and $2459844$ for Stanford Cars.

\paragraph*{StanfordCars Baselines.}
For DeepCore baselines, there are some methods that require warmup training before the finetuning (e.g. Uncertainty-Entropy), we use one-layer training and two-layer training (freezing other layers) in the warmup training for linear probing selection and two-layer finetuning selection. The warmup training is done with Adam optimizer with learning rate 0.01 for 10 epochs.

\paragraph*{Finetuning Details}
We finetune the model for 50 epochs for both linear probing and finetuning using Adam optimizer with a learning rate 0.01.


\end{document}